\documentclass[12pt]{article}

\usepackage[utf8]{inputenc}
% \documentclass[12pt]{article} % For LaTeX2e
% \usepackage{iclr2026_conference,times}
% \usepackage[utf8]{inputenc}
% \DeclareUnicodeCharacter{FEFF}{}
% Optional math commands from https://github.com/goodfeli/dlbook_notation.
%%%%% NEW MATH DEFINITIONS %%%%%

\usepackage{amsmath,amsfonts,bm}

% Mark sections of captions for referring to divisions of figures

% Highlight a newly defined term

% Figure reference, lower-case.

% Figure reference, capital. For start of sentence

% Section reference, lower-case.

% Section reference, capital.
\def\Secref#1{Section~\ref{#1}}
% Reference to two sections.

% Reference to three sections.

% Reference to an equation, lower-case.
\def\eqref#1{equation~\ref{#1}}
% Reference to an equation, upper case

% A raw reference to an equation---avoid using if possible

% Reference to a chapter, lower-case.

% Reference to an equation, upper case.

% Reference to a range of chapters

% Reference to an algorithm, lower-case.
% \def\algref#1{algorithm~\ref{#1}}
% % Reference to an algorithm, upper case.
% \def\Algref#1{Algorithm~\ref{#1}}
% \def\twoalgref#1#2{algorithms \ref{#1} and \ref{#2}}
% \def\Twoalgref#1#2{Algorithms \ref{#1} and \ref{#2}}
% Reference to a part, lower case

% Reference to a part, upper case

\def\1{\bm{1}}

% Random variables

% rm is already a command, just don't name any random variables m

% Random vectors

% Elements of random vectors

% Random matrices

% Elements of random matrices

% Vectors

% Elements of vectors

% Matrix

% Tensor
\DeclareMathAlphabet{\mathsfit}{\encodingdefault}{\sfdefault}{m}{sl}
\SetMathAlphabet{\mathsfit}{bold}{\encodingdefault}{\sfdefault}{bx}{n}

% Graph

% Sets

% Don't use a set called E, because this would be the same as our symbol
% for expectation.

% Entries of a matrix

% entries of a tensor
% Same font as tensor, without \bm wrapper

% The true underlying data generating distribution

% The empirical distribution defined by the training set

% The model distribution

% Stochastic autoencoder distributions

 % Laplace distribution

\newcommand{\R}{\mathbb{R}}

% Wolfram Mathworld says $L^2$ is for function spaces and $\ell^2$ is for vectors
% But then they seem to use $L^2$ for vectors throughout the site, and so does
% wikipedia.

 % See usage in notation.tex. Chosen to match Daphne's book.

\DeclareMathOperator*{\argmin}{arg\,min}

\usepackage{url}

\usepackage{hyperref} % hyperlink
\usepackage{amsmath,amsthm,amsfonts,amssymb,mathrsfs,natbib}
\usepackage{anysize}
\usepackage{enumerate,enumitem}
\usepackage{bbm} 
\usepackage{color}
\usepackage{tikz} 
\usepackage{tabularx}
\usepackage{hhline}
\usepackage{algorithm}
\usepackage{algorithmic}
\usepackage{array}      % for better column alignment
\usepackage{booktabs}   % for nice table rules
\usepackage{subcaption}
\usepackage{caption}

\usepackage{graphicx,wrapfig}
\usepackage[normalem]{ulem} %%%to get strikethrough in todos
\usepackage[skins]{tcolorbox}

%----margin
\topmargin 0.1cm \oddsidemargin 0.01cm
\textwidth 15.9cm

% %\flushbottom
% \textheight 22.2cm
%------------------------------------

% %%--------------------To create a content for Appendix
% \usepackage{tocloft}
% % 1) Define a new “List of Appendices” (writes to .app file)
% \newcommand{\listappendicesname}{Contents of Appendix}
% \newlistof{appendix}{app}{\listappendicesname}

% % 2) Two wrapper commands: one for sections, one for subsections
% \newcommand{\section}[1]{%
%   \section{#1}%
%   \addcontentsline{app}{section}{\protect\numberline{\thesection}#1}%
% }
% \newcommand{\subsection}[1]{%
%   \subsection{#1}%
%   \addcontentsline{app}{subsection}{\protect\numberline{\thesubsection}#1}%
% }
% %%%%%%%%%%%%%%------------------------------------------
%% Here is the file macros.tex
%% It includes the commands in this thesis

%%============== Math operators and symbols ===============
% \DeclareMathOperator*\prob{{\bf P}rob} % probability in product space
% \DeclareMathOperator*\argmin{\arg\min} % target of min problem
% \DeclareMathOperator*\arginf{\arg\inf} % target of inf problem
  % projection operator
% \DeclareMathOperator\sgn{sgn} % sign function
% \DeclareMathOperator\Span{span} % linear span
% \DeclareMathOperator\bE{\mathbb E} % expectation
% \DeclareMathOperator\bP{\mathbf P} % probabibilty
% \DeclareMathOperator\br{\mathbf r} % regularization error
% \DeclareMathOperator\bz{\mathbf{z}}
% \DeclareMathOperator{\Col}{Col}
% \DeclareMathOperator{\agmi}{\mathop{\arg\min}}
 %support

 %support

\def\argmin{\operatorname{\arg\min}}

\newcommand{\be}{\begin{equation}}
\newcommand{\ee}{\end{equation}}
\newcommand{\br}{\begin{rem}}
\newcommand{\er}{\end{rem}}
\newcommand{\bq}{\begin{qu}}
\newcommand{\eq}{\end{qu}}
\newcommand{\bn}{\begin{enumerate}}
\newcommand{\en}{\end{enumerate}}
\newcommand{\bi}{\begin{itemize}}
\newcommand{\ei}{\end{itemize}}
\newcommand{\beas}{\begin{eqnarray*}}
\newcommand{\eeas}{\end{eqnarray*}}
\newcommand{\bea}{\begin{eqnarray}}
\newcommand{\eea}{\end{eqnarray}}

%============ Special notations ===============

% \newcommand{\E}{\mathbb{E}}

% \newcommand{\R}{\mathbb R}
\newcommand{\N}{\mathbb N}

\def\CA{\mathcal A} % function set C3
\def\CB{\mathcal B} % function set C3
\def\CH{\mathcal H} %spherical harmonics
\def\CC{\mathcal C} %classifier
 %approximation error
 %generalization error
\def \CF{\mathcal F}
\def\CG{\mathcal G} %function set for covering number
\def\CL{\mathcal L}
 %number of sample
\def\CN{\mathcal N} %number of weights
 %order of convergence 
\def\CP{\mathcal P} %number of weights
\def\CR{\mathcal R} %classification error
\def\CU{\mathcal{U}} %a function set for covering number
 %approximation error
 %sample error

\def\EE{\mathbb{E}} % mean value
\def\RR{\mathbb{R}} %real number
\def\NN{\mathbb{N}} %natural number
 %integer number
  %kenel
 %sample

\def\hflhbl{\hat{f}_{\lambda}^{\hat{B}_{d_*}}}

      % fraction
    %brackets <
    %brackets >
\def \md {\mathrm{d}} %integral d
 %covering number

%%%%%%%%%special%%%%%%%%%
 %unit sphere of R^d

 %big left
 %big right

%%%%%%%%%%colors%%%%%%%%%

% \def\red{\color{red}}
% \def\green{\color{green}}
% \def\blue{\color{blue}}

%%=========Greek Letters =================

%%========= To change the setting for array environment==========

\arraycolsep=2pt

\newtheorem{thm}{Theorem}
\newtheorem{lem}{Lemma}

\newtheorem{rem}{Remark}

\newtheorem{ass}{Assumption}
\newtheorem{qu}{Question}

\newcommand{\thmref}[1]{Theorem~\ref{#1}}
% \newcommand{\secref}[1]{Section~\ref{#1}}

% %%%%%%%%%%define color%%%%%%%%%%%
% \def\red{\color{red}}
% \def\green{\color{green}}
% \def\blue{\color{blue}}

\newcommand{\matK}{\hat{K}}

\begin{document}

\title{Learning Multi-Index Models with Hyper-Kernel Ridge Regression}

% \author{Shuo Huang$^1, $
%     Hippolyte Labarri\`ere$^2, $
%      Ernesto De Vito$^3,$ \\
%      Tomaso Poggio$^4,$
%      Lorenzo Rosasco$^{1,2}$
% \thanks{
% $^1$ Istituto Italiano di Tecnologia, Genoa, Italy\\
% $^2$ MaLGa Center - DIBRIS - Universit\`a di Genova, Genoa, Italy\\
% $^3$  MaLGa Center - DIMA - Universit\`a di Genova, Genoa, Italy\\
% $^4$ CBMM - Massachusets Institute of Technology, Cambridge, MA, USA\\
% }
% }

\author{Shuo Huang \thanks{Istituto Italiano di Tecnologia, Genoa, Italy. Email: \href{mailto:shuo.huang@iit.it}{shuo.huang@iit.it}}$\quad$
Hippolyte Labarri\`ere \thanks{MaLGa - DIBRIS - Universit\`a di Genova, Genoa, Italy. 
Email: \href{mailto:hippolyte.labarriere@edu.unige.it}{hippolyte.labarriere@edu.unige.it}
}$\quad$
Ernesto De Vito \thanks{MaLGa - DIMA - Universit\`a di Genova, Genoa, Italy.
Email:\href{mailto:ernesto.devito@unige.it}{ernesto.devito@unige.it}
}\\
Tomaso Poggio \thanks{CBMM - Massachusets Institute of Technology, Cambridge, MA, USA.
Email:\href{mailto:tp@csail.mit.edu}{tp@csail.mit.edu}
} $\quad$
Lorenzo Rosasco \thanks{MaLGa – DIBRIS – Università di Genova, Genoa, Italy; Istituto Italiano di Tecnologia, Genoa, Italy. 
Email: \href{mailto:lrosasco@mit.edu}{lrosasco@mit.edu} 
}
}
\date{}

\maketitle

\begin{abstract}

Deep neural networks excel in high-dimensional problems, outperforming models such as kernel methods, which suffer from the curse of dimensionality. However, the theoretical foundations of this success remain poorly understood. We follow the idea that the compositional structure of the learning task is the key factor determining when deep networks outperform other approaches.
Taking a step towards formalizing this idea, we consider a simple compositional model, namely
the multi-index model (MIM). In this context, we introduce and study hyper-kernel ridge regression (HKRR),
an approach blending neural networks and kernel methods. Our main contribution is a sample complexity result demonstrating that HKRR can adaptively learn MIM, overcoming the curse of dimensionality. Further, we exploit the kernel nature of the estimator to develop ad hoc optimization approaches. Indeed, we contrast alternating minimization and alternating gradient methods both theoretically and numerically. These numerical results complement and reinforce our theoretical findings.

\end{abstract}

\section{Introduction}

% {\color{red}Multi-index models have long been studied in statistics \cite{li1991sliced}, but, similarly to \cite{damian2024computational, bietti2025learning},  our interest come from recent advances in machine learning, particularly deep learning. As a brief detour, we introduce some broader context that, while not central to our technical analysis, motivates our perspective.} 
The search for principles underlying the success of deep networks in learning from high-dimensional problems has been the subject of much interest. At least two ideas have a long history. The first is \textbf{invariance}. Deep architectures emerge from the need to derive models insensitive to transformations that are uninformative for the task at hand. Computational primitives such as filtering and pooling at different scales can be understood as implementing these ideas. This perspective traces back to early work in computer vision \citep{fukushima1980neocognitron, lecun1989backpropagation}, itself motivated by ideas in neuroscience \citep{hubel1962receptive, riesenhuber1999hierarchical}, and we refer to \cite{serre2007feedforward, mallat2012group} for examples of more recent contributions in this line of work. A second idea is \textbf{compositionality}. High-dimensional data often have a hierarchical structure where parts at different scales interact. Language provides a natural example, with its structure in letters, syllables, words, and sentences. Deep architectures can then be designed to exploit this structure. These ideas, which go back at least to \cite{bienenstock1998compositionality, yuille2006vision}, provide another perspective on algorithmic developments such as convolutions \citep{lecun2002gradient} and attention mechanisms \citep{vaswani2017attention}. Ultimately, the relevance of either one of these principles relies on their ability to reduce the need for data, thus translating into more successful learning schemes. The study of sample complexity in statistical learning theory provides a framework within which this intuition can be formalized and tested \citep{vapnik2013nature}.

Classic sample complexity results highlight the role of data dimension and function smoothness. In the absence of any assumption, no sample complexity results can be derived \citep{vapnik2013nature, devroye2013probabilistic}. Assuming the task of interest is described by Lipschitz functions leads to sample complexity scaling exponentially with the dimension of the input data---the so-called curse of dimensionality \citep{donoho2000high}. Such dependence can be alleviated if further smoothness is assumed, yielding sample complexity that depends exponentially on the ratio of dimension to smoothness \citep{stone1982optimal}. Both these classes of problems (Lipschitz and smooth Sobolev functions) can be learned by a variety of learning approaches, including kernel methods and neural networks, hence not explaining the better performance of the latter on high-dimensional problems. Starting from the seminal work in \cite{barron2002universal}, this observation has led to investigating how to characterize the class of problems where deep networks excel; see, e.g., \cite{poggio2017and} for a recent account. 

Circling back to the initial discussion, the role of invariance in sample complexity has been discussed in \cite{poggio2017and} and analyzed, for example, in \cite{mei2021learning} in the context of group transformations. However, invariance alone seems insufficient to account for the striking empirical performance observed in practice. A functional viewpoint on compositionality was proposed in \cite{mhaskar2017and} and further developed in \cite{dahmen2025compositional} from the perspective of approximation theory. Sample complexity bounds were derived in \cite{schmidt2020nonparametric,kohler2021rate}, laying the groundwork for a theoretical understanding of compositional structure. It is within this line of work that our contribution is situated.  Our study is further motivated by the work of \cite{radhakrishnan2022feature}, which points to a simpler compositional structure and proposes a kernel-based approach to learn it, called Recursive Feature Machine (RFM), drawing on ideas from sufficient dimensionality reduction \citep{fukumizu2009kernel}. As we discuss next, we propose an alternative approach within the same context.

The approach we study blends ideas from kernel methods and neural networks. It draws inspiration from \cite{poggir90}, where an extension of radial basis function networks (RBF), called hyper-RBF, was proposed. Instead of a single kernel and its RKHS, we consider a family of kernels and their corresponding RKHSs. Each kernel is obtained by composing a fixed common kernel with a linear transformation that maps inputs to a lower-dimensional space. A solution is then obtained through regularized empirical risk minimization with least squares. For any fixed transformation, the approach reduces to kernel ridge regression (KRR). But now, rather than being fixed, the best transformation is learned during training. The resulting method is called hyper-kernel ridge regression (HKRR), and reduces to hyper-RBF when radial kernels are used. It can be seen as a special form of neural network or as a kernel method augmented with a built-in linear representation learning step. In particular,  classic approximation schemes for kernel methods,  such as Nystr\"om approximations \citep{rudi2015less}, can be exploited.

HKRR provides a natural framework for learning multi-index models (MIMs), given their structure as the composition of a linear transformation and a smooth nonlinear function. Our main contribution is the characterization of the sample complexity of HKRR for learning MIMs. We show that in this case, the dependence is exponential in the ratio between the true transformation dimension and the smoothness, and only polynomial in the input data dimension. We further show that the transformation dimension does not need to be known a priori but can be tuned by hold-out cross-validation, preserving the same sample complexity up to logarithmic factors. We complete our statistical analysis by showing that the HKRR estimator can be compressed using Nystr\"om subsampling  \citep{rudi2015less}, without degrading the sample complexity. The proofs largely draw on techniques developed for kernel methods, extended to handle the compositional nature of hyper-kernels. A second contribution is to investigate the solution of the HKRR optimization problem both theoretically and numerically. In particular, we contrast two different approaches. The first leverages the connection to KRR and alternates closed-form updates for the estimator (given a transformation) with transformation updates via gradient descent (VarPro), in the spirit of variable projection methods \citep{golub1973}. The second strategy alternates gradient descent (AGD) steps, akin to the PALM algorithm in \cite{bolte2014proximal}. The HKRR optimization problem is non-convex, but both strategies can be shown to converge to a critical point. Numerically, however, the AGD approach appears more stable and ultimately outperforms VarPro. We attribute this behavior to the nonlocal nature of the latter, as we illustrate numerically. Overall,  our results show that HKRR can be viewed as a useful augmentation of kernel methods, while providing a sound algorithmic approach to study simple compositionality and representation learning models.

% Some notation and preliminary material are presented in \Secref{sec:pre}. 
% In \Secref{sec:mim}, we introduce the HKRR problem and provide two algorithms, 
% VarPro and AGD, which employ Nystr\"om subsampling for computing HKRR.
% The sample complexity of HKRR, including Nystr\"om approximation and hold-out cross-validation 
% for learning MIMs, together with the properties and convergence analysis of both algorithms, 
% are given in \Secref{sec:thms}. 
% Finally, experimental results and conclusions are presented in 
% \Secref{sec:experiment} and \Secref{sec:conclusion}, respectively.

Some notation and background are given in \Secref{sec:pre}.
\Secref{sec:mim} introduces the HKRR problem and two algorithms, VarPro and AGD.
\Secref{sec:thms} presents the sample complexity of HKRR and the convergence analysis of both algorithms.
Experimental results and conclusions are reported in \Secref{sec:experiment} and \Secref{sec:conclusion}.

\section{Background}\label{sec:pre}

In this section, we collect some basic definitions and notation. 
\paragraph{Statistical learning and sample complexity.} Let $\rho$ be a joint probability distribution on $X \times Y \subset \R^D \times \R$.  The learning problem with the square loss consists in minimizing, over all measurable functions, 
the expected risk
\begin{equation*}
    \CR(f)=\EE[(f(x)-y)^2]
    %= \int_{X \times Y} (f(x)-y)^2 \md \rho(x,y)
\end{equation*}
given  $(x_i,y_i)_{i=1}^m\overset{\mathrm{i.i.d.}}{\sim}\rho^m$.  
The quality of a learning solution $\hat f$ is measured by the 
 {excess risk} $\CR(\hat{f})-\CR(f_*)$, where  $f_*$ denotes a  risk minimizer. 
 For the square loss, a minimizer is the so-called regression function defined as $f_*(x)=
 \EE[y~|~x]$ almost surely. 
The sample complexity of a learning algorithm is the number of samples required by a corresponding empirical solution to achieve a prescribed accuracy with a prescribed confidence level. More precisely, given $\epsilon > 0$ and $\delta \in [0,1]$, we say that a procedure outputting solutions $\hat{f}$ given $m$ points has sample complexity $m(\epsilon, \delta)$, if for all $m \ge m(\epsilon, \delta)$,
\(
\CR(\hat{f}) - \CR(f_*) \le \epsilon
\)
with probability at least $1 - \delta$. Here, $\epsilon$ and $1 - \delta$ are the accuracy level and the confidence level, respectively. The function $m(\epsilon, \delta)$ can typically be inverted to express the results in terms of error bounds. Given ${  m }$ and $\delta$, an error bound is a function $\epsilon( {  m}, \delta)$ such that
\(
\CR(\hat{f}) - \CR(f_*) \le \epsilon({  {m}}, \delta),
\)
with probability at least $1 - \delta$. In the following, we will take this latter point of view and review classical algorithms relevant to our study.

\paragraph{ERM and kernel methods.} Fixed a hypothesis space $\CH$ of measurable functions $f: X \to \R$,  the {empirical risk minimization} (ERM) over  $\CH$ is given by 
 \(\hat{f} = \mathop{\argmin}_{f\in \CH}\widehat{\CR}(f)=\mathop{\argmin}_{f\in \CH}
\frac1m\sum_{i=1}^m\bigl(f(x_i)-y_i\bigr)^2. \)
In particular, kernel methods consider 
$\CH$ to be an RKHS, that is a Hilbert space of functions with a reproducing kernel $k:X\times X\to\R$ satisfying 
$ k_x=k(\cdot,x)\in\CH,$
and \(
f(x)=\langle f,k_x\rangle_\CH ,\)
% \begin{equation}\label{eq:2}
% f(x)=\langle f,k_x\rangle_\CH ,\end{equation} 
for all $x\in X$ and $f\in \CH$ 
\citep{aronszajn1950theory}. 
Further,  KRR corresponds to minimizing the regularized empirical risk 
\begin{equation}\label{equ:KRR1}
\hat{f}_\lambda =  \underset{f\in \CH}{\argmin} \, \widehat{\CR}_\lambda(f), \quad \widehat{\CR}_\lambda(f) = \frac{1}{m} \sum_{i=1}^m (f(x_i)-y_i)^2 +\lambda \|f\|^2_{\CH}, \quad \lambda>0.
\end{equation} 
By the representer theorem \citep{scholkopf2001generalized}, 
\(
\hat f_\lambda=\sum_{i=1}^m\alpha_i\,k(x_i,\cdot), 
\)
so that  KRR reduces to a finite-dimensional problem
\begin{equation}\label{equ:KRR2}
    \alpha^* := \underset{\alpha\in\R^m}{\argmin}\frac{1}{m}\|\matK\alpha-\mathbf{y}\|^2+\lambda\alpha^T\matK\alpha,
\end{equation}
where $\mathbf{y}=(y_i)_{i=1}^m \in \R^m$ and  $\matK\in\R^{m\times m}$ with $(\hat{K})_{i,j}= k(x_i,x_j)$ is the empirical kernel matrix. More efficient computations are possible 
using {Nystr\"om approximation}, considering $\tilde m < m$ inducing points
\( (\tilde x_i)_{i=1}^{\tilde m} \subset (x_i)_{i=1}^m\)
and  a subspace of functions of the form 
\(f(\cdot)= \sum_{i=1}^{\tilde m} \tilde\alpha_i\,k(\tilde x_i,\cdot).\)
The Nystr\"om  KRR is then given by 
\begin{equation}\label{equ:nys}
    \tilde\alpha^* := \underset{\tilde\alpha\in\R^{\tilde m}}{\argmin}\frac{1}{m}\|\matK_{m\tilde m}\tilde\alpha-\mathbf{y}\|^2+\lambda\tilde\alpha^T\matK_{\tilde m\tilde m}\tilde\alpha,
\end{equation}
where $(\matK_{m\tilde m})_{i,j}=k(x_i,\tilde x_j)$ and $(\matK_{\tilde m\tilde m})_{i,j}=k(\tilde x_i,\tilde x_j)$.

It is useful to contrast kernel methods with classic one-hidden-layer neural networks.
 \br[Neural and RBF networks]
One-hidden-layer neural networks consider functions of the form 
$
f(x)= \sum_{j=1}^u c_i \sigma(w_j^\top x+b_j), 
$
where $\sigma:\R\to \R$ is a nonlinearity, e.g. the ReLU $\sigma(z) = \max\{0,z\}$, and $c_j, b_j\in \R
, w_j\in \R^D$, $j=1, \dots, u$  are parameters to be determined. Each term $\sigma(w_j^\top x+b_j)$ is called a neuron, $u$ is the number of neurons/units,  and $(w_j, b_j)_j$ are called hidden weights. Radial basis function (RBF) networks consider functions of the form 
\(
f(x)= \sum_{j=1}^u c_i \phi(\|w_j- x\|), 
\)
where $\phi:\R\to \R$ is a nonlinearity, e.g. the Gaussian $\phi(z) =  e^{-z}$, and again $c_j\in \R, w_j\in \R^D$, $j=1, \dots, u$  are  parameters to be determined.
 \er

\section{Hyper-kernel ridge regression}\label{sec:mim}

In this section, we describe HKRR, an approach blending ideas from kernel methods and neural networks. HKRR is based on regularized ERM like KRR, but considers a class of functions defined by a family of parameterized kernels, rather than one fixed kernel. Similar to neural networks, the solution is a linear combination of nonlinearities with parameters to be determined during training. The corresponding optimization problem is nonconvex, but its structure suggests ad-hoc gradient approaches.

%Classical KRR \eqref{equ:KRR1} consider  a fixed RKHS with a static feature space, which limits its capacity for hierarchical representation learning compared to neural networks \citep{ba2022high,dandi2023two,bengio2013representation}. In contrast, the HKRR jointly optimizes both a linear transformation of input data and the kernel function, much like a shallow neural network. In this section, we introduce the HKRR framework over hyper-kernel RKHSs and demonstrate its natural suitability for learning MIMs.

\subsection{Hyper-kernel ridge regression}

Let \( k:\R^d \times \R^d \to \R \) be a fixed ``mother" reproducing kernel with associated RKHS  \( \CH_k \).
Define
$
\CB = \{B \in \R^{d \times D} : \|B\|_\infty \le 1\},
$
where $d < D$ and
\(
\|B\|_\infty := \sup_{\|x\| \leq 1} \|Bx\|
\). A hyper-kernel \( k_B: \R^D \times \R^D \to \R \) is defined by the composition of the kernel \( k \) with a linear map \( B \in \CB \), namely,
\[
k_B(x, x') = k\bigl(Bx, Bx'\bigr), \quad x, x' \in \R^D.
\]
The RKHS with reproducing kernel \( k_B \) is denoted by \( \CH_B \) for each \( B \in \CB \). Note that our definition of the hyper-RKHS differs from that in \cite{liu2021generalization}.
The intuition is that each map \( B \) provides a low-dimensional linear representation of the data, while the mother kernel \( k \) defines a space of nonlinear functions on this reduced space. Considering hyper-kernels allows us to learn an estimator that composes an optimal linear representation and a corresponding nonlinear function.  HKRR achieves this by solving the regularized ERM problem
\begin{equation}\label{equ:HKRR}
\begin{split}
     &\min_{B \in \CB} \min_{f \in \CH_B} \widehat{\CR}_\lambda(f), 
     \quad 
     \quad 
     \widehat{\CR}_\lambda(f) =\frac{1}{m} \sum_{i=1}^m (f(x_i)-y_i)^2 + \lambda \|f\|_{\CH_B}^2 .
\end{split}
\end{equation}
For any given $B \in \CB$, the inner optimization over  $\CH_B$ is a standard KRR problem (\eqref{equ:KRR1}) with kernel $k_B$.  As discussed in Section~\ref{sec:pre}, the problem is strongly convex and admits a unique minimizer
$\hat{f}_\lambda^B =   \mathop{\argmin}_{f \in \CH_B} \widehat{\CR}_\lambda(f),$ 
which can be computed using the representer theorem (\eqref{equ:KRR2}), 
and more efficiently via a Nystr\"om approximation~(\eqref{equ:nys}). The outer optimization problem over $\CB$ is non-convex and corresponds to 
$$
\underset{B \in \CB} {\argmin}\,\hat{H}_\lambda(B), \quad \quad \hat{H}_\lambda(B):=  \min_{f \in \CH_B} \widehat{\CR}_\lambda(f)= 
\frac{1}{m}\sum_{i=1}^m\left(\hat{f}_\lambda^B(x_i) -y_i\right)^2 +\lambda \|\hat{f}_\lambda^B\|_{\CH_B}^2.
$$
If  $\hat{B}_d$  is a solution of the above problem, then the HKRR estimator is 
\(\hat{f}_{\lambda}^{\hat{B}_d}\) and relies on the choice  of the mother kernel $
k$, the regularization parameter $\lambda$, and the dimension $d$ of the  linear maps in $\CB$. 
In \Secref{sec:experiment}, we will investigate how these choices influence the corresponding learning performances.
We first discuss the practical computation of the HKRR estimator.

\subsection{Computing an HKRR solution}\label{sec:algo}
As already mentioned, the representer theorem~(\ref{equ:KRR2}) allows us to reduce problem~(\ref{equ:HKRR}) to a finite-dimensional optimization. 
In practice, we adopt the Nystr\"om approximation as \eqref{equ:nys}, 
where $(\tilde{x}_i)_{i=1}^{\tilde m}$ are sampled uniformly without replacement 
from the training set. This procedure is referred to as the plain Nystr\"om method~\citep{rudi2015less}, 
and leads to
\begin{equation}\label{equ:opt_pb}
    \min_{B \in \CB} \min_{\alpha \in \mathbb{R}^{\tilde m}} 
    \hat{\CL}(B,\alpha), 
    \quad 
    \hat{\CL}(B,\alpha) = \frac{1}{m}\big\|\matK^B_{m\tilde m}\alpha - \mathbf{y}\big\|^2 
    + \lambda\, \alpha^\top \matK^B_{\tilde m\tilde m} \alpha,
\end{equation}
where $\matK^B_{m\tilde m}\in\mathbb{R}^{m\times \tilde m}$ and 
$\matK^B_{\tilde m\tilde m}\in\mathbb{R}^{\tilde m\times \tilde m}$ are defined by
\(
(\matK^B_{m\tilde m})_{i,j} = k(Bx_i,B\tilde{x}_j),
(\matK^B_{\tilde m\tilde m})_{i,j} = k(B\tilde{x}_i,B\tilde{x}_j).
\)
Let $\hat f_\lambda^{\hat B_{d,\tilde m}}$ denote the solution of \eqref{equ:opt_pb}, 
with the index $\tilde m$ highlighting the use of the Nystr\"om approximation.

We next discuss some aspects of  HKRR optimization and refer to  Appendix \ref{sec:opt_details}, and in particular to Lemma~\ref{lem:Lalpha}, for details. We begin by noting that, for each $B \in \CB$, the inner minimization admits an explicit solution
\begin{equation}\label{equ:cfs_KRR}
    \alpha(B) =
    \left((\matK^B_{m\tilde m})^\top \matK^B_{m\tilde m} + \lambda m\, \matK^B_{\tilde m\tilde m} \right)^{-1} (\matK^B_{m\tilde m})^\top \mathbf{y}.
\end{equation}
Plugging this expression into~\eqref{equ:nys}, we have for each $B \in \CB$ that
\begin{equation}\label{equ:H}
\hat{H}_\lambda(B) = \underset{\alpha \in \mathbb{R}^{\tilde m}}{\min}~\hat{\CL}(B,\alpha) = \frac{1}{m} \mathbf{y}^\top \mathbf{y} - \frac{1}{m} \mathbf{y}^\top \matK^B_{m\tilde m} \left((\matK^B_{m\tilde m})^\top \matK^B_{m\tilde m} + \lambda m\, \matK^B_{\tilde m\tilde m} \right)^{-1} (\matK^B_{m\tilde m})^\top \mathbf{y}.
\end{equation}
If the kernel is smooth, then \( \hat{H}_\lambda \) is differentiable, and so is \( \hat{\CL}(B,\alpha) \) for any \( \alpha \in \mathbb{R}^{\tilde m} \), and hence \( \hat{\CL} \) itself. However, \( \hat{H}_\lambda \) is not convex, and neither is \( \hat{\CL} \). We will see that, if the mother kernel \( k \) is analytic, then  \( \hat{H}_\lambda \)  satisfies the Kurdyka–\L{}ojasiewicz property~\cite{attouch2013convergence}, which 
will allow the derivation of some optimization guarantees; see Section~\ref{subsec:opt} (Theorem~\ref{thm:cvg_informal}).

Given the above discussion, we next propose two methods to compute an (Nystr\"om) HKRR solution. The first method is \textbf{ Variable Projection (VarPro)}, see Algorithm~\ref{alg:MIGD}. It exploits the closed-form solution to update $\alpha$ (see ~\eqref{equ:cfs_KRR}), while applying gradient descent steps on $B$ to minimize $\hat{H}(B)$, see~\eqref{equ:H}.  This approach is well known in the optimization literature~\citep{golub1973,golub2003separable}, and allows the use of other optimization schemes such as L-BFGS~\citep{poon2023smooth}. We note that this idea has also been adapted in a related, though slightly different, setting in~\cite{follain2024enhanced}, introducing BKerNN. The second method is  \textbf{Alternating Gradient Descent (AGD)}, see  Algorithm \ref{alg:AGD}. 
It is based only on gradient information and successively updates $B$ and $\alpha$ through gradient descent steps. This algorithm is similar to PALM (for Proximal Alternating Linearized Minimization) introduced in \cite{bolte2014proximal}, allowing multiple steps in $\alpha$ to improve its performance. Such alternating gradient schemes have already been applied in the literature in nonconvex settings, for example, for matrix factorization or two-layer neural networks \citep{lu2019pa,ward2023convergence}. In Appendix~\ref{sec:opt_details}, we provide further details on the above methods, including line search strategies for automatically tuning the learning rates $s_\alpha$ and $s_B$, and how to handle the constraint on matrix $B$. Some convergence results are provided in \thmref{thm:cvg_informal}, while empirical performances are investigated in Section~\ref{sec:MIGD_vs_AGD}.  We end this section discussing some comparison with other works in the literature.

\renewcommand{\algorithmicrequire}{\textbf{Require:}} % (nice label)

\begin{table}[htbp]
\centering
\begin{tabular}{m{0.4\textwidth} m{0.5\textwidth}}
\toprule
% ---------------- Left: VarPro ----------------
\begin{minipage}[t]{\linewidth}
  \captionsetup{type=algorithm,justification=raggedright,
                singlelinecheck=false,margin=0pt,
                aboveskip=2pt,belowskip=0pt}
  \captionof{algorithm}{VarPro (informal)}\label{alg:MIGD}
  \vspace{-0.35em}\noindent\rule{\linewidth}{0.4pt}\vspace{0.4em}

  \begin{algorithmic}[1]
    \REQUIRE $B^0$, $s_B>0$
    \STATE $\alpha^{0}=\argmin_{\alpha\in \mathbb{R}^{\tilde m}}\;\hat{\mathcal L}(B^{0},\alpha)$
    \FOR{$i=0,1,\ldots$}
      \STATE $B^{i+1}=B^{i}-s_B\nabla_B \hat{\mathcal L}(B^{i},\alpha^{i})$
      \STATE $\alpha^{i+1}=\argmin_{\alpha\in \mathbb{R}^{\tilde m}}\;\hat{\mathcal L}(B^{i+1},\alpha)$
    \ENDFOR
    \STATE \textbf{return} $(B^{i+1},\alpha^{i+1})$
  \end{algorithmic}
\end{minipage}
&
% ---------------- Right: AGD ----------------
\begin{minipage}[t]{\linewidth}
  \captionsetup{type=algorithm,justification=raggedright,
                singlelinecheck=false,margin=0pt,
                aboveskip=2pt,belowskip=0pt}
  \captionof{algorithm}{AGD (informal)}\label{alg:AGD}
  \vspace{-0.35em}\noindent\rule{\linewidth}{0.4pt}\vspace{0.4em}

  \begin{algorithmic}[1]
    \REQUIRE $B^0$, $\alpha^0$, $s_\alpha>0$, $s_B>0$, $n_\alpha\in\mathbb{N}^*$
    \FOR{$i=0,1,\ldots$}
      \STATE $B^{i+1}=B^{i}-s_B\nabla_B\hat{\mathcal L}(B^{i},\alpha^{i})$
      \STATE $\alpha^{i,0}=\alpha^{i}$
      \FOR{$j=0,1,\ldots,n_\alpha-1$}
        \STATE $\alpha^{i,j+1}=\alpha^{i,j}-s_\alpha\nabla_\alpha\hat{\mathcal L}(B^{i+1},\alpha^{i,j})$
      \ENDFOR
      \STATE $\alpha^{i+1}=\alpha^{i,n_\alpha}$
    \ENDFOR
    \STATE \textbf{return} $(B^{i+1},\alpha^{i+1})$
  \end{algorithmic}
\end{minipage}
\\
\bottomrule
\end{tabular}
\end{table}

\subsection{Related approaches}
In this section, we discuss the connection to some approaches that directly influence our study. 
An inspiration for the HKRR approach is Hyper-RBF networks proposed in~\cite{poggir90}, from which the term ``hyper" is borrowed. Hyper-RBF networks extend standard RBF networks, considering functions of the form
$f(x) = \sum_{i=1}^m \phi\big(\|B(x - w_i)\|\big),$
% \begin{equation}\label{equ:hrbf}
% f(x) = \sum_{i=1}^m \phi\big(\|B(x - w_i)\|\big),
% \end{equation} 
with $B$ a linear transformation to be learned. In~\cite{poggir90}, neither the representer theorem nor Nystr\"om inducing points were considered, and the \emph{centers} $w_1, \dots, w_m$, together with the coefficients $\alpha_1, \dots, \alpha_m$ and the matrix $B$, were optimized using stochastic gradient with no optimization guarantees. In comparison, we consider a more general class of hyper-kernels; we do not optimize the centers, but use Nystr\"om inducing points; and finally, we consider different gradient-based methods for which convergence guarantees are provided. Another inspiration for our work is the recursive feature machine (RFM) proposed in~\cite{radhakrishnan2024linear,radhakrishnan2024mechanism}; see also~\cite{zhu2025iteratively}. RFM is based on hyper-RBF kernels and defines an estimator similar, though with a slightly different form. Indeed, noting that
\(
\|Bx\| = \big(x^\top M x\big)^{{1}/{2}},\) with \( M = B^\top B,
\)
RFM considers functions of the form
\(f(x) = \sum_{i=1}^m k\big((x - x_i)^\top M (x - x_i)\big),\)
% \begin{equation}\label{equ:rfm}
% f(x) = \sum_{i=1}^m k\big((x - x_i)^\top M (x - x_i)\big),
% \end{equation}
where \( k \) is a radial basis function that is also a reproducing kernel. The centers \( x_1, \dots, x_m \) are taken to be the input data points, as in kernel methods and HKRR. The coefficients are computed for an initial \( B \) (or rather \( M \)) via KRR using a closed-form expression. The key feature of RFM lies in the computation of \( M \), which is given by the average gradient outer product (AGOP) operator \citep{xia2002adaptive}:
\(
M = \frac{1}{n} \sum_{i=1}^n \nabla f(x_i) \nabla f(x_i)^\top,
\)
with \( f \)  a KRR solution. KRR and AGOP computations are then alternated. Aside from the more specific nature of the hyper-kernels considered, RFM is close to our VarPro algorithm.  The gradient step update of \( B \) in VarPro is replaced by the AGOP update. The AGOP operator has a long history in statistics in the context of sufficient dimension reduction \citep{samarov1993exploring,hristache2001direct}. However, unlike VarPro, the RFM iteration does not currently have an ERM and hence an optimization interpretation.  Finally, HKRR was also considered in  \cite{chen2023kernel}, developing ideas of  \cite{fukumizu2009kernel}. We will discuss this more in the next section.

\section{Theoretical results of HKRR for learning MIMs}\label{sec:thms}
In this section, we present a bound on the excess risk of HKRR for learning MIMs (\thmref{thm2}), derive the convergence rate of the Nystr\"om approximation for HKRR in \thmref{thm:ny}, and provide a theoretical analysis of adaptively estimating the unknown latent dimension $d_*$ and the regularization parameter $\lambda$ via cross-validation (\thmref{thm:cv_lbd}). The convergence analysis of AGD and VarPro is also established in \thmref{thm:cvg_informal}.

\subsection{Learning MIMs with HKRR: excess risk bound}
Consider MIMs, 
where the regression function takes the form
\begin{equation}\label{equ:mim}
   f_*(x)= g_*(B_*x), \qquad \rho_X\text{-a.e.\ } x\in X,
\end{equation}
with $B_*$ a $d_* \times D$ matrix such that $d_* < D$, $\|B_*\|_\infty\leq 1$, 
and $g_*$ a measurable function defined on $\R^{d_*}$. 
Estimating MIMs is challenging due to both the nonlinearity of the function $g_*$ 
and the difficulty of determining the linear map $B_*$. 
The following assumptions are needed to derive the rate of excess risk.
\begin{ass}\label{ass:0}
We assume that:
\begin{enumerate}[left=5pt,label=1.\arabic*]
 \item (Bounded data). The input space $X$ is a closed subset of $\R^D$, with 
 \(\|x\| \leq 1\) and \(|y| \leq M\) for some $M > 0$.
 \item (Smoothness). For some integer $r \geq 1$, the mother kernel satisfies 
 \(k \in C^r(\R^{d_*}\times \R^{d_*})\).
 \item (Source condition). For some $d_* < D$, there exists
 $B_* \in \R^{d_* \times D}$ with $\|B_*\|_\infty \leq 1$, such that $f_*$ lies in 
$\mathrm{Range}\bigl(L_{k_{B_*}}^{\theta/2}\bigr)$ for some $\theta \in (0,1]$. Here, $L_{k_{B_*}}: L_2(X,\rho_X) \to L_2(X,\rho_X)$ is an integral operator given by $(L_{k_{B_*}} f)(x)
  = \int_X k_{B_*}(x,x')\,f(x')\,\mathrm{d}\rho_X(x')$.
% \footnote{$L_{k_{B_*}}$ is the integral operator with kernel $k_{B_*}$, see \eqref{eq:12}, and this condition implies~\eqref{equ:mim}.} 

\end{enumerate}
\end{ass}
The condition that the input space $X$ is contained in the unit ball can always be enforced for bounded inputs by rescaling.
The boundedness of the outputs is also a standard assumption. The setting we consider is in the field of classical distribution-free non-parametric learning \citep{gyorfi2002distribution}. This contrasts with the stricter distributional assumptions adopted in other works, see, e.g., \cite{mousavi2022neural,bietti2025learning}.
The smoothness of the kernel $k$ provides a sufficient condition to control the covering numbers (see Assumption~\ref{ass:cv}). The Mat\'ern kernel is an example satisfying this assumption \citep{williams2006gaussian}. The source condition is well studied in classical kernel methods (see, e.g., \cite{cucker2007learning,de2021regularization}). It states the relationship between the target $f_*$ and the space determined by the integral operator defined by the kernel $k_{B_*}$. The parameter $\theta$ controls the smoothness of $f_*$. A larger $\theta$ implies smoother functions and a smaller function space, and therefore a better approximation rate.
% It is sufficient for the mother kernel $k$ to be defined on a suitable subset of $\R^{d_*}\times \R^{d_*}$ (see Assumption~\ref{ass:smooth}). 

The following theorem establishes the excess risk rate of HKRR defined in \eqref{equ:HKRR}. Here, the dimension $d_*$ is assumed to be known a priori, while the adaptive result for unknown $d_*$ is stated in Theorem~\ref{thm:cv_lbd}.
The proof is provided in Appendix~\ref{subsec:proof1}. 

\begin{thm}\label{thm2}
Suppose Assumption~\ref{ass:0} holds. Let $0<\delta<2/e$, $ \zeta < r/(d_*+r)$ and $\lambda = \lambda_m= m^{-\zeta}$. Then with  probability at least $1-\delta$, there holds
\[
\CR(\hflhbl)-\CR(f_*) \leq C_1 Dd_* \log ^2({2}/{\delta}) m^{-\theta \zeta}
\]
for all $m\geq m_\delta$, where $m_\delta$  is independent of $D$, $d_*$ and $C_1$ is a constant independent of $D$, $d_*$ and $\delta$. 
\end{thm}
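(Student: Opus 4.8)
The plan is to run the classical approximation--estimation decomposition for regularized ERM in an RKHS, carried out \emph{uniformly} over the family $\{\CH_B\}_{B\in\CB}$. Write $\widehat{\CR}_\lambda(f)=\widehat{\CR}(f)+\lambda\|f\|_{\CH_B}^2$, put $\psi_f(x,y)=(f(x)-y)^2-(f_*(x)-y)^2$ (so $\EE[\psi_f]=\CR(f)-\CR(f_*)$ and $\EE_m[\psi_f]:=\tfrac1m\sum_i\psi_f(x_i,y_i)=\widehat{\CR}(f)-\widehat{\CR}(f_*)$), and let $f_\lambda^{B_*}\in\CH_{B_*}$ be the population regularized solution $\argmin_{f\in\CH_{B_*}}\bigl(\CR(f)+\lambda\|f\|_{\CH_{B_*}}^2\bigr)=(L_{k_{B_*}}+\lambda)^{-1}L_{k_{B_*}}f_*$. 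Since $B_*\in\CB$, minimality of the HKRR estimator over both $\CB$ and $\CH_{B_*}$ gives $\widehat{\CR}_\lambda(\hflhbl)\le\widehat{\CR}_\lambda(f_\lambda^{B_*})$, i.e.\ $\EE_m[\psi_{\hflhbl}]+\lambda\|\hflhbl\|^2\le\EE_m[\psi_{f_\lambda^{B_*}}]+\lambda\|f_\lambda^{B_*}\|^2$. Dropping $\lambda\|\hflhbl\|^2\ge0$ and adding $(\EE-\EE_m)[\psi_{\hflhbl}]$ to both sides yields
\[
\CR(\hflhbl)-\CR(f_*)\;\le\;\underbrace{(\EE-\EE_m)[\psi_{\hflhbl}]}_{\text{(I) uniform deviation}}\;+\;\underbrace{(\EE_m-\EE)[\psi_{f_\lambda^{B_*}}]}_{\text{(II) deviation at a fixed point}}\;+\;\underbrace{\EE[\psi_{f_\lambda^{B_*}}]+\lambda\|f_\lambda^{B_*}\|_{\CH_{B_*}}^2}_{\text{(III) approximation error}}.
\]

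Term (III) is controlled by standard RKHS regularization theory: the source condition $f_*\in\mathrm{Range}(L_{k_{B_*}}^{\theta/2})$, $\theta\in(0,1]$, gives $\EE[\psi_{f_\lambda^{B_*}}]=\|f_\lambda^{B_*}-f_*\|_{L_2(\rho_X)}^2\lesssim\lambda^{\theta}$ and $\lambda\|f_\lambda^{B_*}\|_{\CH_{B_*}}^2\lesssim\lambda^{\theta}$, so (III)$\,\lesssim\lambda^\theta$; with $\lambda=m^{-\zeta}$ this is exactly the target rate $m^{-\theta\zeta}$. Term (II) is a centered average of i.i.d.\ variables that are bounded (using $\|f_\lambda^{B_*}\|_\infty\le\kappa\|f_\lambda^{B_*}\|_{\CH_{B_*}}$, with $\kappa=\sup_{\|z\|\le1}\sqrt{k(z,z)}<\infty$ finite by Assumption~1.2, and $|y|\le M$) and whose variance is $O(\|f_\lambda^{B_*}-f_*\|_{L_2(\rho_X)}^2)=O(\lambda^\theta)$; a Bernstein inequality then bounds (II) by $\lambda^\theta$ up to $\log(1/\delta)$-factors.

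The crux is term (I), which must be controlled over the data-dependent pair $(\hat B_{d_*},\hflhbl)$. From $\lambda\|\hflhbl\|_{\CH_{\hat B_{d_*}}}^2\le\widehat{\CR}_\lambda(f_\lambda^{B_*})$, $|y|\le M$, and the concentration of $\widehat{\CR}(f_\lambda^{B_*})$ around $\CR(f_\lambda^{B_*})\lesssim M^2$, one first gets an a priori bound $\|\hflhbl\|_{\CH_{\hat B_{d_*}}}\le R$ with $R\asymp M\lambda^{-1/2}$ (up to absolute constants and $\log(1/\delta)$-factors), hence $\|\hflhbl\|_\infty\le\kappa R$. It therefore suffices to bound
\[
\sup_{B\in\CB}\;\sup_{f\in\CH_B,\;\|f\|_{\CH_B}\le R}(\EE-\EE_m)[\psi_f]
\]
by a covering/union-bound argument over $\mathcal G_R=\bigcup_{B\in\CB}\{f\in\CH_B:\|f\|_{\CH_B}\le R\}$. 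The $\|\cdot\|_\infty$-ball $\CB\subset\R^{d_*\times D}$ has $\log\mathcal N(\CB,\eps)\lesssim d_*D\log(1/\eps)$, which is the origin of the $Dd_*$ prefactor. Because $k\in C^r$ with $r\ge1$ is Lipschitz on the compact set of relevant arguments, $B\mapsto k_B(x,x')$ is Lipschitz uniformly in $x,x'$, so an $\eps$-net of $\CB$ --- together with control of how $\|\cdot\|_{\CH_B}$ and function values vary with $B$ --- induces a net of $\mathcal G_R$; for each $B$ in the net, Assumption~\ref{ass:cv} (the covering-number consequence of the $C^r$-smoothness of $k$, of the form $\log\mathcal N(\{f\in\CH_k:\|f\|\le1\},\eps)\lesssim\eps^{-2d_*/r}$) controls the residual entropy. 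A localized Bernstein-type bound at each net point, the union bound, and a discretization scale $\eps$ polynomially small in $m$ then give, with probability at least $1-\delta$, that (I) is at most $C\,Dd_*\log^2(2/\delta)$ times a quantity of order $m^{-1}\lambda^{-d_*/r}$ (plus strictly lower-order terms).

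Finally one balances: with $\lambda=m^{-\zeta}$ and $\zeta<r/(d_*+r)\le r/(d_*+r\theta)$ one has $\zeta(d_*/r+\theta)<1$, so $m^{-1}\lambda^{-d_*/r}=m^{\,\zeta d_*/r-1}\lesssim\lambda^{\theta}=m^{-\theta\zeta}$ for all $m\ge m_\delta$ --- the strict inequality and the threshold $m_\delta$ are precisely what absorb the logarithmic-in-$m$ and lower-order estimation terms. Combining (I), (II), (III) gives $\CR(\hflhbl)-\CR(f_*)\lesssim Dd_*\log^2(2/\delta)\,m^{-\theta\zeta}$. The main obstacle is term (I): upgrading the fixed-kernel KRR estimation analysis to hold uniformly over $B\in\CB$ at only the polynomial price $Dd_*$, which requires quantifying how $\CH_B$, its inner product, and the associated empirical and integral operators vary with $B$, and merging the metric entropy of $\CB$ with the $C^r$-driven entropy of each $\CH_B$ into a single concentration bound. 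The remaining steps --- the spectral estimates for (III), the Bernstein bound for (II), the a priori bound, and the final balancing --- are routine.
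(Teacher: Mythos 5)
Your decomposition into (I) uniform deviation, (II) deviation at $f_\lambda^{B_*}$, and (III) approximation error is exactly the paper's Lemma~\ref{lem:errdcp}, and your handling of (II) and (III) matches Lemmas~\ref{lem:proof2} and~\ref{lem:source}. The gap is in (I), and it is twofold.

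First, you use the trivial a priori bound $R \asymp M\lambda^{-1/2}$ (from $\widehat{\CR}_\lambda(\hflhbl)\le\widehat{\CR}_\lambda(0)$). Plugging this into the paper's Lemma~\ref{lem:item1}, which bounds (I) by $\tfrac12\bigl(\CR(f_0)-\CR(f_*)\bigr)+C_3\max\{1,R^2\}\,Dd_*\,\log(2/\delta)\,m^{-r/(d_*+r)}$, gives an entropy term of order $\lambda^{-1}Dd_*\,m^{-r/(d_*+r)}$. Balancing that against $\lambda^\theta$ forces $(1+\theta)\zeta\le r/(d_*+r)$, which yields only the weaker rate of Theorem~\ref{cor:1}, not the claimed constraint $\zeta<r/(d_*+r)$ of Theorem~\ref{thm2}. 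The missing ingredient is the refined a priori bound of Lemma~\ref{lem:improvedR}: for $m\ge m_\delta$, with high probability $\|\hflhbl\|_{\hat B_{d_*}}\le c_2\sqrt{\log(2/\delta)}\,\bigl(\sqrt{\CA(\lambda)/\lambda}+1\bigr)\asymp\lambda^{(\theta-1)/2}+1$, which is much smaller than $\lambda^{-1/2}$ when $\theta>0$. This refinement is where the threshold $m_\delta$ in the theorem's statement comes from, and with it the entropy term becomes $\lambda^{\theta-1}Dd_*\,m^{-r/(d_*+r)}\lesssim m^{-\theta\zeta}$ precisely when $\zeta\le r/(d_*+r)$.

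Second, your claimed bound for (I), namely $Dd_*\log^2(2/\delta)\,m^{-1}\lambda^{-d_*/r}$, does not follow from the $L_\infty$-covering-number argument you sketch; that argument gives $\max\{1,R^2\}\,Dd_*\,m^{-r/(d_*+r)}$, a different expression with a different dependence on $\lambda$ and $m$. The form $m^{-1}\lambda^{-d_*/r}$ would require a genuinely sharper localized analysis ($L_2$-entropy or local Rademacher complexity), which you do not carry out and which the paper explicitly defers to future work in the remark following Theorem~\ref{thm2}. The balancing step $m^{-1}\lambda^{-d_*/r}\lesssim\lambda^\theta$ you perform is therefore arithmetic on an unestablished bound. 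In short, the architecture of your argument is right, but the specific lemma that makes Theorem~\ref{thm2} (rather than Theorem~\ref{cor:1}) go through --- the refined, high-probability control of $\|\hflhbl\|_{\hat B_{d_*}}$ in terms of $\sqrt{\CA(\lambda)/\lambda}$ --- is absent, and the rate asserted for the uniform-deviation term in its place is unjustified.
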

\begin{rem}
Explicit expressions for $m_\delta$ (see \eqref{eq:14} with $s^* = d_*/r$) 
and for $C_1$ (see \eqref{eq:15}) are given in the proofs. 
For sufficiently large $m_\delta$, the factor $Dd_*$ can be improved to $(Dd_*)^{1/(s^*+1)}$ (Remark~\ref{better}). 
% If $\delta \geq 2/e$, the above result remains valid with the factor $\log^2(2/\delta)$ replaced by $1$. 
Moreover, Theorem~\ref{cor:1} in Appendix~\ref{subsec: proofthms}  yields a weaker bound, valid for all $m \geq 1$, of order 
$m^{-{r\theta}/{(1+\theta)(d_*+r)}}$.
% \[
% \CR(\hflhbl)-\CR(f_*)\lesssim {m}^{-{r\theta}/({(1+\theta)(d_*+r)})},\quad \text{with the choice} \quad \lambda \simeq m^{- {1}/{(1+\theta)(1+d_*/r)}}.
% \]
\end{rem}

% $m_\delta=\max \left\{ (108/C)^{1/s^*}(\log (2/\delta))^{1+1/s^*}, (1/2c_3)^{2/(\zeta-1/(1+s^*))}  \right\}$

% Compared to Theorem \ref{cor:1}, the dependency on $D$ remains the same but the exponential dependence on $m$ is improved due to the assumption on sample size. Specifically, when \( k(x, \cdot) \) is extremely smooth and \( \theta = 1 \), the rate improves to \( O(m^{-1}) \).

\br[Beating the curse of dimensionality]
The minimax excess risk for estimating an $r$-smooth function $f:\RR^D \to \RR$ from $m$ samples scales as $m^{-2r/(2r+D)}$ \citep{gyorfi2002distribution}, which deteriorates exponentially with the input dimension $D$. 
In contrast, HKRR for MIM achieves a rate that depends exponentially only on the true transformation dimension $d_* \ll D$ and only polynomially on $D$, thereby mitigating the curse of dimensionality.
\er

\br[Suboptimal rate]
\thmref{thm2} yields an excess risk bound of order $m^{-2r/(2r+2d_*)}$, 
which introduces an extra factor of $2$ in the $d_*$-term compared with the conjectured optimal rate. 
This suboptimality likely arises from relying on $L_\infty$-based covering number bounds over $\bigcup_B \CH_B$. 
Sharper analysis based on $L_2$-norm estimates or local Rademacher complexity \citep{bartlett2005local} is left for future work.
\er

\subsection{Nystr\"om approximation}
Recall that the solution of the Nystr\"om problem in \eqref{equ:opt_pb} is denoted by $\hat f_\lambda^{\hat B_{d_*,\tilde m}}$.
To describe the relationship between $\tilde m$ and $m$, we define a random variable
$\CN_{B_*,x}(\lambda)=\langle k_{B_*x},(\Sigma_{B_*}+\lambda I)^{-1}k_{B_*x}\rangle_{\CH_{B_*}}$ for $\lambda>0$ 
with $x\sim\rho_X$, where $\Sigma_{B_*}$ is the covariance operator of $k_{B_*}$ (see~\eqref{eq:covariance_op}), and set 
$\CN_{B_*,\infty}(\lambda)=\sup_{x\in X}\CN_{B_*,x}(\lambda)$. 

The following result shows that, with $\tilde m < m$ points, the plain Nystr\"om estimator can achieve the same excess risk rate as in Theorem~\ref{thm2}, up to constants. 
The proof is given in Appendix~\ref{apx:thm_proof}.

\begin{thm}\label{thm:ny}
Under the assumptions of Theorem~\ref{thm2}, with probability at least $1-\delta$,  
\[
\CR(\hat f_\lambda^{\hat B_{d_*,\tilde m}})-\CR(f_*)
 \le C_2Dd_*\log^2\!({2}/{\delta})m^{-\theta\zeta},
\]
where $C_2$ is given in \eqref{eq:C_nys}, provided  
$\tilde m \ge 67\log\!\tfrac{4\kappa}{\lambda\delta} \,\vee\, 
5\,\CN_{B_*,\infty}(\lambda)\log\!\tfrac{4\kappa}{\lambda\delta}$ 
for $\kappa=\sup_x k(x,x)$.
\end{thm}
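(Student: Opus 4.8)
The plan is to revisit the proof of Theorem~\ref{thm2} and isolate the one place where restricting to the Nystr\"om subspaces changes the argument. Write $\CH_B^{\tilde m}:=\mathrm{span}\{k_B(\cdot,\tilde x_j):1\le j\le\tilde m\}\subseteq\CH_B$, so that $\hat f_\lambda^{\hat B_{d_*,\tilde m}}$ solves $\min_{B\in\CB}\min_{f\in\CH_B^{\tilde m}}\widehat{\CR}_\lambda(f)$. Since $\CR(f)-\CR(f_*)=\|f-f_*\|_{L_2(\rho_X)}^2$ and the regularizer still forces $\|\hat f_\lambda^{\hat B_{d_*,\tilde m}}\|\le M/\sqrt\lambda$ in the relevant RKHS exactly as in the unrestricted case, the excess risk splits, following the proof of Theorem~\ref{thm2}, into a uniform estimation term over $\bigcup_{B\in\CB}\{f\in\CH_B^{\tilde m}:\|f\|_{\CH_B}\le M/\sqrt\lambda\}$ and an approximation term measuring how well this class captures $f_*$.

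For the estimation term, the key observation is that $\CH_B^{\tilde m}\subseteq\CH_B$ for every $B$, so the sup-norm covering numbers of the restricted union are no larger than those of $\bigcup_{B\in\CB}\{f\in\CH_B:\|f\|_{\CH_B}\le M/\sqrt\lambda\}$ used in Theorem~\ref{thm2} (see Assumption~\ref{ass:cv}). Hence the uniform deviation bound, and with it the $Dd_*\log^2(2/\delta)\,m^{-\theta\zeta}$ term, carries over unchanged; in particular $\tilde m$ does not enter here.

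The genuinely new ingredient is the approximation term, which is where the condition on $\tilde m$ is used. Since $B_*\in\CB$, it suffices to bound the regularized risk of the best element of the fixed subspace $\CH_{B_*}^{\tilde m}$. The inducing points $\tilde x_j$ are drawn uniformly, without replacement, from the training sample, so $B_*\tilde x_j$ are exactly plain-Nystr\"om landmarks for kernel ridge regression with the mother kernel on $\R^{d_*}$. Invoking the Nystr\"om subspace estimate of \cite{rudi2015less}---which under precisely the stated requirement $\tilde m\ge 67\log\tfrac{4\kappa}{\lambda\delta}\vee 5\,\CN_{B_*,\infty}(\lambda)\log\tfrac{4\kappa}{\lambda\delta}$ yields $\|(I-P_{\tilde m})(\Sigma_{B_*}+\lambda I)^{1/2}\|^2\le 3\lambda$ with high probability, where $P_{\tilde m}$ is the orthogonal projection onto $\CH_{B_*}^{\tilde m}$---and combining it with the source condition $f_*\in\mathrm{Range}(L_{k_{B_*}}^{\theta/2})$, $\theta\in(0,1]$, one gets $\|(I-P_{\tilde m})f_*\|_{L_2(\rho_X)}^2\lesssim\lambda^\theta$, hence an approximation error for the $\CH_{B_*}^{\tilde m}$-regularized solution of the same order $\lambda^\theta=m^{-\theta\zeta}$ as the bias in Theorem~\ref{thm2}. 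A union bound over the good events of the estimation and Nystr\"om analyses then gives the claim, with $C_2$ absorbing the extra universal constants.

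I expect the main obstacle to be the bookkeeping at the interface of the $B$-minimization and the subsampling. One must verify that the favorable Nystr\"om event is needed only at the single matrix $B_*$, so that no control of projections uniformly over $\CB$ is required---this is legitimate because optimality of $\hat B_{d_*,\tilde m}$ for the Nystr\"om objective permits a direct comparison with $B_*$---while simultaneously the uniform estimation bound must hold over all $B\in\CB$ and over the data-dependent subspaces $\CH_B^{\tilde m}$, which is harmless only because these sit inside the $\CH_B$ already handled in Theorem~\ref{thm2}. A secondary technical point is that the landmarks are sampled without replacement from the training set rather than i.i.d.\ from $\rho_X$, so the plain-Nystr\"om variant of the subspace concentration lemma---the source of the constants $67$ and $5$---must be used, and the mild statistical coupling with the data used to fit $\alpha$ tracked.
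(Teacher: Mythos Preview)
Your proposal is correct and follows essentially the same route as the paper: decompose the excess risk via a comparison point in the Nystr\"om subspace at the single matrix $B_*$, control the estimation terms exactly as in Theorem~\ref{thm2} using the inclusion $\CH_B^{\tilde m}\subseteq\CH_B$, and absorb the extra ``computational'' error via the Rudi--Camoriano--Rosasco bound $\|(I-P_{\tilde m})\Sigma_{B_*}^{1/2}\|^2\le 3\lambda$ under the stated condition on $\tilde m$. Your discussion of the bookkeeping (Nystr\"om event only at $B_*$, uniform estimation over all $B$) matches the paper's logic precisely.

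One technical wrinkle worth tightening: the paper takes $P_{B_*,\tilde m}f_\lambda^{B_*}$ as the comparison function, not a projection of $f_*$ itself. This matters because when $\theta<1$ the regression function $f_*$ need not lie in $\CH_{B_*}$, so the $\CH_{B_*}$-orthogonal projection $P_{\tilde m}f_*$ is not well defined. Projecting the population regularized solution $f_\lambda^{B_*}\in\CH_{B_*}$ instead gives $\|(I-P_{B_*,\tilde m})f_\lambda^{B_*}\|_{\rho_X}^2\le\|(I-P_{B_*,\tilde m})\Sigma_{B_*}^{1/2}\|^2\,\|f_\lambda^{B_*}\|_{B_*}^2\le 3\lambda\cdot\CA(\lambda)/\lambda=3\CA(\lambda)$, which combines cleanly with the usual $\CA(\lambda)\le\lambda^\theta$ bound; your step ``$\|(I-P_{\tilde m})f_*\|_{L_2}^2\lesssim\lambda^\theta$'' would need an extra argument to make sense of the projection. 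Also, you quote the crude radius $M/\sqrt\lambda$ but then claim the rate $m^{-\theta\zeta}$; to actually get that rate you must, as the paper does, invoke the refined radius $R^*$ of Lemma~\ref{lem:improvedR}, which applies equally to the Nystr\"om minimizer.
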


\br
Since $\CN_{B_*,\infty}(\lambda)\le \kappa/\lambda$ for all $\lambda>0$ \citep{caponnetto2007optimal,rudi2015less}, under the assumptions of Theorem~\ref{thm2} we have $m > \tilde m \sim \, m^{\zeta}$, where $\zeta$ can be chosen arbitrarily close to $r/(d_*+r)<1$.
\er

% \br
% the Nystr\"om subsampling centers..
% \er
\br
We also provide the rate of the approximate leverage score (ALS) Nystr\"om \citep{rudi2015less} with varying numbers of subsampling points; see Appendix~\ref{apx:thm_proof} for details. In fact, ALS requires fewer samples than the plain Nystr\"om method since $\CN_{B_*}(\lambda)\le \CN_{B_*,\infty}(\lambda)$, where {$\CN_{B_*}(\lambda)= \EE_{x \sim \rho_X}[\CN_{B_*,x}(\lambda)]$}.
\er

\subsection{Adaptivity}\label{sec:adaptivity}
The latent dimension $d_*$ is unknown in practice. To obtain adaptive guarantees, $d$ is tuned over $\{1,\ldots,D\}$. Given \(N\in \NN, \lambda_1, \lambda_N > 0\) and
\(Q = \left({\lambda_N}/{\lambda_1}\right)^{{1}/{(N-1)}}\), the regularization parameter $\lambda$ is chosen from the geometric grid
\(
\Lambda = \{\lambda_j = \lambda_1 Q^{j-1}\}_{j=1}^N\) 
assuming that the interval $[\lambda_1,\lambda_N]$ contains the optimal $\lambda$. Let
\( \Gamma = \bigl\{(d,\lambda)\mid d\in\{1,\dots,D\},\ \lambda\in\Lambda\bigr\},\)
so that \(|\Gamma| = DN.\) Let \(\{(x'_i,y'_i)\}_{i=1}^{m'}\sim\rho^{m'}\) be an independent validation set. We select
 \begin{align*}
    (\hat{d}, \hat{\lambda}) = \mathop{\argmin}\limits_{(d,\lambda)\in \Gamma} \frac{1}{m'} \sum_{i=1}^{m'} \Bigl( T_M \hat{f}_\lambda^{\hat{B}_d}(x'_i)-y'_i \Bigr)^2.
\end{align*}
Here, \(T_M\) is a truncation operator given by
\(
T_M f(x)
= \operatorname{sign}\bigl(f(x)\bigr)\,\min\bigl\{\lvert f(x)\rvert,\,M\bigr\},
\)
which handles the unboundedness of functions obtained by HKRR. 
The resulting estimator is denoted by \(\hat f_{\hat\lambda}^{\hat B_{\hat d}}.\)
% The validation is conducted by minimizing the least squares loss with validation dataset $\{(x'_i,y'_i)\}_{i=1}^{m'} \sim \rho^{m'}$ and independent of the training dataset. Denote the minimizer  using validation as
% \begin{align*}
%     (\hat{d}, \hat{\lambda}) = \mathop{\argmin}\limits_{(d,\lambda)\in \Gamma} \frac{1}{m'} \sum_{i=1}^{m'} \Bigl( T_M \hat{f}_\lambda^{\hat{B}_d}(x'_i)-y'_i \Bigr)^2.
% \end{align*}
% And the corresponding estimator is denoted by $\hat{f}_{\hat{\lambda}}^{\hat{B}_{\hat{d}}}$.
The next theorem states that it
achieves the same rate (up to constants) as the estimator in \thmref{thm2}. The idea of its proof is classical; see, e.g., \cite{devroye2013probabilistic,chirinos2024learning}, 
and it is given explicitly in Appendix~\ref{apx:thm_proof}.
\begin{thm}\label{thm:cv_lbd}
For $\delta \in (0,1)$ and a suitable $q\in [1,Q]$, the following holds with probability at least $1-\delta$ that
\begin{align*}
    \CR(T_M \hat{f}_{\hat{\lambda}}^{\hat{B}_{\hat{d}}}) -\CR(f_*)  & \leq 2 q^\theta C_1 Dd_* \log ^2({2}/{\delta}) m^{-\theta \zeta}+ \frac{52M^2}{m'} \log \frac{2DN}{\delta}.
\end{align*}
\end{thm}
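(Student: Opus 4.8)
The plan is to reduce the adaptive estimator bound to a standard model-selection-by-validation argument applied to the finite family $\{T_M\hat f_\lambda^{\hat B_d}\}_{(d,\lambda)\in\Gamma}$. First I would condition on the training sample and treat each $\hat f_\lambda^{\hat B_d}$ as a fixed (truncated) function, so that the validation step is choosing among $|\Gamma|=DN$ deterministic predictors bounded by $M$. The key tool is a uniform deviation bound: for a finite family of functions with range in $[-M,M]$, the empirical square-loss on the independent validation set $\{(x_i',y_i')\}_{i=1}^{m'}$ concentrates around the true risk uniformly, with a union bound over $DN$ elements giving a $\log(DN/\delta)$ factor. The standard oracle inequality for hold-out selection (see the cited \cite{devroye2013probabilistic}) then yields, with probability $\ge 1-\delta/2$ over the validation sample,
\[
\CR(T_M\hat f_{\hat\lambda}^{\hat B_{\hat d}})-\CR(f_*)\;\le\; 2\min_{(d,\lambda)\in\Gamma}\bigl(\CR(T_M\hat f_\lambda^{\hat B_d})-\CR(f_*)\bigr)+\frac{cM^2}{m'}\log\frac{2DN}{\delta},
\]
for an explicit constant $c$ (here $c=52$), where the factor $2$ is the usual price of hold-out.

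Next I would bound the oracle term $\min_{(d,\lambda)\in\Gamma}(\CR(T_M\hat f_\lambda^{\hat B_d})-\CR(f_*))$ using Theorem~\ref{thm2}. The subtlety is that $d_*$ need not lie in a position where the grid is tight and $\lambda_m=m^{-\zeta}$ need not be exactly on the grid $\Lambda$. For the dimension, note that $d=d_*$ is one of the admissible choices $\{1,\dots,D\}$, so restricting the min to $d=d_*$ loses nothing; the only mild point is that using $d=d_*$ is legitimate because Assumption~\ref{ass:0} guarantees the source condition holds with that $d_*$, and monotonicity/nesting arguments are not even needed. For the regularization parameter, since $\Lambda$ is a geometric grid with ratio $Q$ and $[\lambda_1,\lambda_N]$ is assumed to contain the optimal $\lambda_m=m^{-\zeta}$, there exists a grid point $\lambda_j$ with $\lambda_j/\lambda_m=:q\in[1,Q]$ (or $[1/Q,1]$; either way a factor $q^\theta$ appears). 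I would then re-run the proof of Theorem~\ref{thm2} — or better, invoke the explicit dependence of $C_1$ on $\lambda$ through $\lambda^{\theta}$-type terms recorded in \eqref{eq:15} — to see that replacing $\lambda_m$ by $q\lambda_m$ only multiplies the bound by $q^\theta$, giving $\CR(\hat f_{\lambda_j}^{\hat B_{d_*}})-\CR(f_*)\le q^\theta C_1 D d_*\log^2(2/\delta)m^{-\theta\zeta}$ on the training-good event. Finally, $\CR(T_M\hat f)-\CR(f_*)\le \CR(\hat f)-\CR(f_*)$ because truncating toward $[-M,M]$ can only decrease the square loss against $|y|\le M$ and $|f_*|\le M$ (a pointwise contraction argument), so truncation is harmless for the oracle term.

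Combining the two events by a union bound (training-good with probability $\ge 1-\delta/2$ from Theorem~\ref{thm2} applied at confidence $\delta/2$, and validation-good with probability $\ge 1-\delta/2$) gives the stated inequality, with the constant $2q^\theta C_1$ in the first term and $52M^2/m'\cdot\log(2DN/\delta)$ in the second; the replacement of $\delta$ by $\delta/2$ only affects logarithmic factors already absorbed in $\log^2(2/\delta)$. I expect the main obstacle to be the bookkeeping in the second paragraph: tracking precisely how the constant $C_1$ and the threshold $m_\delta$ from Theorem~\ref{thm2} scale when $\lambda$ moves by a bounded multiplicative factor $q$, and confirming that $m_\delta$ is unaffected (or only changed by a constant) so that the "for all $m\ge m_\delta$" qualifier transfers cleanly to the adaptive statement. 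Everything else — the hold-out oracle inequality and the truncation contraction — is classical and can be cited.
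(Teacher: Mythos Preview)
Your proposal is correct and follows essentially the same route as the paper's own proof: a Bernstein-type hold-out oracle inequality over the $DN$ truncated candidates (producing the factor $2$ and the additive $\tfrac{52M^2}{m'}\log\tfrac{2DN}{\delta}$ term), then specializing the oracle to $d=d_*$ and a nearest geometric grid point $\lambda_j=q\lambda_m$, invoking the truncation contraction $\CR(T_Mf)-\CR(f_*)\le\CR(f)-\CR(f_*)$, and finally using that the Theorem~\ref{thm2} bound $U(\lambda)$ satisfies $U(q\lambda)\le q^\theta U(\lambda)$. Your bookkeeping concern about $m_\delta$ is real but minor; the paper simply absorbs it, and you are right that it transfers up to constants since $q\in[1,Q]$ is bounded.
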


The above theorem shows how to choose hyperparameters adaptively and optimally.
% provides insight into real‐world applications where the true latent dimension is unknown. 
{Furthermore, our experiments (Figure~\ref{fig:varyingd_combined}) highlight the impact of different choices of $d$ and reveal an interesting phenomenon: overparameterizing $d$ can sometimes yield better results. This observation suggests the conjecture that $\hat{d} > d_*$.}

\subsection{Optimization guarantees}\label{subsec:opt}

We next study the convergence properties of
Algorithms \ref{alg:MIGD} and \ref{alg:AGD} introduced in Section \ref{sec:algo}; see Appendix \ref{sec:opt_details} (Theorems \ref{thm:cvg_MIGD} and \ref{thm:cvg_AGD}) for further details. The proofs rely on the Kurdyka-\L{}ojasiewicz property \citep{attouch2013convergence}, which in turn requires the kernel $k$ to be analytic.

\begin{thm}[Convergence of AGD and VarPro (informal)]\label{thm:cvg_informal}
    Let $k$ be an analytic kernel. Suppose that the sequences $\left(B^i\right)_{i\in\N}$ and $\left(\alpha^i\right)_{i\in\N}$ generated by Algorithm \ref{alg:MIGD} or \ref{alg:AGD} are such that  the minimal eigenvalue $\lambda_{min}\left(\matK^{B^i}_{\tilde m\tilde m}\right)\geq \sigma$ for some $\sigma>0$. Then, {the sequence $\left(B^i,\alpha^i\right)_{i\in\N}$ converges to a critical point} of $\Psi:B,\alpha\mapsto\CL(B,\alpha)+i_\CB(B)$ as $i$ goes to infinity, and both sequences have finite length. In addition, there exists a constant $C>0$ such that after $N$ iterations, either $\left(B^N,\alpha^N\right)$ is a critical point of $\Psi$ or 
    \begin{equation*}
        \min_{0\leq i\leq N}\left\|\nabla_B\hat\CL\left(B^i,\alpha^i\right)\right\|^2\leq\frac{C}{N}.
    \end{equation*}
\end{thm}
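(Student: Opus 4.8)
The plan is to prove convergence of the two alternating schemes by recognizing both of them as instances of the general descent framework developed around the Kurdyka–\L{}ojasiewicz (KL) inequality, following the line of argument of \cite{attouch2013convergence,bolte2014proximal}. The abstract recipe requires three ingredients: (i) a \emph{sufficient decrease} property, i.e. $\Psi(B^{i+1},\alpha^{i+1})+a\|(B^{i+1},\alpha^{i+1})-(B^i,\alpha^i)\|^2\leq \Psi(B^i,\alpha^i)$ for some $a>0$; (ii) a \emph{relative error} (subgradient) bound, i.e. $\operatorname{dist}(0,\partial\Psi(B^{i+1},\alpha^{i+1}))\leq b\|(B^{i+1},\alpha^{i+1})-(B^i,\alpha^i)\|$ for some $b>0$; and (iii) the KL property of $\Psi$ at cluster points. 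Given (i)–(iii) together with boundedness of the iterates, the standard argument yields finite length of the sequence and convergence to a single critical point. So the bulk of the work is to verify (i) and (ii) for each algorithm, and to justify (iii).

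First I would set up the regularity needed: under the eigenvalue hypothesis $\lambda_{\min}(\matK^{B^i}_{\tilde m\tilde m})\geq\sigma>0$ and the constraint $B\in\CB$ (a compact set), one restricts attention to a compact region of $(B,\alpha)$-space on which $\hat\CL$ has Lipschitz-continuous gradient with respect to each block; this uses analyticity (hence smoothness) of $k$, the boundedness of the data, and Lemma~\ref{lem:Lalpha}. In particular the block-Lipschitz constants $L_B$ and $L_\alpha$ can be taken uniform along the sequence. For AGD, sufficient decrease in the $B$-block follows from the descent lemma provided $s_B\leq 1/L_B$ (or is chosen by the line search), and the inner $\alpha$-loop of $n_\alpha$ gradient steps with $s_\alpha\leq 1/L_\alpha$ only decreases $\hat\CL$ further by at least a multiple of $\|\alpha^{i+1}-\alpha^i\|^2$ using strong convexity of $\alpha\mapsto\hat\CL(B^{i+1},\alpha)$ (guaranteed by $\lambda m\,\matK^{B}_{\tilde m\tilde m}\succeq\lambda m\sigma I$); combining gives (i). For VarPro, the $\alpha$-update is the exact minimizer, so $\hat\CL(B^{i+1},\alpha^{i+1})\leq\hat\CL(B^{i+1},\alpha^i)$ automatically, and then one invokes the fact (the content of variable projection, e.g. \cite{golub1973,poon2023smooth}) that $\hat H_\lambda(B)=\min_\alpha\hat\CL(B,\alpha)$ has Lipschitz gradient on the relevant compact set, so a gradient step on $B$ with appropriate step size gives decrease of $\hat H_\lambda$, hence of $\Psi$. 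The relative-error bound (ii) is obtained by writing the optimality conditions: $\nabla_\alpha\hat\CL(B^{i+1},\alpha^{i+1})$ is either $0$ (VarPro) or $O(\|\alpha^{i,n_\alpha}-\alpha^{i,n_\alpha-1}\|)$ (AGD, by one gradient step identity), while $\nabla_B\Psi$ at the new point is controlled by $\nabla_B\hat\CL(B^i,\alpha^i)=-s_B^{-1}(B^{i+1}-B^i)$ plus a Lipschitz error term $L_B\|B^{i+1}-B^i\|+(\text{mixed term})\|\alpha^{i+1}-\alpha^i\|$; here $\partial\Psi=\nabla\hat\CL+\partial i_\CB$ with $\partial i_\CB$ the normal cone, whose contribution is captured by the projected-gradient structure of the constrained step. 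Summing these gives (ii) with $b$ depending on $s_B, s_\alpha, L_B, L_\alpha, \sigma, n_\alpha$.

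For (iii), I would argue that $\hat\CL$ is real-analytic in $(B,\alpha)$ because $k$ is analytic and the Nystr\"om objective in \eqref{equ:opt_pb} is built from finitely many evaluations $k(Bx_i,B\tilde x_j)$, matrix products, and—after using the eigenvalue lower bound to stay away from singularity—a matrix inverse in \eqref{equ:cfs_KRR}, all analytic operations; and $i_\CB$ is the indicator of a set defined by the analytic (in fact semialgebraic, since $\|B\|_\infty\leq1$ is a spectral-norm constraint) constraint, so $\Psi$ is a KL function at every point \citep{attouch2013convergence}. Finally, the $O(1/N)$ rate on $\min_{i\leq N}\|\nabla_B\hat\CL(B^i,\alpha^i)\|^2$ is a direct corollary of the summability $\sum_i\|(B^{i+1},\alpha^{i+1})-(B^i,\alpha^i)\|^2<\infty$ coming from (i) together with the relative error bound (ii): telescoping the sufficient decrease gives $\sum_i\|B^{i+1}-B^i\|^2\leq (\Psi(B^0,\alpha^0)-\inf\Psi)/a<\infty$, and since $\|\nabla_B\hat\CL(B^i,\alpha^i)\|=s_B^{-1}\|B^{i+1}-B^i\|$ (up to the normal-cone correction, which only helps), the minimum over $i\leq N$ is at most the average, which is $O(1/N)$.

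The main obstacle I expect is the relative-error bound (ii) for VarPro: because the $B$-step uses $\nabla_B\hat\CL(B^i,\alpha^i)$ rather than $\nabla\hat H_\lambda(B^i)$, and $\hat\CL$ is only evaluated at the \emph{previous} $\alpha$, one must carefully relate $\nabla_B\hat\CL(B^i,\alpha^i)$ to $\nabla_B\hat\CL(B^{i+1},\alpha^{i+1})$ and to $\partial\Psi(B^{i+1},\alpha^{i+1})$, controlling the discrepancy by $\|B^{i+1}-B^i\|+\|\alpha^{i+1}-\alpha^i\|$; crucially one needs $\|\alpha^{i+1}-\alpha^i\|\lesssim\|B^{i+1}-B^i\|$, which follows from Lipschitz dependence of the exact minimizer $\alpha(B)$ on $B$ (again using the $\sigma$-lower bound) but requires care. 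A secondary subtlety is that the constraint set $\CB$ is not a box but a spectral-norm ball, so the "constrained gradient step" is really a projected step and the projection is nonexpansive but not coordinatewise; handling $\partial i_\CB$ correctly in both the decrease and error estimates is the other place where the argument needs attention. I would treat the constraint exactly as in the PALM analysis \citep{bolte2014proximal}, where the nonsmooth block is handled via proximal steps, noting that the projection onto $\CB$ is the prox of $i_\CB$.
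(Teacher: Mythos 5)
Your plan follows the paper's own strategy: both verify the three ingredients (sufficient decrease, relative-error bound, KL property) of the Attouch--Bolte--Svaiter framework and invoke the abstract convergence theorem, with Lemmas~\ref{lem:LB} and~\ref{lem:Lalpha} supplying the uniform Lipschitz constants. A minor cosmetic difference: for VarPro's sufficient decrease in the $\alpha$-block you route through Lipschitz dependence of the exact minimizer $\alpha(B)$ on $B$, whereas the paper uses directly that, under $\lambda_{\min}(\matK^{B^{i+1}}_{\tilde m\tilde m})\geq\sigma$, the map $\alpha\mapsto\hat\CL(B^{i+1},\alpha)$ is $\lambda\sigma$-strongly convex, so the exact $\alpha$-update gives $\hat\CL(B^{i+1},\alpha^{i+1})+\tfrac{\lambda\sigma}{2}\|\alpha^{i+1}-\alpha^i\|^2\leq\hat\CL(B^{i+1},\alpha^i)$; the paper's route is more elementary and avoids any implicit-function argument.

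There is, however, a genuine gap in your justification of the $O(1/N)$ rate. You write that $\|\nabla_B\hat\CL(B^i,\alpha^i)\|=s_B^{-1}\|B^{i+1}-B^i\|$ ``up to the normal-cone correction, which only helps.'' This is the wrong inequality direction once the projection is active: firm nonexpansiveness of $\CP_\CB$ gives $\|B^{i+1}-B^i\|\leq s_{B,i}\|\nabla_B\hat\CL(B^i,\alpha^i)\|$, so summability of $\|B^{i+1}-B^i\|^2$ does \emph{not} imply summability of $\|\nabla_B\hat\CL\|^2$ --- on the boundary of $\CB$ the raw gradient can stay bounded away from zero while the projected step shrinks to nothing. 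The paper sidesteps this because its Armijo backtracking condition is phrased \emph{directly} in terms of the gradient norm, $\hat\CL(B^{i+1},\alpha^i)-\hat\CL(B^i,\alpha^i)<-cs_{B,i}\|\nabla_B\hat\CL(B^i,\alpha^i)\|^2$, which together with the uniform lower bound $s_{B,i}\geq\min\{2\rho(1-c)/L_1,\,s_{B,-1}\}$ of Lemma~\ref{lem:LB} telescopes to $\sum_{i=0}^N\|\nabla_B\hat\CL(B^i,\alpha^i)\|^2\lesssim\hat\CL(B^0,\alpha^0)$, giving the $O(1/N)$ bound without passing through the step lengths. Your argument needs to be replaced by this direct use of the line-search condition (or an equivalent); the telescoping you did on $\|B^{i+1}-B^i\|^2$ establishes finite length of the iterate sequence but not the gradient rate.
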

The above result ensures that both methods converge to some critical point of the objective function as long as the sequence $\left(B^i\right)_{i\in\N}$ does not shrink the minimal eigenvalue of the kernel matrix $\matK^{B^i}_{\tilde m\tilde m}$. In other words, we require the data points to be linearly independent under the hyper-kernel $k_{B^i}$ at each iteration. This assumption guarantees that the sequence $\left(\alpha^i\right)_{i\in\N}$ is well defined and bounded, which allows us to analyze the algorithms using Kurdyka-\L{}ojasiewicz property.

\subsection{Comparison with other works}

% Hyper-kernel RKHSs were also introduced for dimension reduction, see e.g. \cite{fukumizu2009kernel,fukumizu2014gradient,chen2023kernel}.  In \cite{fukumizu2009kernel}, via the conditional cross-covariance operator between the input and output RKHSs. They showed that this operator matches its hyper-kernel counterpart precisely when the matrix \(B\) spans a central mean subspace \cite{chiaromonte2002sufficient}, and further connected the operator to the expected risk, leading to an empirical risk minimization framework. Building on this idea, \cite{chen2023kernel} proposed a related HKRR method and proved that the estimated matrix consistently recovers the correct subspace dimension as the sample size increases. However, their analysis focuses on identifying the dimension \(d\) and does not provide error rates or convergence guarantees for the associated algorithms.

Hyper-kernel RKHSs have been studied for dimension reduction, see e.g., \cite{fukumizu2009kernel,fukumizu2014gradient,chen2023kernel}. 
In \cite{fukumizu2009kernel}, they studied the conditional cross-covariance operator between input and output RKHSs. It was shown that the operator equals to one induced by hyper-kernel input RKHS and output RKHS when $B$ spans a central mean subspace \citep{chiaromonte2002sufficient}. They further connected the operator to the expected risk and yield an ERM framework. 
Building on this idea, \cite{chen2023kernel} proposed a related HKRR method and proved 
that it can recover the true subspace dimension asymptotically.

The explicit excess risk rates for learning MIMs in different approaches are also studied in the literature. For example,
\cite{klock2021estimating} used $k$-nearest neighbors and piecewise polynomials to learn the link function and employed the response-conditional least squares (RCLS) algorithm to estimate the latent matrix via inverse regression. Their generalization bound is $O(m^{-2r/(2r+d_*)})$, plus the error from learning the latent matrix. By contrast, our approach achieves $O(m^{-r/(r+d_*)})$ and provides two alternating minimization algorithms with both theoretical guarantees and empirical validation. 
\cite{bach2017breaking} established generalization bounds for MIMs with Lipschitz property. He considered hypothesis spaces as neural networks with finite variation norm and activation $\sigma(x)=(x)_+^\alpha$, $\alpha>0$. For ReLU ($\alpha=1$), the rate is $O((\log D)^{2/d_*+3} m^{-1/(d_*+3)} \log m)$. Our results emphasize the blessing of smoothness: the rate improves with $r$, and even for $r=1$ we obtain a sharper bound. 
Finally, we mention a related line of research on single-index models (SIMs, $d_*=1$), 
or ridge functions, investigated through convolutional neural networks; see, e.g., 
\cite{feng2023generalization,mao2023approximating,zhou2020theory} for approximation error analyses. 

The computational complexities of gradient-based algorithms have also been studied for learning the SIM and MIM recently.  A quantity characterizing complexity is the {information exponent}~\citep{arous2021online}, see,
 e.g., online SGD \citep{arous2021online}, GD \citep{ba2022high,moniri2023theory}, and SGD \citep{mousavi2022neural,damian2023smoothing}. 
An alternative notion of complexity is given by the leap exponent \citep{abbe2023sgd,dandi2023two,bietti2025learning}. 
For a broader discussion of MIMs, see the survey \cite{bruna2025survey}.

\section{Numerical experiments}\label{sec:experiment}

In this section, we study the performance of the methods introduced in Section \ref{sec:algo} on simulated datasets, using the Gaussian kernel $k:x,x'\mapsto\exp\left(-\gamma\left\|x-x'\right\|^2\right)$. Details on the experimental setup can be found in Appendix \ref{app:setting_exp}.
\paragraph{Non-convexity of HKRR.}\label{sec:MIGD_vs_AGD}Given the nonconvex nature of the objective function minimized in HKRR, the performance of first-order methods such as VarPro (Algorithm \ref{alg:MIGD}) and AGD (Algorithm \ref{alg:AGD}) can be severely impacted by a poor initialization. In particular, VarPro directly exploits the structure of the problem in \eqref{equ:opt_pb} by computing a closed-form solution at each iteration. This leads to a \textbf{faster convergence} than AGD, especially for few Nystr\"om centers, since only a matrix of size $\tilde m \times \tilde m$ must be inverted. However, this advantage comes at a cost: because VarPro optimizes solely over $B$—the variable responsible for the non-convexity—it is prone to being \textbf{trapped in local minima} and is thus highly sensitive to initialization. By contrast, AGD explores the landscape of $\hat\CL$ jointly in both $B$ and $\alpha$, which can help it escape critical points where VarPro stagnates. This behavior is illustrated in Figure \ref{fig:losses_col}: in one scenario (top graph), both methods converge ultimately to the same solution, with VarPro reaching it more quickly; in the other scenario  (bottom graph), AGD manages to escape a critical point in which VarPro remains stuck. A simple two-dimensional problem illustrates the above intuition, see Figure~\ref {fig:path_col1}, Figure~\ref{fig:path_col2}, and Appendix~\ref{app:2d_case} for further details.

% \begin{figure}[ht]
% \centering
%         \includegraphics[width=0.25\textwidth]{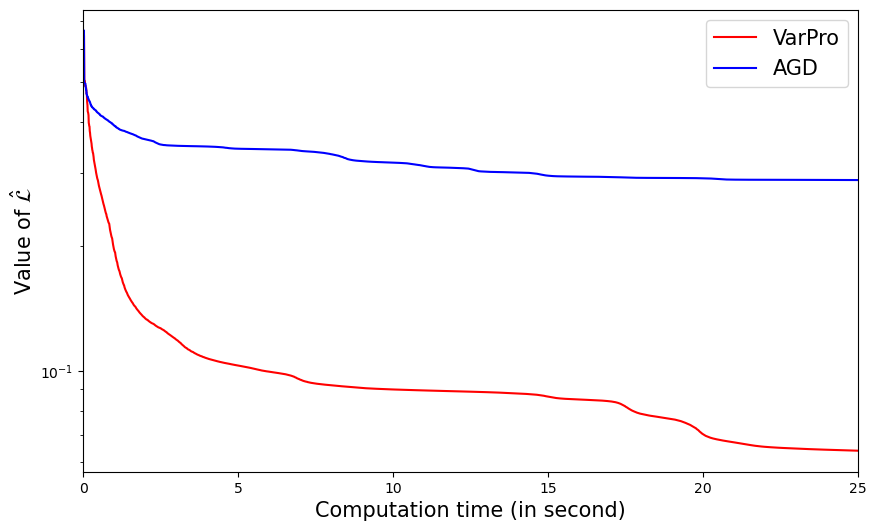} \hspace{4pt}
%         \includegraphics[width=0.25\textwidth]{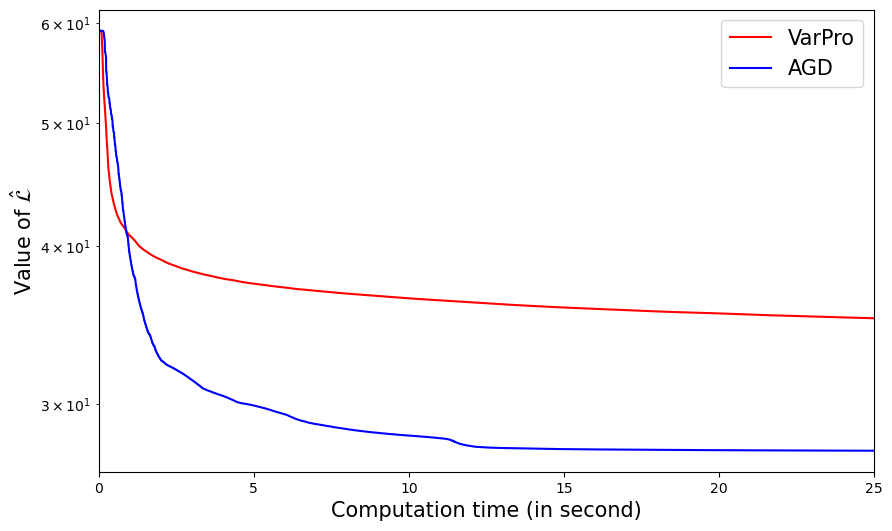}
% \caption{Training loss vs computation time for VarPro (red) and AGD (blue) for two random  initializations $B^0$.}
% \label{fig:losses}
% \end{figure}

%   \begin{figure}[htbp]
%     \centering
%     % First case
%     \includegraphics[width=0.28\textwidth]{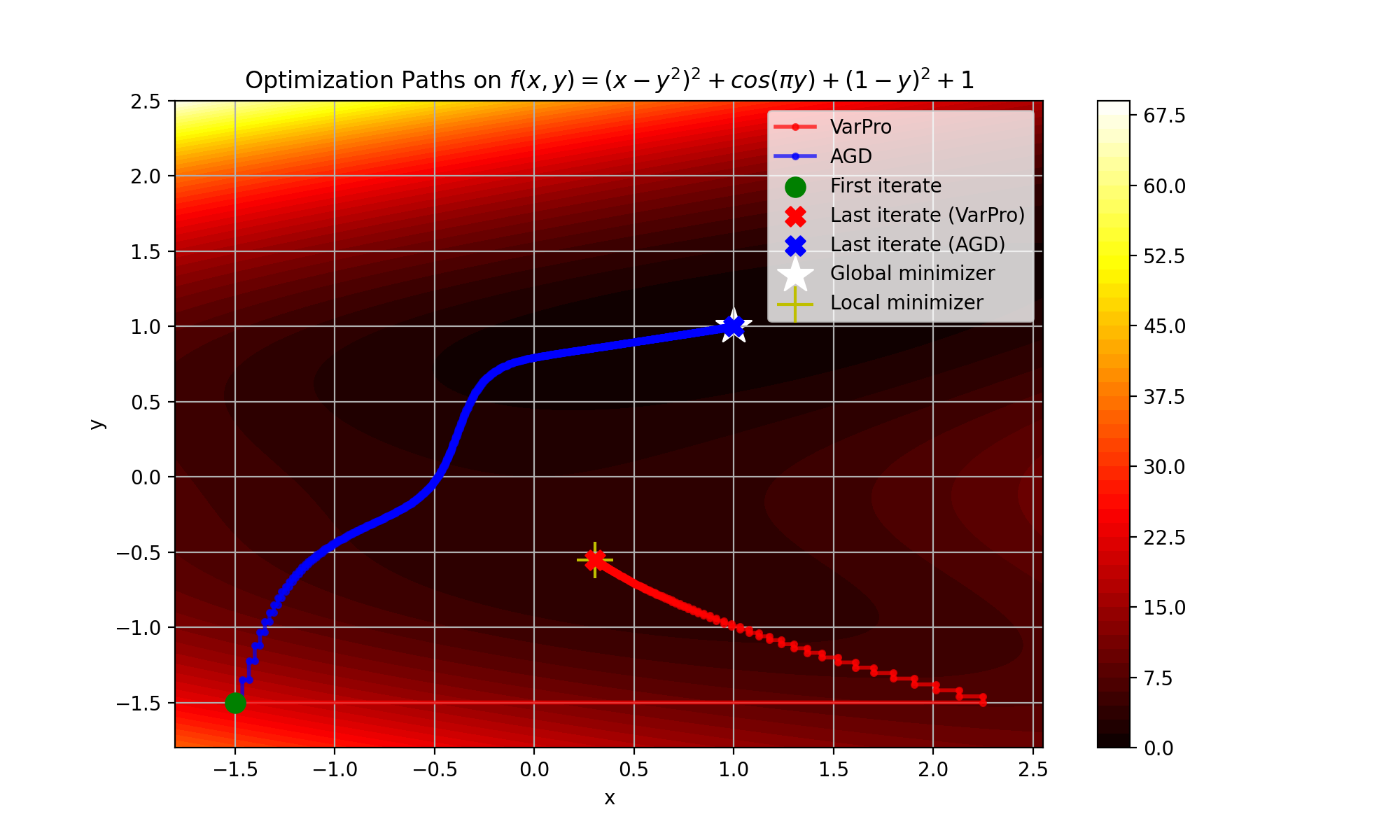}~
%     \includegraphics[width=0.22\textwidth]{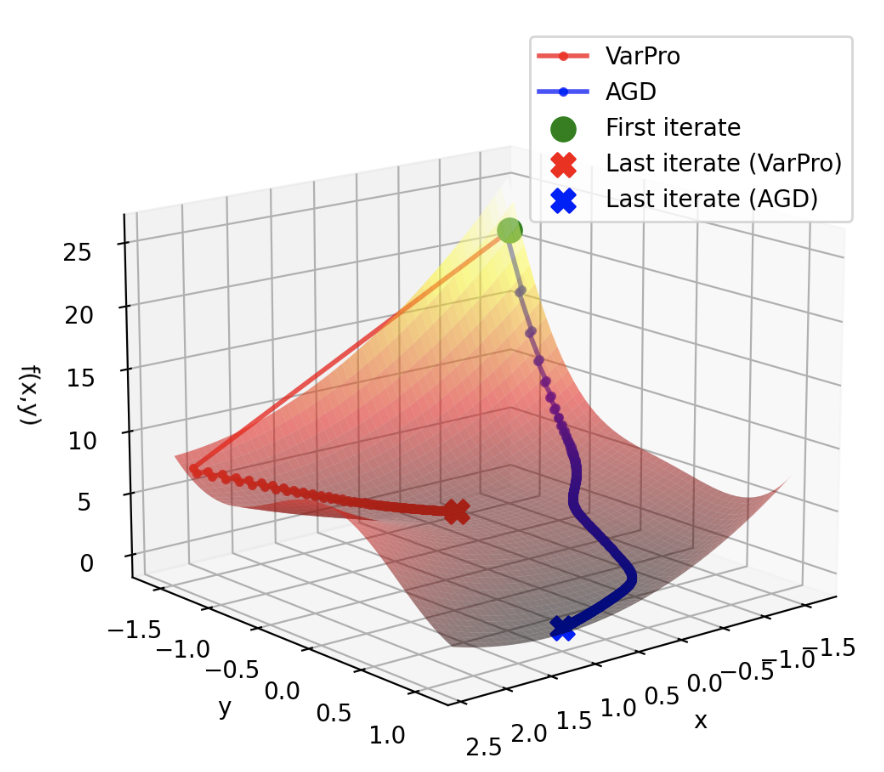}~
%     \includegraphics[width=0.28\textwidth]{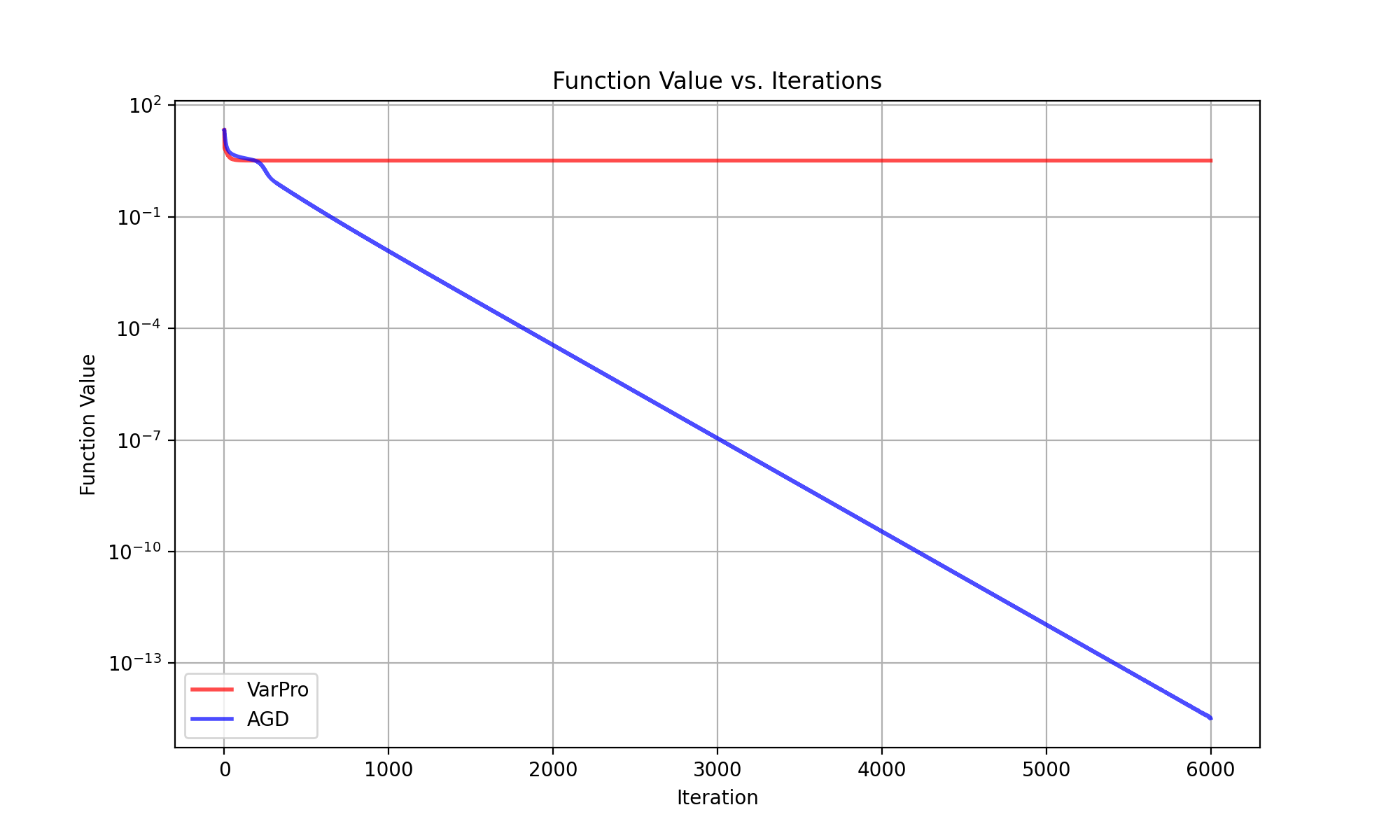}
%     % Second case
%     \includegraphics[width=0.28\textwidth]{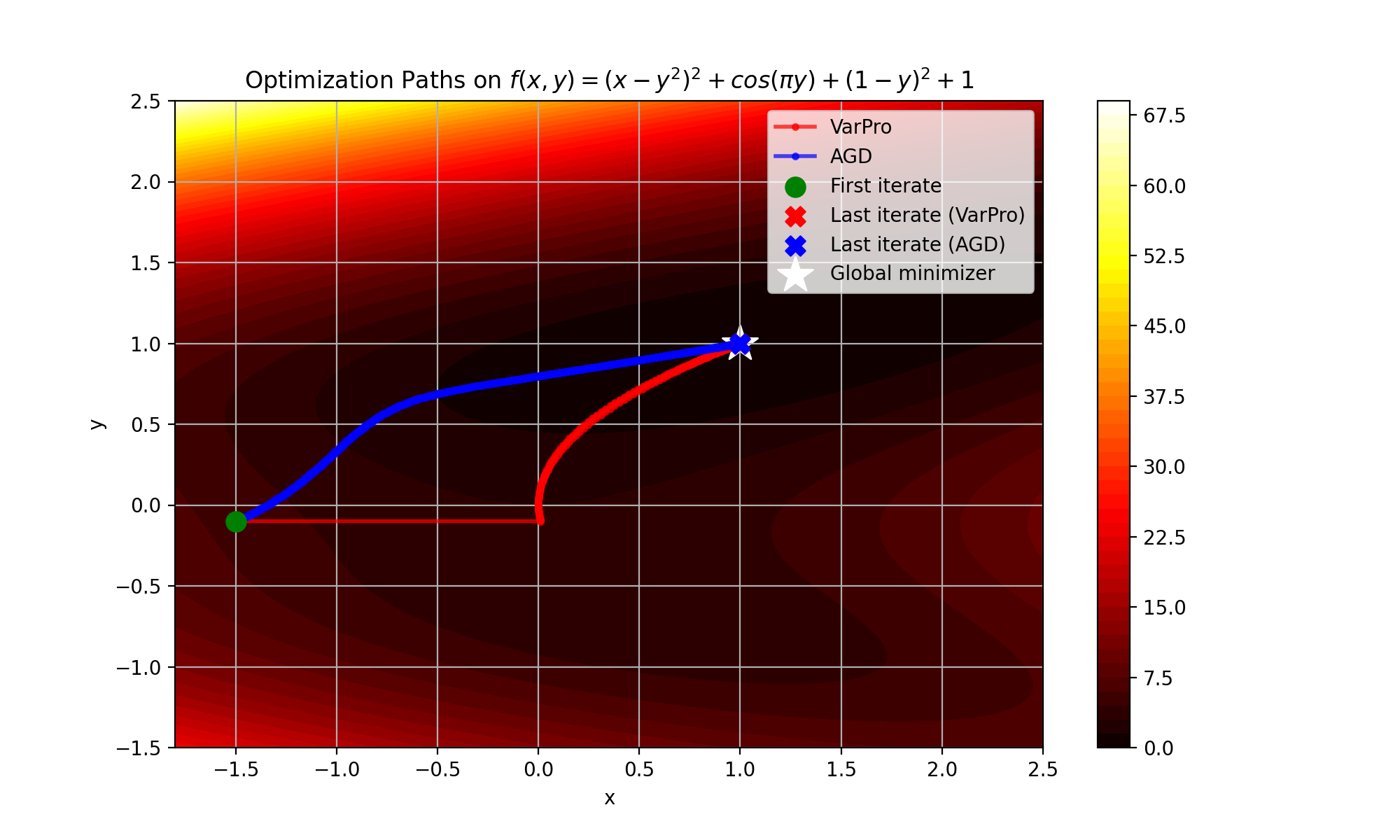}~
%     \includegraphics[width=0.28\textwidth]{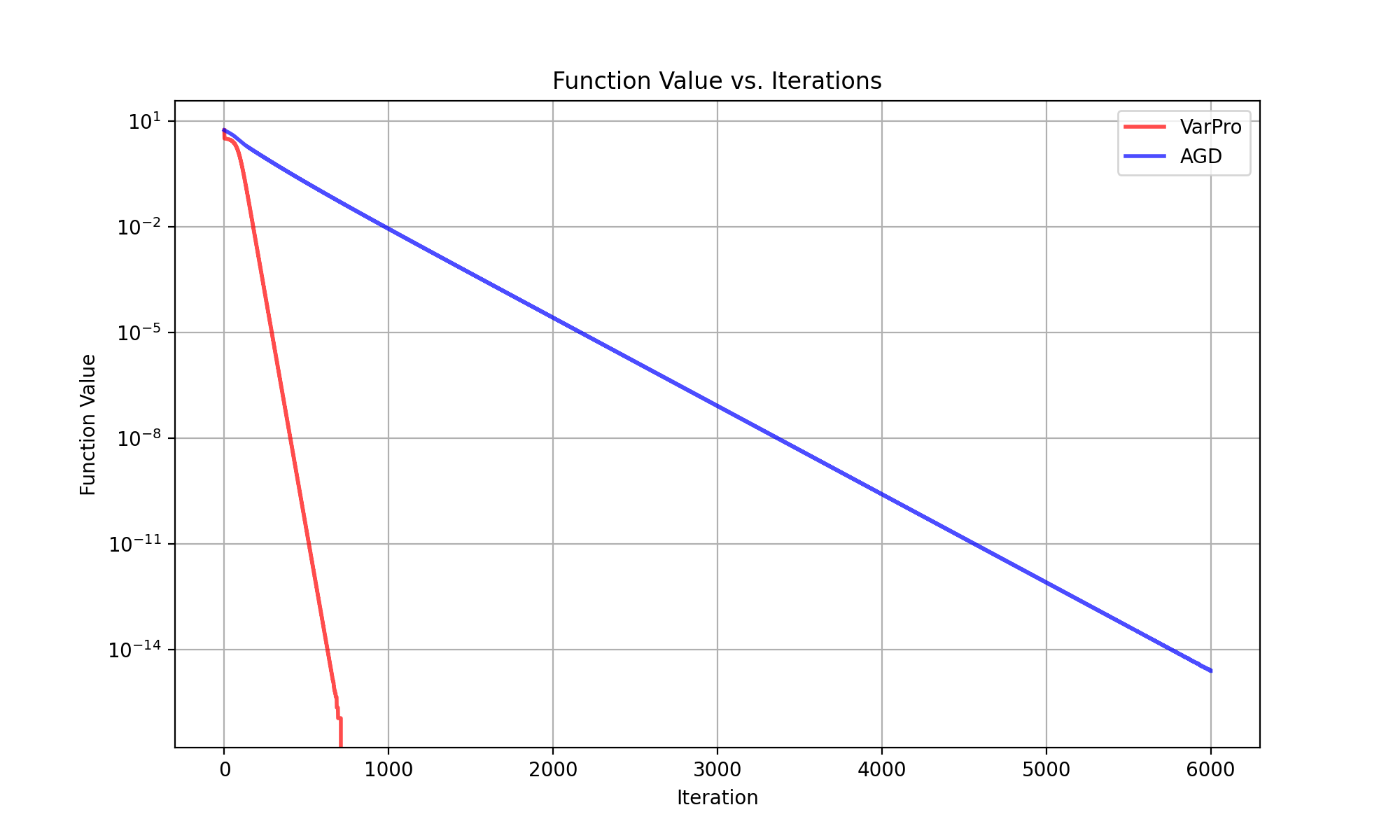}
%     \caption{Trajectories of the iterates of VarPro (in red) and AGD (in blue) for 
%     $f:(x,y)\mapsto (x-y^2)^2+\cos(\pi y)+(1-y)^2$ with different initialization points. 
%     Top row: $(x_0,y_0)=(-1.5,-1.5)$, where AGD escapes a local minimum while VarPro gets stuck. 
%     Bottom row: $(x_0,y_0)=(-1.5,-0.1)$, where both converge to (local or global) minima, and VarPro is significantly faster. 
%     Right panels show the loss values with respect to the number of iterations. Further details on this example can be found in Appendix \ref{app:2d_case}.}
%     \label{fig:path_y2_combined}
% \end{figure}

\begin{figure}[H]


\centering

% -------- Column 1: training losses --------
\begin{subfigure}{0.29\textwidth}
    \centering
    \includegraphics[width=\linewidth]{MIGD_better.png}\\[4pt]
    \includegraphics[width=\linewidth]{MIGD_stuck.png}
    \caption{}
    \label{fig:losses_col}
\end{subfigure}
\hfill
% -------- Column 2: first trajectory + loss --------
\begin{subfigure}{0.29\textwidth}
    \centering
    \includegraphics[width=\linewidth]{./2dpath_y2.png}\\[4pt]
    \includegraphics[width=\linewidth]{./loss_y2.png}
    \caption{}
    \label{fig:path_col1}
\end{subfigure}
\hfill
% -------- Column 3: second trajectory + loss --------
\begin{subfigure}{0.29\textwidth}
    \centering
    \includegraphics[width=\linewidth]{./2dpath_y2_MIGD.png}\\[4pt]
    \includegraphics[width=\linewidth]{./loss_y2_MIGD.png}
    \caption{}
    \label{fig:path_col2}
\end{subfigure}

\caption{Comparison between VarPro (red) and AGD (blue). 
(\subref{fig:losses_col}) Training losses across time for two random initializations of $B^0$. 
(\subref{fig:path_col1}) Two-dimensional toy example with initialization $(-1.5,-1.5)$: AGD escapes a local minimum where VarPro remains stuck. 
(\subref{fig:path_col2}) Initialization $(-1.5,-0.1)$: both methods converge to minima, with VarPro being significantly faster. 
See Appendix~\ref{app:2d_case} for additional details.}
\label{fig:losses_paths_3col}
\end{figure}

\paragraph{Initialization and hyper-parameter tuning.} 
To avoid the effect of poor initialization of $B$, 
the proposed strategy is to sample several matrices from $\mathcal{B}$, 
with 10 matrices sampled in the presented experiments. 
The initialization $B^0$ is then selected by cross-validation. 
This involves computing the coefficients that minimize $\hat{\mathcal{L}}(B,\alpha)$ 
and testing each pair of matrix and coefficients on a validation set. 
Since $\hat{\mathcal{L}}$ involves a regularization parameter $\lambda$, it must be initialized either by coupled cross-validation with $B^0$ or arbitrarily. 
For the Gaussian kernel used in these experiments, 
an additional scaling parameter $\gamma$ must also be tuned. 
We adopt the well-known heuristic
\(
\gamma = \frac{1}{2\tilde \mu^2},
\)
where $\tilde \mu = \text{median}\{\|B(x_i-x_j)\| : i \neq j\}$ 
is computed separately for each sampled matrix $B$.

\paragraph{On the role of the latent dimension.} 
Beyond the conventional hyper-parameters of KRR, HKRR introduces the latent dimension $d_*$. 
Since this value is unknown in practice, it is crucial to understand how its estimate $d$ affects performance. 
Figure~\ref{fig:varyingd_combined} shows that underestimating $d$ ($d<d_*$) severely reduces accuracy, 
while \textbf{overparameterization} is more robust: choosing $d>d_*$ often even outperforms the true value $d_*$. 
However, with a limited computational budget, very large $d$ may degrade approximation quality. 
For larger datasets, setting $d=20$ consistently yields better results than $d=d_*$,  whereas setting $d=D=50$ is inefficient due to the fixed budget.

% \begin{figure}[htbp]
% \centering
% \includegraphics[width=0.32\textwidth]{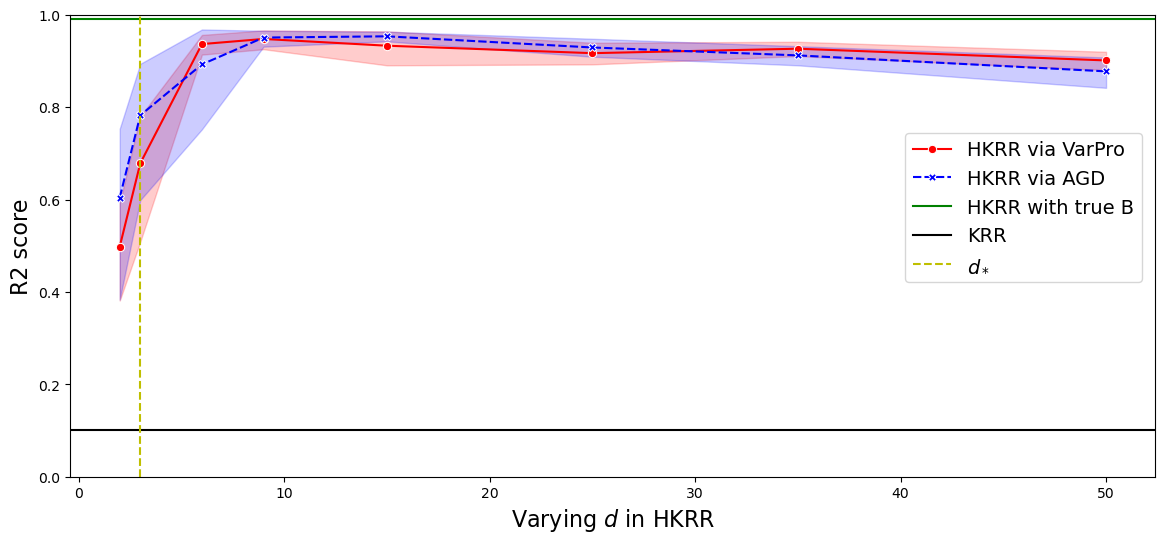}~
% \includegraphics[width=0.32\textwidth]{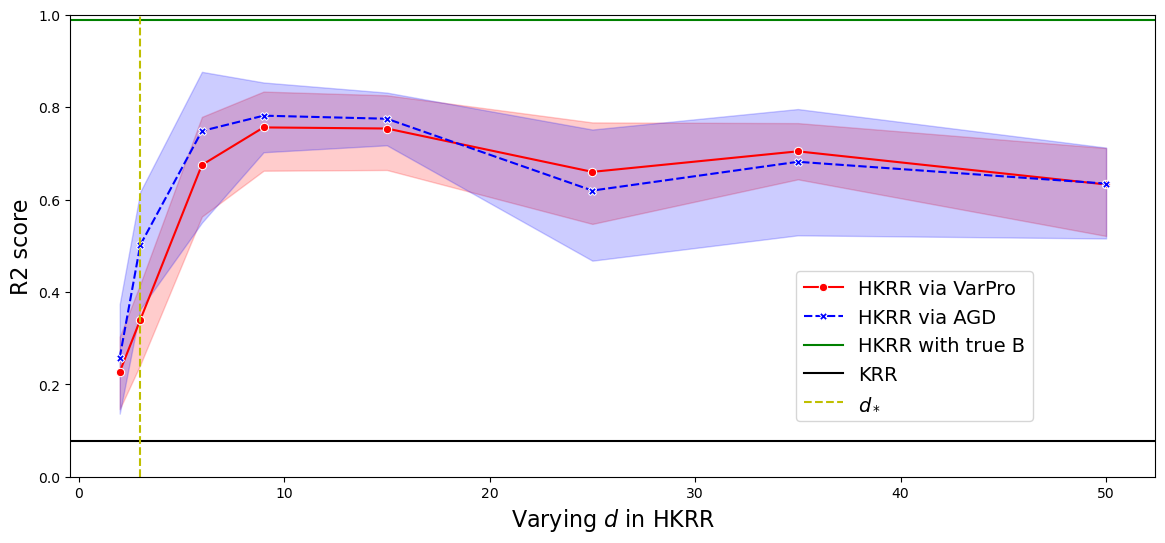}
% \includegraphics[width=0.32\textwidth]{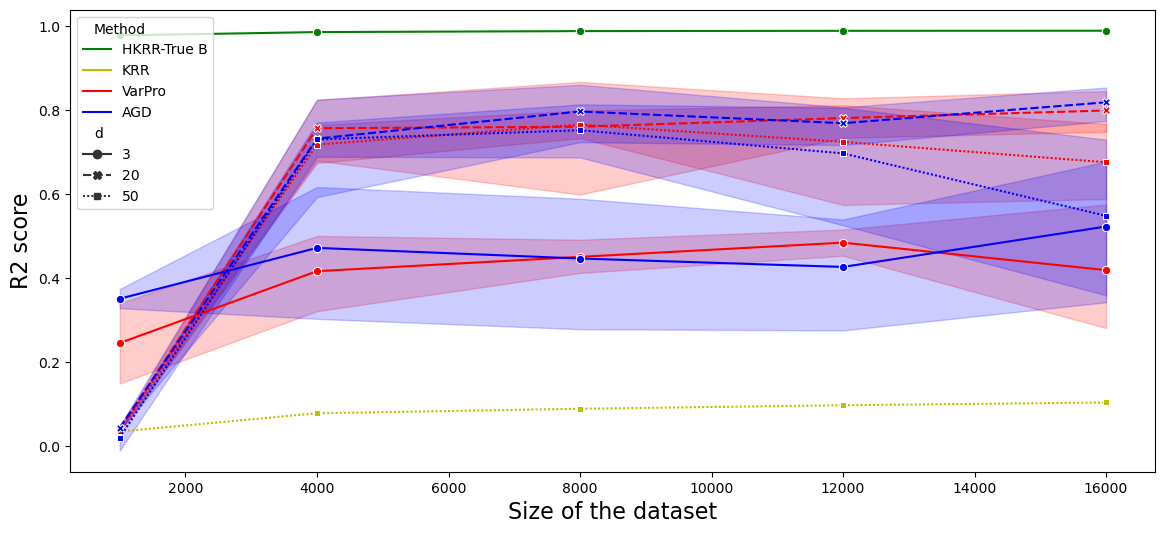}
% \caption{Test $R^2$ scores for $B$ and $\alpha$ learned by VarPro (red) and AGD (blue). 
% Left and middle: performance with respect to the parameter $d$ on Dataset~1 and Dataset~2, 
% with true latent dimension $d_*=3$ and ambient dimension $D=50$. 
% Right: performance for $d\in\{3,20,50\}$ as the training size increases.}
% \label{fig:varyingd_combined}
% \end{figure}

\begin{figure}[htbp]


\centering
\includegraphics[width=0.45\textwidth]{d_good.png}~
\includegraphics[width=0.45\textwidth]{d_goodish.png}\\[4pt]
\includegraphics[width=0.45\textwidth]{varying_number_points.png}
\caption{R2 score on test sets for $B$ and $\alpha$ learned by VarPro (red) and AGD (blue). 
Top: performance w.r.t.\ the parameter $d$ for Dataset~1 (left) and Dataset~2 (right), with true latent dimension $d_*=3$, $D=50$. 
Bottom: performance for $d\in\{3,20,50\}$ as the training size increases for Dataset 1. See Appendix \ref{app:setting_exp} for further details.}
\label{fig:varyingd_combined}
\end{figure}

\section{Conclusion}\label{sec:conclusion}
In this work, we investigated hyper-kernel ridge regression as a step towards exploring the compositional principle underlying deep learning. HKRR is an approach combining ideas from kernel methods and neural networks, related to recently proposed methods such as RFM.
Our main contribution is the analysis of the sample complexity of HKRR when learning MIMs.
Unlike standard KRR, HKRR can adapt to the MIM structure to escape the curse of dimensionality.
From an algorithmic perspective, we exploit the structure of HKRR to analyze two approaches, VarPro and AGD, drawing ideas from convex optimization and for which local convergence guarantees can be established. Numerical results illustrate and corroborate our findings. Altogether, these results suggest that HKRR can be regarded as a useful augmentation of kernel methods, and point to new directions for developing efficient algorithms that bridge kernel and neural network approaches.

A natural direction for future work is to refine our analysis to obtain sharper bounds.  It would be especially interesting to consider more general forms of compositional functions beyond MIMs, and see if kernel methods and neural network ideas can again be combined to provably learn such models.
% In this paper, we showcase the feature learning capability of hyper-kernel reproducing kernel Hilbert spaces by deriving the excess generalization error rates for our proposed HKRR approach. Specifically, we focus on hyper-kernels with a learnable linear representation to efficiently estimate the multi-index model (MIM). The HKRR method is designed to simultaneously learn the link function and the linear map, combining the analytical strengths of kernel methods and serving as an extension of classical kernel ridge regression (KRR). We demonstrate that the excess risk rates depend exponentially on the latent low dimension d, rather than the ambient dimension D, effectively breaking the curse of dimensionality and improving generalization in high-dimensional settings.

% \section*{Perspectives and Limitations}
% We conclude the limitations of our work and outline some directions for future research:

\section{Acknowledgment}
The research by E.D.V has been partially supported by the MIUR grant PRIN 202244A7YL, by the PNRR project “Harmonic Analysis and Optimization in Infinite-Dimensional Statistical Learning - Future Artificial Intelligence  Fair – Spoke 10" (CUP J33C24000410007) and  by the MIUR Excellence Department Project awarded to Dipartimento di Matematica, Universit\`a di Genova (CUP D33C23001110001). E.D.V. is a member of the Gruppo Nazionale per l’Analisi Matematica, la Probabilità e le loro Applicazioni (GNAMPA) of the Istituto Nazionale di Alta Matematica (INdAM). 
T.P. acknowledges support in part from the Center for Minds, Brains and Machines (CBMM), funded by NSF STC award CCF-1231216. 
L. R. acknowledges the financial support of: the European Commission (Horizon Europe
grant ELIAS 101120237), the Ministry of Education, University and Research (FARE grant ML4IP R205T7J2KP), the European Research Council (grant SLING 819789), the US Air Force Office of Scientific Research (FA8655-22-1-7034), the Ministry of Education, the grant BAC FAIR PE00000013 funded by the EU - NGEU and the MIUR grant (PRIN 202244A7YL). This work represents only the view of the authors. The European Commission and the other organizations are not responsible for any use that may be made of the information it contains.

\newpage
\clearpage

\bibliographystyle{abbrv}
\bibliography{biblio}

\begin{thebibliography}{10}

\bibitem{abbe2023sgd}
E.~Abbe, E.~B. Adsera, and T.~Misiakiewicz.
\newblock Sgd learning on neural networks: leap complexity and saddle-to-saddle dynamics.
\newblock In {\em The Thirty Sixth Annual Conference on Learning Theory}, pages 2552--2623. PMLR, 2023.

\bibitem{aronszajn1950theory}
N.~Aronszajn.
\newblock Theory of reproducing kernels.
\newblock {\em Transactions of the American Mathematical Society}, 68(3):337--404, 1950.

\bibitem{arous2021online}
G.~B. Arous, R.~Gheissari, and A.~Jagannath.
\newblock Online stochastic gradient descent on non-convex losses from high-dimensional inference.
\newblock {\em Journal of Machine Learning Research}, 22(106):1--51, 2021.

\bibitem{attouch2013convergence}
H.~Attouch, J.~Bolte, and B.~F. Svaiter.
\newblock Convergence of descent methods for semi-algebraic and tame problems: proximal algorithms, forward--backward splitting, and regularized gauss--seidel methods.
\newblock {\em Mathematical programming}, 137(1):91--129, 2013.

\bibitem{ba2022high}
J.~Ba, M.~A. Erdogdu, T.~Suzuki, Z.~Wang, D.~Wu, and G.~Yang.
\newblock High-dimensional asymptotics of feature learning: How one gradient step improves the representation.
\newblock {\em Advances in Neural Information Processing Systems}, 35:37932--37946, 2022.

\bibitem{bach2017breaking}
F.~Bach.
\newblock Breaking the curse of dimensionality with convex neural networks.
\newblock {\em The Journal of Machine Learning Research}, 18(1):629--681, 2017.

\bibitem{barron2002universal}
A.~R. Barron.
\newblock Universal approximation bounds for superpositions of a sigmoidal function.
\newblock {\em IEEE Transactions on Information theory}, 39(3):930--945, 2002.

\bibitem{bartlett2005local}
P.~L. Bartlett, O.~Bousquet, and S.~Mendelson.
\newblock Local rademacher complexities.
\newblock {\em Annals of Statistics}, 33(4):1497--1537, 2005.

\bibitem{carl1990en}
C.~Bernd and S.~Irmtraud.
\newblock {\em Entropy, Compactness and the Approximation of Operators}.
\newblock Cambridge University Press, 1990.

\bibitem{bienenstock1998compositionality}
E.~Bienenstock and S.~Geman.
\newblock Compositionality in neural systems.
\newblock In {\em The handbook of brain theory and neural networks}, pages 223--226. 1998.

\bibitem{bietti2025learning}
A.~Bietti, J.~Bruna, and L.~Pillaud-Vivien.
\newblock On learning gaussian multi-index models with gradient flow part i: General properties and two-timescale learning.
\newblock {\em Communications on Pure and Applied Mathematics}, 2025.

\bibitem{bolte2014proximal}
J.~Bolte, S.~Sabach, and M.~Teboulle.
\newblock Proximal alternating linearized minimization for nonconvex and nonsmooth problems.
\newblock {\em Mathematical Programming}, 146(1):459--494, 2014.

\bibitem{bruna2025survey}
J.~Bruna and D.~Hsu.
\newblock Survey on algorithms for multi-index models.
\newblock {\em arXiv preprint arXiv:2504.05426}, 2025.

\bibitem{calatroni2019backtracking}
L.~Calatroni and A.~Chambolle.
\newblock Backtracking strategies for accelerated descent methods with smooth composite objectives.
\newblock {\em SIAM journal on optimization}, 29(3):1772--1798, 2019.

\bibitem{caponnetto2007optimal}
A.~Caponnetto and E.~De~Vito.
\newblock Optimal rates for the regularized least-squares algorithm.
\newblock {\em Foundations of Computational Mathematics}, 7:331--368, 2007.

\bibitem{caponnetto2010cross}
A.~Caponnetto and Y.~Yao.
\newblock Cross-validation based adaptation for regularization operators in learning theory.
\newblock {\em Analysis and Applications}, 8(02):161--183, 2010.

\bibitem{chen2023kernel}
Y.~Chen, Y.~Li, K.~Liu, and F.~Ruan.
\newblock Kernel learning in ridge regression "automatically" yields exact low rank solution.
\newblock {\em arXiv preprint arXiv:2310.11736}, 2023.

\bibitem{chiaromonte2002sufficient}
F.~Chiaromonte and R.~D. Cook.
\newblock Sufficient dimension reduction and graphics in regression.
\newblock {\em Annals of the Institute of Statistical Mathematics}, 54:768--795, 2002.

\bibitem{chirinos2024learning}
J.~Chirinos-Rodr{\'\i}guez, E.~De~Vito, C.~Molinari, L.~Rosasco, and S.~Villa.
\newblock On learning the optimal regularization parameter in inverse problems.
\newblock {\em Inverse Problems}, 40(12):125004, 2024.

\bibitem{cucker2007learning}
F.~Cucker and D.~X. Zhou.
\newblock {\em Learning theory: an approximation theory viewpoint}, volume~24.
\newblock Cambridge University Press, 2007.

\bibitem{dahmen2025compositional}
W.~Dahmen.
\newblock Compositional sparsity, approximation classes, and parametric transport equations.
\newblock {\em Constructive Approximation}, pages 1--65, 2025.

\bibitem{damian2023smoothing}
A.~Damian, E.~Nichani, R.~Ge, and J.~D. Lee.
\newblock Smoothing the landscape boosts the signal for sgd: Optimal sample complexity for learning single index models.
\newblock {\em Advances in Neural Information Processing Systems}, 36:752--784, 2023.

\bibitem{dandi2023two}
Y.~Dandi, F.~Krzakala, B.~Loureiro, L.~Pesce, and L.~Stephan.
\newblock How two-layer neural networks learn, one (giant) step at a time.
\newblock {\em arXiv preprint arXiv:2305.18270}, 2023.

\bibitem{de2021regularization}
E.~De~Vito, L.~Rosasco, and A.~Rudi.
\newblock Regularization: From inverse problems to large-scale machine learning.
\newblock {\em Harmonic and Applied Analysis: From Radon Transforms to Machine Learning}, pages 245--296, 2021.

\bibitem{devroye2013probabilistic}
L.~Devroye, L.~Gy{\"o}rfi, and G.~Lugosi.
\newblock {\em A probabilistic theory of pattern recognition}, volume~31.
\newblock Springer Science \& Business Media, 2013.

\bibitem{donoho2000high}
D.~L. Donoho et~al.
\newblock High-dimensional data analysis: The curses and blessings of dimensionality.
\newblock {\em AMS math challenges lecture}, 1(2000):32, 2000.

\bibitem{drineas2012fast}
P.~Drineas, M.~Magdon-Ismail, M.~W. Mahoney, and D.~P. Woodruff.
\newblock Fast approximation of matrix coherence and statistical leverage.
\newblock {\em The Journal of Machine Learning Research}, 13(1):3475--3506, 2012.

\bibitem{feng2023generalization}
H.~Feng, S.~Huang, and D.-X. Zhou.
\newblock Generalization analysis of cnns for classification on spheres.
\newblock {\em IEEE transactions on neural networks and learning systems}, 34(9):6200--6213, 2023.

\bibitem{follain2024enhanced}
B.~Follain and F.~Bach.
\newblock Enhanced feature learning via regularisation: Integrating neural networks and kernel methods.
\newblock {\em arXiv preprint arXiv:2407.17280}, 2024.

\bibitem{fukumizu2009kernel}
K.~Fukumizu, F.~R. Bach, and M.~I. Jordan.
\newblock Kernel dimension reduction in regression.
\newblock {\em The Annals of Statistics}, pages 1871--1905, 2009.

\bibitem{fukumizu2014gradient}
K.~Fukumizu and C.~Leng.
\newblock Gradient-based kernel dimension reduction for regression.
\newblock {\em Journal of the American Statistical Association}, 109(505):359--370, 2014.

\bibitem{fukushima1980neocognitron}
K.~Fukushima.
\newblock Neocognitron: A self-organizing neural network model for a mechanism of pattern recognition unaffected by shift in position.
\newblock {\em Biological cybernetics}, 36(4):193--202, 1980.

\bibitem{golub2003separable}
G.~Golub and V.~Pereyra.
\newblock Separable nonlinear least squares: the variable projection method and itsapplications.
\newblock {\em Inverse problems}, 19(2):R1, 2003.

\bibitem{golub1973}
G.~H. Golub and V.~Pereyra.
\newblock The differentiation of pseudo-inverses and nonlinear least squares problems whose variables separate.
\newblock {\em SIAM Journal on Numerical Analysis}, 10(2):413--20, 04 1973.

\bibitem{gyorfi2002distribution}
L.~Gy{\"o}rfi, M.~Kohler, A.~Krzy{\.z}ak, and H.~Walk.
\newblock {\em A distribution-free theory of nonparametric regression}.
\newblock Springer, 2002.

\bibitem{hristache2001direct}
M.~Hristache, A.~Juditsky, and V.~Spokoiny.
\newblock Direct estimation of the index coefficient in a single-index model.
\newblock {\em Annals of Statistics}, pages 595--623, 2001.

\bibitem{hubel1962receptive}
D.~H. Hubel and T.~N. Wiesel.
\newblock Receptive fields, binocular interaction and functional architecture in the cat's visual cortex.
\newblock {\em The Journal of physiology}, 160(1):106, 1962.

\bibitem{klock2021estimating}
T.~Klock, A.~Lanteri, and S.~Vigogna.
\newblock Estimating multi-index models with response-conditional least squares.
\newblock {\em Electronic Journal of Statistics}, 15(1):589--629, 2021.

\bibitem{kohler2021rate}
M.~Kohler and S.~Langer.
\newblock On the rate of convergence of fully connected deep neural network regression estimates.
\newblock {\em The Annals of Statistics}, 49(4):2231--2249, 2021.

\bibitem{lecun1989backpropagation}
Y.~LeCun, B.~Boser, J.~S. Denker, D.~Henderson, R.~E. Howard, W.~Hubbard, and L.~D. Jackel.
\newblock Backpropagation applied to handwritten zip code recognition.
\newblock {\em Neural computation}, 1(4):541--551, 1989.

\bibitem{lecun2002gradient}
Y.~LeCun, L.~Bottou, Y.~Bengio, and P.~Haffner.
\newblock Gradient-based learning applied to document recognition.
\newblock {\em Proceedings of the IEEE}, 86(11):2278--2324, 2002.

\bibitem{liu2021generalization}
F.~Liu, L.~Shi, X.~Huang, J.~Yang, and J.~A. Suykens.
\newblock Generalization properties of hyper-rkhs and its applications.
\newblock {\em Journal of Machine Learning Research}, 22(140):1--38, 2021.

\bibitem{lu2019pa}
S.~Lu, M.~Hong, and Z.~Wang.
\newblock Pa-gd: On the convergence of perturbed alternating gradient descent to second-order stationary points for structured nonconvex optimization.
\newblock In {\em International Conference on Machine Learning}, pages 4134--4143. PMLR, 2019.

\bibitem{mallat2012group}
S.~Mallat.
\newblock Group invariant scattering.
\newblock {\em Communications on Pure and Applied Mathematics}, 65(10):1331--1398, 2012.

\bibitem{mao2023approximating}
T.~Mao, Z.~Shi, and D.-X. Zhou.
\newblock Approximating functions with multi-features by deep convolutional neural networks.
\newblock {\em Analysis and Applications}, 21(01):93--125, 2023.

\bibitem{mei2021learning}
S.~Mei, T.~Misiakiewicz, and A.~Montanari.
\newblock Learning with invariances in random features and kernel models.
\newblock In {\em Conference on Learning Theory}, pages 3351--3418. PMLR, 2021.

\bibitem{mhaskar2017and}
H.~Mhaskar, Q.~Liao, and T.~Poggio.
\newblock When and why are deep networks better than shallow ones?
\newblock In {\em Proceedings of the AAAI conference on artificial intelligence}, volume~31, 2017.

\bibitem{moniri2023theory}
B.~Moniri, D.~Lee, H.~Hassani, and E.~Dobriban.
\newblock A theory of non-linear feature learning with one gradient step in two-layer neural networks.
\newblock {\em arXiv preprint arXiv:2310.07891}, 2023.

\bibitem{mousavi2022neural}
A.~Mousavi-Hosseini, S.~Park, M.~Girotti, I.~Mitliagkas, and M.~A. Erdogdu.
\newblock Neural networks efficiently learn low-dimensional representations with {sgd}.
\newblock {\em arXiv preprint arXiv:2209.14863}, 2022.

\bibitem{poggir90}
T.~Poggio and F.~Girosi.
\newblock Networks for approximation and learning.
\newblock {\em Proceedings of the IEEE}, 78(9):1481--1497, 1990.

\bibitem{poggio2017and}
T.~Poggio, H.~Mhaskar, L.~Rosasco, B.~Miranda, and Q.~Liao.
\newblock Why and when can deep-but not shallow-networks avoid the curse of dimensionality: a review.
\newblock {\em International Journal of Automation and Computing}, 14(5):503--519, 2017.

\bibitem{poon2023smooth}
C.~Poon and G.~Peyr{\'e}.
\newblock Smooth over-parameterized solvers for non-smooth structured optimization.
\newblock {\em Mathematical programming}, 201(1):897--952, 2023.

\bibitem{radhakrishnan2022feature}
A.~Radhakrishnan, D.~Beaglehole, P.~Pandit, and M.~Belkin.
\newblock Feature learning in neural networks and kernel machines that recursively learn features.
\newblock {\em arXiv preprint arXiv:2212.13881}, 2022.

\bibitem{radhakrishnan2024mechanism}
A.~Radhakrishnan, D.~Beaglehole, P.~Pandit, and M.~Belkin.
\newblock Mechanism for feature learning in neural networks and backpropagation-free machine learning models.
\newblock {\em Science}, 383(6690):1461--1467, 2024.

\bibitem{radhakrishnan2024linear}
A.~Radhakrishnan, M.~Belkin, and D.~Drusvyatskiy.
\newblock Linear recursive feature machines provably recover low-rank matrices.
\newblock {\em arXiv preprint arXiv:2401.04553}, 2024.

\bibitem{riesenhuber1999hierarchical}
M.~Riesenhuber and T.~Poggio.
\newblock Hierarchical models of object recognition in cortex.
\newblock {\em Nature neuroscience}, 2(11):1019--1025, 1999.

\bibitem{rudi2015less}
A.~Rudi, R.~Camoriano, and L.~Rosasco.
\newblock Less is more: Nystr{\"o}m computational regularization.
\newblock {\em Advances in neural information processing systems}, 28, 2015.

\bibitem{samarov1993exploring}
A.~M. Samarov.
\newblock Exploring regression structure using nonparametric functional estimation.
\newblock {\em Journal of the American Statistical Association}, 88(423):836--847, 1993.

\bibitem{schmidt2020nonparametric}
J.~Schmidt-Hieber.
\newblock Nonparametric regression using deep neural networks with relu activation function.
\newblock {\em The Annals of Statistics}, 48(4):1875--1897, 2020.

\bibitem{scholkopf2001generalized}
B.~Sch{\"o}lkopf, R.~Herbrich, and A.~J. Smola.
\newblock A generalized representer theorem.
\newblock In {\em International conference on computational learning theory}, pages 416--426. Springer, 2001.

\bibitem{serre2007feedforward}
T.~Serre, A.~Oliva, and T.~Poggio.
\newblock A feedforward architecture accounts for rapid categorization.
\newblock {\em Proceedings of the national academy of sciences}, 104(15):6424--6429, 2007.

\bibitem{steinwart2008support}
I.~Steinwart and A.~Christmann.
\newblock {\em Support vector machines}.
\newblock Springer Science \& Business Media, 2008.

\bibitem{stone1982optimal}
C.~J. Stone.
\newblock Optimal global rates of convergence for nonparametric regression.
\newblock {\em The annals of statistics}, pages 1040--1053, 1982.

\bibitem{vapnik2013nature}
V.~Vapnik.
\newblock {\em The nature of statistical learning theory}.
\newblock Springer science \& business media, 2013.

\bibitem{vaswani2017attention}
A.~Vaswani, N.~Shazeer, N.~Parmar, J.~Uszkoreit, L.~Jones, A.~N. Gomez, {\L}.~Kaiser, and I.~Polosukhin.
\newblock Attention is all you need.
\newblock {\em Advances in neural information processing systems}, 30, 2017.

\bibitem{ward2023convergence}
R.~Ward and T.~Kolda.
\newblock Convergence of alternating gradient descent for matrix factorization.
\newblock {\em Advances in Neural Information Processing Systems}, 36:22369--22382, 2023.

\bibitem{williams2006gaussian}
C.~K. Williams and C.~E. Rasmussen.
\newblock {\em Gaussian processes for machine learning}, volume~2.
\newblock MIT press Cambridge, MA, 2006.

\bibitem{xia2002adaptive}
Y.~Xia, H.~Tong, W.~K. Li, and L.-X. Zhu.
\newblock An adaptive estimation of dimension reduction space.
\newblock {\em Journal of the Royal Statistical Society Series B: Statistical Methodology}, 64(3):363--410, 2002.

\bibitem{yuille2006vision}
A.~Yuille and D.~Kersten.
\newblock Vision as bayesian inference: analysis by synthesis?
\newblock {\em Trends in cognitive sciences}, 10(7):301--308, 2006.

\bibitem{zhang2023mathematical}
T.~Zhang.
\newblock {\em Mathematical analysis of machine learning algorithms}.
\newblock Cambridge University Press, 2023.

\bibitem{zhou2002covering}
D.-X. Zhou.
\newblock The covering number in learning theory.
\newblock {\em Journal of Complexity}, 18(3):739--767, 2002.

\bibitem{zhou2020theory}
D.-X. Zhou.
\newblock Theory of deep convolutional neural networks: Downsampling.
\newblock {\em Neural Networks}, 124:319--327, 2020.

\bibitem{zhu2025iteratively}
L.~Zhu, D.~Davis, D.~Drusvyatskiy, and M.~Fazel.
\newblock Iteratively reweighted kernel machines efficiently learn sparse functions.
\newblock {\em arXiv preprint arXiv:2505.08277}, 2025.

\end{thebibliography}

\newpage
\appendix
 
\clearpage

% \ern{Revised by Ernesto\\
% \noindent\rule[0.5ex]{\linewidth}{1pt}
% }

\section{Preliminary lemmas and basic error bounds for Theorem \ref{thm2}}\label{subsec: proofthms}

In this appendix, we prove Theorem \ref{thm2} and some accompanying results. Many of these results hold true under weaker conditions than Assumption~\ref{ass:0}, so we treat these results and conditions separately.  The proof of Theorem \ref{thm2} is given in Subsection \ref{subsec:proof1}.

In the following, if   $S$ is a compact space, the Banach space of continuous functions on $S$ endowed with the sup norm $\|\cdot\|_\infty$ is denoted by $C(S)$.  We also need to recall some basic quantities and fact associated to every RKHS.

\subsection{RKHS and related operators }\label{sec:RKHS} 

We recall that if $k$ is continuous and bounded, then the following operators are 
well defined, bounded, and positive: 
\begin{enumerate}[left=5pt,label=\alph*)]
\item The integral operator $L_k: L_2(X,\rho_X) \to L_2(X,\rho_X)$ 
\begin{equation*}
    L_k(g)(x) = \int_X k(x,x') g(x') \, \md \rho_X(x'), 
    \quad g \in L_2(X,\rho_X).
\end{equation*}

\item The covariance operator $\Sigma: \CH \to \CH$
\begin{equation}\label{eq:covariance_op}
\Sigma f = \int_X \langle f, k_x\rangle_\CH \, k_x \, \md \rho_X(x)
= \left[ \int_X (k_x \otimes k_x) \, \md \rho_X(x)\right] f, 
\quad f \in \CH,
\end{equation}
where for all $x \in X$, $k_x := k(x,\cdot) \in \CH$, and 
$(k_x \otimes k_x): \CH \to \CH$ is the positive rank-one operator
\[
(k_x \otimes k_x)(f) = \langle f, k_x\rangle_\CH \, k_x.
\]
\end{enumerate}

Moreover, the relationship between the $L_2(\rho_X)$ norm and the RKHS norm is given by: 
for $g \in \CH$,
\begin{equation}\label{equ:l2_H}
\|g\|_{\rho_X}^2 
= \|\Sigma^{\frac{1}{2}} g\|_\CH^2,
\end{equation}
where $\Sigma^{\frac{1}{2}}$ is the square root of the positive operator $\Sigma$, 
defined via spectral calculus.

\subsection{Covering number of composite classes and hyper RKHS}

Let $V$ be a vector space endowed with a norm $\|\cdot\|_V$. 
The ball of radius $R$ and centered at the origin is denoted by 
$\mathbb B_{V,R}=\{f \in V : \|f\|_V \leq R\}$. 
Given a subset $\CG \subset V$ with compact closure, for all $\epsilon>0$, 
$\CN_V(\CG, \epsilon)$ is the covering number of $\CG$, defined as the minimal 
$J \in \mathbb{N}$ such that there exist $g_1,\ldots,g_J \in \CG$ satisfying 
$\CG = \bigcup_{j=1}^J \{g \in \CG : \|g-g_j\|_V \leq \epsilon\}$.  

If $\CH$ is an RKHS on $S$ with a continuous kernel, then $\CH$ is a subspace of $C(S)$, 
and its ball of radius $R$ is compact in $C(S)$~\citep{cucker2007learning}. 
We denote by $\CN(\mathbb B_{\CH,R}, \epsilon)$ the corresponding covering number, omitting the index $C(S)$ for simplicity.
We need the following condition on the data space. 

\begin{ass}\label{ass:bound}
The input space $X$ is a compact subset of $\R^D$ such that 
$\sup_{x \in X} |x| \leq 1$, and for some $M > 0$, $|y| \leq M$. 
\end{ass}

The assumption that $X$ is bounded is needed to control the covering number, 
see Lemma~\ref{lem:coveringNo}. 
The assumption that the outputs are bounded implies 
$| f_*(x) = \int_Y y \, \md \rho(y|x) | \leq M$. 

Given an integer $d$, we  recall that 
$\CB = \{B \in \R^{d \times D} : \|B\|_\infty \le 1\}$,
so that 
\[
\Omega=\{ B x\in\R^d \mid x\in X, B \in \CB\} \subset \R^d
\]
is compact,  since the map $(x,B)\mapsto Bx$ is continuous and $X\times \CB$ is compact. 

We impose the following condition on the mother RKHS. 
\begin{ass}\label{ass:1-1}
The mother space $\CH$ is an RKHS on $\Omega$ with a continuous kernel $k$, and
for all $g \in \CH$,
\[
|g(x') - g(x)| \leq C_{\CH} \, \|g\|_{\CH} \, \|x' - x\|, 
\qquad x,x' \in \Omega,
\]
for some constant $C_{\CH} > 0$. 
\end{ass}
If $k$ is defined on an open set $U \supset \Omega \times \Omega$ and $k \in C^1(U)$, 
then Assumption~\ref{ass:1-1} always holds. 
The above assumption states that the elements of $\CH$ are Lipschitz functions 
with a Lipschitz constant that is uniform on any ball of $\CH$. 
Furthermore, for all $g \in \CH$,
\[
\|g\|_{\infty} \leq \kappa^{\frac{1}{2}} \|g\|_{\CH},
\]
where $\kappa = \sup_{x \in \Omega} k(x,x)$, which is finite since $\Omega$ is compact.  

Recall that, for any $B \in \CB$, the hypothesis space $\CH_B$ is the RKHS with reproducing kernel 
\[
k_B(x, x') = k\bigl(Bx, Bx'\bigr), \quad x, x' \in X,
\]
which is continuous and bounded by $\kappa$. Hence $\CH_B \subset C(X)$ and, 
for all $f \in \CH_B$,
\begin{equation}\label{eq:6}
\|f\|_{\infty} \leq \kappa^{\frac{1}{2}} \|f\|_B,
\end{equation}
where $\|f\|_B = \|f\|_{\CH_B}$ and $\mathbb B_{B,R} = \mathbb B_{\CH_B,R}$. 

Moreover, it holds that
\[
\CH_B = \{ f : X \to \R \mid f = g \circ B \text{ for some } g \in \CH \},
\]
and
\[
\|f\|_B = \min \{ \|g\|_{\CH} \mid f = g \circ B, \, g \in \CH \}.
\]
Since the minimum is achieved, for every $f \in \mathbb B_{B,R}$ there exists $g \in \mathbb B_{\CH,R}$ such that $f = g \circ B$.

The following lemma provides a bound on the covering number of $\mathop{\bigcup}\limits_{B \in \CB} \mathbb B_{B,R} \subset C(X)$. 

\begin{lem}\label{lem:coveringNo}
Assume~\ref{ass:bound} and~\ref{ass:1-1}. Fix $\epsilon>0$ and $R>0$. Then  
\begin{equation}\label{equ: cv}
    \CN\left(\mathop{\bigcup}\limits_{B \in \CB} \mathbb B_{B,R}, \epsilon\right) 
    \leq \left(\frac{6C_{\CH}R}{\epsilon}\right)^{Dd} 
    \CN\!\left(\mathbb B_{\CH,R},\tfrac{\epsilon}{2}\right).
\end{equation}
\end{lem}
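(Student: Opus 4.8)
The plan is to build the desired cover by composing two separate $\epsilon/2$-covers: one for the linear maps $B\in\CB$ (a finite-dimensional set, so a standard volumetric argument applies), and one for the ball $\mathbb B_{\CH,R}$ in the mother RKHS. The key structural fact, recorded just before the lemma, is that every $f\in\mathbb B_{B,R}$ factors as $f=g\circ B$ with $g\in\mathbb B_{\CH,R}$. So given $f=g\circ B$, I would first approximate $B$ by some $B_i$ from a finite $\epsilon'$-net of $\CB$ in the operator norm $\|\cdot\|_\infty$, and then approximate $g$ by some $g_j$ from a finite $(\epsilon/2)$-net of $\mathbb B_{\CH,R}$ in $C(\Omega)$, so that $f$ is close to $g_j\circ B_i$.

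The error decomposition is the heart of the estimate:
\begin{equation*}
\|g\circ B - g_j\circ B_i\|_\infty \le \|g\circ B - g\circ B_i\|_\infty + \|g\circ B_i - g_j\circ B_i\|_\infty.
\end{equation*}
For the second term, since $B_i x\in\Omega$ for all $x\in X$, it is bounded by $\|g-g_j\|_{C(\Omega)}\le \epsilon/2$. For the first term, I would invoke the uniform Lipschitz property of $\CH$ (Assumption~\ref{ass:1-1}): for each $x\in X$, $|g(Bx)-g(B_ix)|\le C_\CH\|g\|_\CH\|Bx-B_ix\|\le C_\CH R\,\|B-B_i\|_\infty\|x\|\le C_\CH R\,\epsilon'$, using $\|x\|\le 1$. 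Choosing $\epsilon' = \epsilon/(2C_\CH R)$ makes this at most $\epsilon/2$, so the total is at most $\epsilon$. Hence the collection $\{g_j\circ B_i\}$ is an $\epsilon$-cover of $\bigcup_{B\in\CB}\mathbb B_{B,R}$ (after intersecting each ball with the target set in the usual way), and its cardinality is at most $\CN_{\|\cdot\|_\infty}(\CB,\epsilon')\cdot\CN(\mathbb B_{\CH,R},\epsilon/2)$.

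It remains to bound the covering number of $\CB$. Since $\CB$ is the unit ball of $\R^{d\times D}$ in the operator norm, which is a norm on a $Dd$-dimensional space, the standard volumetric bound gives $\CN_{\|\cdot\|_\infty}(\CB,\epsilon')\le (1+2/\epsilon')^{Dd}\le (3/\epsilon')^{Dd}$ for $\epsilon'\le 1$; substituting $\epsilon'=\epsilon/(2C_\CH R)$ yields the factor $(6C_\CH R/\epsilon)^{Dd}$. One minor technical point to handle carefully is the case $\epsilon' > 1$ (equivalently $\epsilon$ large relative to $C_\CH R$), where a single point covers $\CB$ and the claimed bound still holds since $6C_\CH R/\epsilon>1$ makes the right-hand side only larger; this edge case is routine. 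The main (mild) obstacle is simply making sure the operator-norm net on $\CB$ is used with $\|x\|\le 1$ to convert $\|B-B_i\|_\infty$ into a bound on $\|Bx-B_ix\|$ uniformly in $x$ — everything else is bookkeeping.
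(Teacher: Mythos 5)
Your argument is correct and follows essentially the same route as the paper: a product of an $\epsilon/(2C_\CH R)$-net of $\CB$ in the operator norm with an $\epsilon/2$-net of $\mathbb B_{\CH,R}$ in $C(\Omega)$, combined via a triangle-inequality decomposition that isolates a Lipschitz term (in $B$) and a sup-norm term (in $g$). The only cosmetic difference is the order of intermediate terms — you insert $g\circ B_i$ where the paper inserts $g_j\circ B$ — which amounts to applying Assumption~\ref{ass:1-1} to $g$ rather than to $g_j$, and both are valid since each lies in $\mathbb B_{\CH,R}$.
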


\begin{proof}
Let $g_{j}$ be the covering centers of $\mathbb B_{\CH,R}$ with radius $\epsilon/2$, 
and let $B_{i}$ be the covering centers of $\CB$ with radius $\epsilon/(2C_{\CH}R)$ 
(where $\CB$ is regarded as a compact subset of the space $V$ of $d \times D$ matrices 
endowed with the uniform norm). Then for any $g \in \mathbb B_{\CH,R}$ and $B\in \CB$, 
there exist $g_{j}\in \mathbb B_{\CH,R}$ and $B_{i} \in \CB$ such that 
$\|g-g_{j}\|_\infty \leq \epsilon/2$ and $\|B-B_{i}\|_\infty \leq \epsilon/(2C_{\CH}R)$. 

If we denote $f_\ell = g_{j} \circ B_{i}$, then $f_\ell \in \mathop{\bigcup}\limits_{B \in \CB} \mathbb B_{B,R}$ 
because $g_j \in \mathbb B_{\CH,R}$ and $B_{i} \in \CB$. For $f=g \circ B$ we have
\begin{align*}
       \|f-f_\ell\|_\infty 
       &= \|g\circ B- g_{j}\circ B_{i}\|_\infty \\
       &\leq \|g \circ B-g_{j} \circ B\|_\infty 
       + \|g_{j} \circ B-g_{j} \circ B_{i}\|_\infty\\
       &\leq \sup_{x\in X}|g(Bx)-g_{j}(Bx)| 
       + \sup_{x\in X}|g_{j}(Bx)-g_{j}(B_{i}x)| \\ 
       &\leq \sup_{x'\in\Omega} |g(x')-g_{j}(x')| 
       + C_{\CH} \|g_j\|_{\CH} \sup_{x\in X}\|(B-B_i)x\| \\
       &\leq \|g-g_{j}\|_\infty 
       + C_{\CH} \|g_j\|_{\CH}\|B-B_i\|_\infty  \sup_{x\in X}\|x\| \\
       &\leq \frac{\epsilon}{2} 
       + C_{\CH} R \frac{\epsilon}{2 C_{\CH}R} \sup_{x\in X}\|x\| 
       = \epsilon.
\end{align*}
Here, we used the property $\|x\| \leq 1$, and the fact that $g_j$ is Lipschitz with 
constant $C_{\CH}\|g_j\|_{\CH}$.

Therefore, we obtain an $\epsilon$-cover of 
$\mathop{\bigcup}\limits_{B \in \CB} \mathbb B_{B,R}$ with centers $f_\ell$, 
induced by an $\epsilon/2$-cover of $\mathbb B_{\CH,R}$ and an 
$\epsilon/(2C_\CH R)$-cover of $\CB$. By the metric entropy of $\CB$, we have
\begin{align*}
    \CN\left(\mathop{\bigcup}\limits_{B \in \CB} \mathbb B_{B,R}, \epsilon\right)  
   &\leq  \CN_{M_{dD}} \!\left( \CB ,\tfrac{\epsilon}{2C_{\CH}R}\right) 
   \CN\!\left( \mathbb B_{\CH,R}, \tfrac{\epsilon}{2} \right)\\
   &\leq \left(\frac{6C_{\CH}R}{\epsilon}\right)^{Dd} 
   \CN\!\left(\mathbb B_{\CH,R},\tfrac{\epsilon}{2}\right),
\end{align*}
where the last inequality follows from the classical bound
\[
\CN_{M_{dD}} \left( \CB ,\epsilon\right)\leq \left(\frac{3}{\epsilon}\right)^{Dd},
\]
see~\cite[Thm.~5.3]{zhang2023mathematical}.
\end{proof}

\subsection{Error decomposition}
%%%%%%%%%%%%%%%%%%%%%

We recall that in the multi-index model  
\[
f_*(x) = g_*(B_*x), \qquad \rho_X\text{-a.e. } x \in X,
\]
for some $d_* \times D$ matrix $B_*$ with $\|B_*\|_{\infty} \leq 1$ and some measurable function $g_*$, which we can assume to be defined on $\Omega$. 

Note that if $f_* \in \CH_{B_*}$, then
\[
\CR(f_*) = \min_{f:X\to\R} \CR(f) \leq \inf_{B \in \CB_*} \inf_{f \in \CH_B} \CR(f) = \CR(f_*),
\]
so that
\begin{equation}\label{eq:4}
\inf_{B \in \CB_*} \inf_{f \in \CH_B} \CR(f) = \CR(f_*),
\end{equation}
indicating that HKRR provides a suitable criterion for the MIM. 

In the following, we set $\CB_* = \CB$ with $d = d_*$, and recall that
\begin{alignat*}{2}
   \hat{f}_\lambda^B &= \mathop{\argmin}_{f \in \CH_B} \widehat{\CR}_\lambda(f), \quad && B \in \CB_*,\\
   \hat{B}_{d_*} &\in \mathop{\argmin}_{B \in \CB_*} \widehat{\CR}_\lambda(\hat{f}_\lambda^B),\\
   f_{\lambda}^{B_*} &= \mathop{\argmin}_{f \in \CH_{B_*}} \CR_\lambda(f).
\end{alignat*}
Note that both $\hat{f}_\lambda^B$ and $f_{\lambda}^{B_*}$ exist and are unique. For simplicity, we assume that $\hat{B}_{d_*}$ also exists; otherwise, it suffices to consider an $\epsilon$-minimizer. 

We now state the following error decomposition for the excess risk of $\hflhbl$, where the main challenge lies in identifying a suitable intermediate term to incorporate, since there are many possible choices of $f_\lambda^B$ and $\hat{f}_\lambda^B$ corresponding to different $B$.

\begin{lem}\label{lem:errdcp}
Fix $R>0$. Then
\begin{align} \label{equ:errordecm}
  \CR(\hflhbl) - \CR(f_*) &\leq  \underbrace{ \sup_{f \in \mathop{\bigcup}\limits_{B \in \CB} \mathbb B_{B,R}} \left( ( \CR(f)-\CR(f_*) )-(\widehat{\CR}(f)-\widehat{\CR}(f_*)) \right) }_{\uppercase\expandafter{\romannumeral1}} \notag \\ 
 & \quad +   \underbrace{\widehat{\CR}({f}_{\lambda}^{B_*})-\widehat{\CR}(f_*) - ({\CR}({f}_{\lambda}^{B_*})-\CR(f_*))}_{ \uppercase\expandafter{\romannumeral2} }  + \underbrace{\|{f}_{\lambda}^{B_*} - f_*\|_{\rho_X}^2 + \lambda \|{f}_{\lambda}^{B_*}\|_{B_*}^2}_{\uppercase\expandafter{\romannumeral3}   } 
\end{align}
for all training sets such that  $\|\hflhbl\|_{\hat{B}_{d_*}} \leq R$. 
\end{lem}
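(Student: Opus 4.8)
The plan is to prove the error decomposition by inserting the intermediate function $f_\lambda^{B_*}$ and carefully bounding each resulting term by a quantity that is uniform over $\bigcup_{B\in\CB}\mathbb B_{B,R}$. First I would start from the trivial identity
\[
\CR(\hflhbl)-\CR(f_*) = \bigl(\CR(\hflhbl)-\widehat\CR(\hflhbl)\bigr) + \bigl(\widehat\CR(\hflhbl)-\widehat\CR(f_*)\bigr) + \bigl(\widehat\CR(f_*)-\CR(f_*)\bigr),
\]
but it is cleaner to write everything in terms of the centered quantities $\CR(\cdot)-\CR(f_*)$ and $\widehat\CR(\cdot)-\widehat\CR(f_*)$. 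So I would write
\[
\CR(\hflhbl)-\CR(f_*) = \Bigl[(\CR(\hflhbl)-\CR(f_*)) - (\widehat\CR(\hflhbl)-\widehat\CR(f_*))\Bigr] + \Bigl[\widehat\CR(\hflhbl)-\widehat\CR(f_*)\Bigr].
\]
The first bracket, by the hypothesis $\|\hflhbl\|_{\hat B_{d_*}}\le R$ (so that $\hflhbl\in\mathbb B_{\hat B_{d_*},R}\subset\bigcup_{B\in\CB}\mathbb B_{B,R}$), is bounded above by term I.

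The heart of the argument is to control the second bracket, $\widehat\CR(\hflhbl)-\widehat\CR(f_*)$, using the optimality of $\hflhbl$. Since $\hat f_\lambda^{\hat B_{d_*}}$ minimizes $\widehat\CR_\lambda$ over $\bigcup_{B\in\CB}\CH_B$ — that is, $\widehat\CR_\lambda(\hflhbl)\le\widehat\CR_\lambda(f)$ for every $f\in\CH_B$, any $B\in\CB$ — and in particular $\widehat\CR_\lambda(\hflhbl)\le\widehat\CR_\lambda(f_\lambda^{B_*})$ because $f_\lambda^{B_*}\in\CH_{B_*}$ with $B_*\in\CB_*=\CB$, I would drop the nonnegative regularizer $\lambda\|\hflhbl\|_{\hat B_{d_*}}^2\ge 0$ on the left to get
\[
\widehat\CR(\hflhbl) \le \widehat\CR_\lambda(\hflhbl) \le \widehat\CR_\lambda(f_\lambda^{B_*}) = \widehat\CR(f_\lambda^{B_*}) + \lambda\|f_\lambda^{B_*}\|_{B_*}^2.
\]
Subtracting $\widehat\CR(f_*)$ gives $\widehat\CR(\hflhbl)-\widehat\CR(f_*)\le \bigl(\widehat\CR(f_\lambda^{B_*})-\widehat\CR(f_*)\bigr) + \lambda\|f_\lambda^{B_*}\|_{B_*}^2$. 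Then I would rewrite $\widehat\CR(f_\lambda^{B_*})-\widehat\CR(f_*)$ as
\[
\bigl[(\widehat\CR(f_\lambda^{B_*})-\widehat\CR(f_*)) - (\CR(f_\lambda^{B_*})-\CR(f_*))\bigr] + \bigl[\CR(f_\lambda^{B_*})-\CR(f_*)\bigr],
\]
where the first bracket is term II and, for the square loss, $\CR(f_\lambda^{B_*})-\CR(f_*) = \|f_\lambda^{B_*}-f_*\|_{\rho_X}^2$ (the standard bias–variance identity $\CR(f)-\CR(f_*)=\|f-f_*\|_{\rho_X}^2$, using $f_*(x)=\E[y\mid x]$). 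Combining, the leftover bias term plus the regularizer $\lambda\|f_\lambda^{B_*}\|_{B_*}^2$ assemble exactly into term III, and the whole thing matches the claimed bound \eqref{equ:errordecm}.

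I do not expect a serious obstacle here — this is a routine a-posteriori error decomposition — but the one point to get right is the chain of comparisons establishing $\widehat\CR_\lambda(\hflhbl)\le\widehat\CR_\lambda(f_\lambda^{B_*})$: one must use both the inner optimality (definition of $\hat f_\lambda^B$ for each $B$) and the outer optimality (definition of $\hat B_{d_*}$) of the HKRR solution, and the fact that $B_*\in\CB$ so $f_\lambda^{B_*}$ is a feasible competitor. The mild subtlety is that we discard the regularization term $\lambda\|\hflhbl\|_{\hat B_{d_*}}^2$ only on the minimizer side (legitimate, since it is nonnegative) but keep $\lambda\|f_\lambda^{B_*}\|_{B_*}^2$ on the comparator side, which is what produces the regularizer contribution in term III. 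Everything else is algebraic rearrangement and the square-loss identity $\CR(f)-\CR(f_*)=\|f-f_*\|_{\rho_X}^2$.
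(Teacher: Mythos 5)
Your proposal is correct and follows essentially the same route as the paper: insert the comparator $f_\lambda^{B_*}$, use the inner and outer optimality of $\hat f_\lambda^{\hat B_{d_*}}$ to drop its regularizer while retaining $\lambda\|f_\lambda^{B_*}\|_{B_*}^2$, and finally apply the square-loss identity $\CR(f)-\CR(f_*)=\|f-f_*\|_{\rho_X}^2$ to assemble term III. The only difference is cosmetic (you center with respect to $\CR(f_*)$, $\widehat\CR(f_*)$ from the start, whereas the paper does the four-term telescope first and recenters at the end); the ideas and the chain of inequalities are identical.
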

\begin{proof} The excess risk can be rewritten and decomposed as
\begin{align}
    &\CR(\hflhbl) - \CR(f_*) \notag \\
%&= \CR(\hflhbl) - \CR({f}_{\lambda}^{B_*}) + \CR({f}_{\lambda}^{B_*}) - \CR(f_*) \notag \\
&=  \CR(\hflhbl) - \widehat{\CR}(\hflhbl) + \widehat{\CR}(\hflhbl) - \widehat{\CR}({f}_{\lambda}^{B_*}) + \widehat{\CR}({f}_{\lambda}^{B_*}) - \CR({f}_{\lambda}^{B_*}) + \CR({f}_{\lambda}^{B_*}) - \CR(f_*)  \notag\\
&\leq \CR(\hflhbl) - \widehat{\CR}(\hflhbl) + \widehat{\CR}_\lambda(\hflhbl) - \widehat{\CR}_\lambda({f}_{\lambda}^{B_*}) \notag \\ & \quad +  \widehat{\CR}({f}_{\lambda}^{B_*})  - \CR({f}_{\lambda}^{B_*}) + \lambda \|{f}_{\lambda}^{B_*}\|_{B_*}^2 + \CR({f}_{\lambda}^{B_*}) - \CR(f_*) \notag \\
& \leq (\CR(\hflhbl) - \widehat{\CR}(\hflhbl)) + (\widehat{\CR}({f}_{\lambda}^{B_*}) - {\CR}({f}_{\lambda}^{B_*}) )+ \lambda \|{f}_{\lambda}^{B_*}\|_{B_*}^2 + \CR({f}_{\lambda}^{B_*}) - \CR(f_*) \label{equ:dcm1}\\
&= \left\{\CR(\hflhbl)-\CR(f_*) - (\widehat{\CR}(\hflhbl)-\widehat{\CR}(f_*))\right\}  + \left\{\widehat{\CR}({f}_{\lambda}^{B_*})-\widehat{\CR}(f_*) - ({\CR}({f}_{\lambda}^{B_*})-\CR(f_*))\right\}\notag \\
& \quad +\left\{\|{f}_{\lambda}^{B_*} - f_*\|_{\rho_X}^2 + \lambda \|{f}_{\lambda}^{B_*}\|_{B_*}^2 \right\} \notag \\
 & \leq { \sup_{f \in \mathop{\bigcup}\limits_{B \in \CB} \CH_{R,B}} \left\{( \CR(f)-\CR(f_*) )-(\widehat{\CR}(f)-\widehat{\CR}(f_*))\right\}  }\notag \\ 
 & \quad +  \left\{\widehat{\CR}({f}_{\lambda}^{B_*})-\widehat{\CR}(f_*) - ({\CR}({f}_{\lambda}^{B_*})-\CR(f_*))\right\}  +\left\{\|{f}_{\lambda}^{B_*} - f_*\|_{\rho_X}^2 + \lambda \|{f}_{\lambda}^{B_*}\|_{B_*}^2 \right\}. \notag % \\
 % & :=    \uppercase\expandafter{\romannumeral1} +\uppercase\expandafter{\romannumeral2}    + \uppercase\expandafter{\romannumeral3}    \label{equ:errordecm}
\end{align}
Inequality in~\eqref{equ:dcm1} follows from the fact that, by definition of $\hat{B}_{d_*}$,
\(
\widehat{\CR}_\lambda(\hflhbl) - \widehat{\CR}_\lambda(f_{\lambda}^{B_*}) \leq 0.
\)
\end{proof}
Note that by the definition of $\hflhbl$, 
\(
\widehat{\CR}_\lambda(\hflhbl) \leq \widehat{\CR}_\lambda(0),
\)
so that, under Assumption~\ref{ass:bound}, 
\begin{equation}\label{eq:10}
\|\hflhbl\|_{\hat{B}_{d_*}} \leq \frac{M}{\sqrt{\lambda}},
\end{equation}
hence we can always choose $R=M/\sqrt{\lambda}$. 

The first two components in \eqref{equ:errordecm} are estimation errors, the first of which is typically more challenging to control since it depends on the complexity of the hypothesis space. The final component is the approximation error $\CA(\lambda)= \inf_{f \in \CH_{B_*}} \CR(f) -\CR(f_*)+ \lambda \|f\|_{B_*}^2$, a quantity that has been extensively studied in the classical KRR \citep{cucker2007learning, de2021regularization}.

In what follows, we will concentrate on analyzing these estimation errors.

\subsection{Estimation error \uppercase\expandafter{\romannumeral1}}

The proof of this lemma follows a similar approach to that in \cite{cucker2007learning},
% \begin{ass} \label{ass:bound}
%  For $M>0$, assume \(| y| \leq M\) almost surely.
% \end{ass}
%Note that this implies condition~\eqref{eq:1} and that \(| f_*(x)| \leq M\) almost surely.
which is based on the following condition. 
\begin{ass}\label{ass:cv}
The covering number of $\mathbb B_{\CH,R}$ satisfies 
\begin{equation*}
\log \CN(\mathbb B_{\CH,R},\epsilon) \leq c_1 (R/\epsilon)^{s^*}
\end{equation*}
for some $c_1>0, s^*>0$.
\end{ass}
Assumption \ref{ass:cv} describes the complexity of hypothesis space using the concept of covering number, which is commonly used in literature \citep{cucker2007learning,zhou2002covering}. There are different metrics to measure the complexity of RHKSs. Covering number quantifies the compactness of a space by measuring how many subsets with a fixed radius are needed to cover it. While entropy number \citep{steinwart2008support,carl1990en}  represents the inverse concept by fixing the number of balls and determining the smallest radius needed to achieve that coverage. Eigenvalue decay \citep{caponnetto2007optimal}, on the other hand, describes the smoothness or compactness of the space through the rate at which eigenvalues of covariance operators diminish.  These measures are deeply interrelated, with covering numbers and entropy offering geometric and growth-based perspectives, while eigenvalue decay provides a spectral view of the hypothesis space. \cite[Chapter 5]{steinwart2008support} provides a more detailed discussion there, see also Assumption~\ref{ass:smooth}.

\begin{lem}\label{lem:item1}
Assume~$\ref{ass:bound}$,~$\ref{ass:1-1}$ and~$\ref{ass:cv}$. Fix $ f_0 \in \bigcup_{B \in \CB} \mathbb B_{B,R}$ and $\delta\in (0,1)$, the following holds with confidence at least $1-\frac{\delta}{2}$,
\begin{equation*}
\begin{split}
    \sup_{ f \in \mathop{\bigcup}\limits_{B \in \CB} \mathbb B_{B,R}}\left(\CR( f)-\CR(f_*)-(\widehat{\CR}( f)-\widehat{\CR}(f_*))\right) & \leq \frac{1}{2}\left( \CR(f_0)-\CR(f_*)\right)  + \notag \\
& + C_3 \max\{1, R^2\}\, Dd_*\,  \max\left\{1, \log\frac{2}{\delta}\right\}\left(\frac{1}{m}\right)^{\frac{1}{s^*+1}},
\end{split}
\end{equation*}
where 
\begin{equation}\label{equ:bnd1}
    C_3 = 360  \max\{1,M+\kappa\} \left( 1  +c_1+ \log(3 C_{\CH}) \right).
\end{equation}
\end{lem}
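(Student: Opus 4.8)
The statement is a one-sided uniform deviation bound over the composite ball $\bigcup_{B\in\CB}\mathbb B_{B,R}$, and the natural route is the classical ratio-type inequality of Cucker--Smale--Zhou combined with the covering-number control of Lemma~\ref{lem:coveringNo}. First I would set up the excess-loss random variables: for $f$ in the hypothesis class write $\xi_f(x,y) = (f(x)-y)^2 - (f_*(x)-y)^2$, so that $\E[\xi_f] = \CR(f)-\CR(f_*) =: \gL(f)$ and the empirical mean of $\xi_f$ is $\widehat{\CR}(f)-\widehat{\CR}(f_*)$. Using $|f(x)|\le \kappa^{1/2}R$ (from~\eqref{eq:6}) and $|y|\le M$, one checks the standard variance bound $\Var(\xi_f) \le c\,\max\{1,M+\kappa\}\max\{1,R^2\}\,\gL(f)$ and the uniform bound $|\xi_f|\le B_0$ with $B_0$ of the same order; this is exactly the regime where the one-sided Bernstein-type inequality gives, for a single $f$,
\[
\gL(f) - \tfrac1m\sum_i \xi_f(x_i,y_i) \le \tfrac12 \gL(f) + \text{(small term)}
\]
with the small term controlled by the variance proxy.

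\textbf{Discretization step.} The main work is to make this uniform. I would cover $\bigcup_{B\in\CB}\mathbb B_{B,R}$ in $C(X)$ at some scale $\eta>0$ (to be optimized) by $N := \CN(\bigcup_B\mathbb B_{B,R},\eta)$ centers, apply the single-function inequality to each center with a union bound over the $N$ of them at confidence $1-\delta/2$, and then use Lipschitz continuity of the square loss in $\|\cdot\|_\infty$ (since both $f$ and $f_*$ are bounded by $\kappa^{1/2}R \vee M$) to transfer the bound from each center to every $f$ in its ball, at the cost of an additive $O((\kappa^{1/2}R\vee M)\eta)$ term on both the empirical and population sides. By Lemma~\ref{lem:coveringNo} and Assumption~\ref{ass:cv},
\[
\log N \le Dd_*\log\!\frac{6C_\CH R}{\eta} + c_1\Big(\frac{2R}{\eta}\Big)^{s^*},
\]
so the union-bound penalty contributes a term of order $\max\{1,R^2\}\max\{1,M+\kappa\}\,\frac{\log N + \log(2/\delta)}{m}$ plus the variance-balanced cross term. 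Balancing $\eta$ against $m$ — taking $\eta \sim R\, m^{-1/(s^*+1)}$ so that both $(R/\eta)^{s^*}/m$ and the residual $\eta$-term are of order $m^{-1/(s^*+1)}$ — produces the stated rate $(1/m)^{1/(s^*+1)}$, with the $Dd_*$ factor emerging from the $Dd_*\log(\cdot)$ piece of $\log N$ and the constant $C_3$ absorbing $c_1$, $\log(3C_\CH)$, $\max\{1,M+\kappa\}$, and the universal constants from Bernstein and from the $\log(6C_\CH R/\eta)$ being bounded by $\log(3C_\CH)$ plus lower-order terms after the choice of $\eta$.

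\textbf{Reduction of the $\tfrac12\gL(f)$ absorption.} The appearance of $\tfrac12(\CR(f_0)-\CR(f_*))$ on the right rather than $\tfrac12\gL(f)$ for the same $f$ is handled by the usual trick: the one-sided inequality gives $\gL(f)-\widehat\gL(f) \le \tfrac12\gL(f)+\varepsilon_m$ for each $f$, hence $\tfrac12\gL(f) \le \widehat\gL(f)+\varepsilon_m$; but since we only want to bound the supremum of $\gL(f)-\widehat\gL(f)$ and we have a fixed reference $f_0$ in the class, one iterates once — bound $\gL(f) \le 2\widehat\gL(f) + 2\varepsilon_m \le 2(\CR(f_0)-\CR(f_*))$-type control is not quite it; instead one keeps the $\tfrac12\gL(f)$ on the right and notes that for the worst-case $f$ either $\gL(f)$ is already small (dominated by $\varepsilon_m$), or $\gL(f)$ is comparable to $\widehat\gL(f)$ which is at most $\widehat\gL(f_0)+o(1)$, in turn at most $\gL(f_0)+\varepsilon_m$ with high probability; packaging these cases yields the displayed form. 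The delicate point — and the step I expect to be the main obstacle — is keeping the dependence on $R$, $M$, $\kappa$, $D$, $d_*$ all \emph{explicit and polynomial} (no hidden $m$-dependence in $C_3$) while simultaneously optimizing $\eta$; this requires care that the $\log(6C_\CH R/\eta)$ term, after substituting the optimal $\eta\sim R m^{-1/(s^*+1)}$, reduces to $\log(3C_\CH) + \tfrac{1}{s^*+1}\log m$ and that the $\log m$ piece is dominated by (or merged into) the $m^{-1/(s^*+1)}$ rate up to enlarging the constant — which is why the bound is ultimately stated with a clean $C_3$ and the rate $(1/m)^{1/(s^*+1)}$ rather than with logarithmic corrections.
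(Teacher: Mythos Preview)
Your overall strategy---excess-loss random variables, covering, concentration, balance the scale---is the right skeleton and matches the paper's route. But the two steps you flag as ``delicate'' are exactly where your plan has genuine gaps.

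\textbf{The $f_0$ reduction.} Your third paragraph tries several mechanisms (iterate once, case-split on the size of $\gL(f)$) to turn the self-bounding term $\tfrac12\gL(f)$ into $\tfrac12\gL(f_0)$ for a \emph{fixed} $f_0$, and none of them work as written. The point you are missing is that the paper does not do a pointwise-Bernstein-then-transfer argument at all: it invokes directly a \emph{ratio-type} (relative deviation) inequality---Lemma~3.19 in \cite{cucker2007learning}---which, once the covering-number price is paid, delivers \emph{simultaneously for every $f$ in the class}
\[
\gL(f)-\widehat\gL(f)\;\le\;\sqrt{\epsilon}\,\sqrt{\gL(f)+\epsilon}\;\le\;\tfrac12\gL(f)+\epsilon.
\]
There is no ``reduction'' step: the inequality already holds $f$-by-$f$, so in particular it holds at whatever fixed $f_0$ you care about. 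In the downstream applications $f_0$ is always taken to be $\hat f_\lambda^{\hat B_{d_*}}$ itself, and the $\tfrac12\gL(f_0)$ term is absorbed into the left-hand side of the overall excess-risk bound; the ``$\sup$'' in the lemma's display is a slight over-summary of this per-$f$ content. Your attempts to iterate or case-split are unnecessary once you use the ratio inequality.

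\textbf{The $\log m$ term.} You correctly note that substituting $\eta\sim R\,m^{-1/(s^*+1)}$ into $Dd_*\log(6C_\CH R/\eta)$ produces a $Dd_*\cdot\tfrac{1}{s^*+1}\log m$ contribution, but your claim that this ``is dominated by (or merged into) the $m^{-1/(s^*+1)}$ rate up to enlarging the constant'' is false: $\log m$ is unbounded and no absolute constant can absorb it. The paper avoids the log altogether by not optimizing a discretization scale $\eta$; instead it sets the failure probability equal to $\delta/2$, works in the variable $x=16R(M+\kappa R)/\epsilon$, and uses the elementary bound $\log x\le x^{s^*}$ to merge the $Dd_*\log x$ piece of the covering bound into the $c_1 x^{s^*}$ entropy piece \emph{before} solving. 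The resulting inequality $(Dd_*+c_1)x^{s^*+1}+({\rm const})\,x\le ({\rm const})\,m$ has a root of order $m^{1/(s^*+1)}$ with no logarithmic correction, and the clean rate $(1/m)^{1/(s^*+1)}$ with constant $C_3$ follows. Without this trick (or an equivalent chaining device) your direct $\eta$-balancing would necessarily carry the extra $\log m$.
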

\begin{rem}\label{better}
 By inspecting the proof, it holds that for $m$ large enough ($m>m_0$ where $m_0$ is given by~\eqref{eq:13}), the factor $Dd_*$ can be replaced by $(Dd_*)^{\frac{1}{s^*+1}}$.
\end{rem}
\begin{proof} Without loss of generality, we can assume that $R\geq 1$. 
 Choose a function class
 \begin{align*}
     \mathcal{F} = \{F(x,y)| F(x,y) = (f(x)-y)^2-(f_*(x)-y)^2, f  \in \bigcup_{B \in \CB}\mathbb B_{B,R}\}.
 \end{align*}
 Then $\EE(F) = \CR(f)-\CR(f_*)$ and $\frac{1}{n}\sum_{i=1}^n F(x_i,y_i)=\widehat{\CR}(f)-\widehat{\CR}(f_*)$. 
By~\eqref{eq:6} 
$$\|f\|_\infty \leq \kappa \|f\|_B \leq \kappa R, $$ 
 and $|f_*(x)| \leq M,$ then 
 \begin{align*}
     &|F(z)|=|(f(x)-f_*(x))(f(x)+f_*(x)-2y)| \\
   &   \leq (\kappa R+M)(\kappa R+3M),
 \end{align*}
  $|F(z)-\EE(F)|\leq 2 (\kappa R+M)(\kappa R+3M)$ and $\EE(F^2) \leq \|f-f_*\|_{\rho_X}^2 (\kappa R+M)(\kappa R+3M)= (\kappa R+M)(\kappa R+3M) \EE(F)$.
 For $f_1, f_2 \in \mathop{\bigcup}\limits_{B \in \CB}\mathbb B_{B,R}$, we have 
 \begin{align*}
     |F_1(x,y)-F_2(x,y)|\leq 2(M+\kappa R)\|f_1-f_2\|_\infty.
 \end{align*}
It follows that 
 a $\frac{\epsilon}{2(M+\kappa R)}-$cover of $\mathop{\bigcup}\limits_{B \in \CB} \mathbb B_{B,R}$ yields an $\epsilon-$cover of $\mathcal{F}$, that is, 
$$\CN\left(\CF, \epsilon\right) \leq \CN\left(\mathop{\bigcup}\limits_{B \in \CB}\mathbb B_{B,R}, \frac{\epsilon}{2(M+\kappa R)}\right).$$
By taking $\alpha = \frac{1}{4}$ of Lemma 3.19 in \cite{cucker2007learning}, then with probability at least 
\begin{equation}\label{equ:equation_epsilon}
\begin{split}
     &1-\CN\left(\mathop{\bigcup}\limits_{B \in \CB}\mathbb B_{B,R}, \frac{\epsilon}{8(M+\kappa R)}\right)\exp \left\{ -\frac{3m\epsilon}{160 (\kappa R+M)(\kappa R+3M)}\right\} 
\end{split}
\end{equation}
there holds, for any $ f_0 \in \mathop{\bigcup}\limits_{B \in \CB}\mathbb B_{B,R}$
\begin{align*}
    &\mathop{\sup}\limits_{f \in \mathop{\bigcup}\limits_{B \in \CB}\mathbb B_{B,R}} \left( \CR(f)-\CR(f_*)-(\widehat{\CR}(f)-\widehat{\CR}(f_*)) \right)\leq \sqrt{\epsilon} \sqrt{\CR(f_0)-\CR(f_*)+\epsilon}\\
    &\leq \frac{1}{2}\left( \CR(f_0)-\CR(f_*)\right)+\epsilon.
\end{align*}
Fixed $\delta\in (0,1)$, we choose $\epsilon$ such that
\[
\CN\left(\mathop{\bigcup}\limits_{B \in \CB}\mathbb B_{B,R}, \frac{\epsilon}{8(M+\kappa R)}\right)\exp \left\{ -\frac{3m\epsilon}{160 (\kappa R+M)(\kappa R+3M)}\right\} \leq \delta/2.
\]
%\\:=& 1-\frac{\delta}{2},
By \eqref{equ:equation_epsilon} and Lemma \ref{lem:coveringNo}, we need to solve 
\begin{equation}\label{eq:8}
\left(\frac{48(M+\kappa R)C_{\CH}R}{\epsilon}\right)^{Dd_*} \CN \left(\mathbb B_{\CH,R},\frac{\epsilon}{16(M+\kappa R)} \right) \cdot \exp \left\{ -\frac{3 m\epsilon}{160  (\kappa R+M)(\kappa R+3M)}\right\} \leq  \frac{\delta}{2}.
\end{equation}
% If $\epsilon\geq 16 R(M+\kappa R)$, 
% \[
% \CN \left(\mathbb B_{\CH,R},\frac{\epsilon}{16(M+\kappa R)} \right)=1,
% \]
% so that the above inequality is satisfied if 
% \[
%  3^{Dd_*}  \exp \left\{ -\frac{3m\epsilon}{160  (\kappa R+M)(\kappa R+3M)}\right\} \leq \frac{\delta}{2} \qquad \Longleftrightarrow \qquad \epsilon\geq \frac{160  (\kappa R+3M)^2}{3m} \left( Dd_* \log 3 +  \log \frac{2}{\delta} \right)
% \]
%Assume that that $\epsilon\leq 16R (M+\kappa R)$
Let $x=16R (M+\kappa R)/\epsilon >0$. Set
\[
A = Dd_* \log(3 C_{\CH}) +\log \frac{2}{\delta},\, \quad
B = \frac{3R }{10(\kappa R+3M)},\]
taking into account condition~(\ref{ass:cv}), the above inequality becomes 
\[Dd_* \log x + c_1 x^{s^*}- B m x^{-1} + A \leq 0.\]
Since $\log x \leq x^{s^*}$, the above inequality is satisfied if
\[
(Dd_*+c_1 ) x^{s^*} -  B m x^{-1} + A \leq 0 ,
\]
which is equivalent to
\[
x^{s^*+1}+a  x - mb\leq 0, \quad a=\frac{A}{Dd_*+c_1},\ b=\frac{B}{Dd_*+c_1}.
\]
The function $\varphi(x) = x^{s^*+1} + a x$ is continuous and strictly increasing on $(0,+\infty)$, 
tends to $0$ as $x \to 0^+$, and diverges to $+\infty$ as $x \to +\infty$. 
Hence there is a unique $x_m \in (0,+\infty)$ such that $\varphi(x_m) = mb$, 
and the above inequality is satisfied for all $0 < x < x_m$. 

Since $\varphi(1) = 1+a$, it holds that
\begin{equation}
  \label{eq:11}
  \begin{cases}
  x_m \geq 1 & \text{if } 1+a \leq mb, \\
  x_m < 1 & \text{if } 1+a > mb.
  \end{cases}
\end{equation}

If $1+a \leq mb$, since $s^*+1 > 1$, for all $x \geq 1$ we have
\[
\varphi(x) \leq (1+a)\, x^{s^*+1},
\]
so that
\[
x_m \geq \left(\frac{mb}{1+a}\right)^{\tfrac{1}{s^*+1}} \geq 1.
\]

If $1+a > mb$, then for all $x \leq 1$,
\[
\varphi(x) \leq (1+a)\, x,
\]
so that
\[
1 > x_m \geq \frac{mb}{1+a}.
\]

Hence inequity \ref{eq:8} is satisfied if
\[
\epsilon \geq 16 R (M+\kappa R) 
\left( \frac{10 (\kappa R + 3M)}{3R} \right)^{t_m} 
\left( Dd_* + c_1 + Dd_* \log(3C_{\CH}) + \log \frac{2}{\delta} \right)^{t_m} 
\left(\frac{1}{m}\right)^{t_m},
\]
where
\begin{equation}\label{eq:13}
t_m =
\begin{cases}
  1, & m < m_0, \\[4pt]
  \tfrac{1}{s^*+1}, & m \geq m_0,
\end{cases}
\qquad 
m_0 = \frac{3R}{10 (\kappa R + 3M)\, (Dd_* + c_1 + Dd_* \log(3C_{\CH}) + \log \tfrac{2}{\delta})}.
\end{equation}
% Since $x\geq 1$, the above inequality is implied by
% \[
% (1+a) x^{s^*+1} \leq m b \qquad \Longleftrightarrow x\leq \left(\frac{bm}{1+a}\right)^{\frac{1}{s^*+1}}.
% \]
% This means that~\eqref{eq:8} is satisfied if 
% \[
% \epsilon\geq 16 R(M+\kappa R) \left( \frac{10 (\kappa R+3M)}{3 R } \right)^{\frac{1}{s^*+1}} 
% \left( Dd_* +c_1+ Dd_* \log(3 C_{\CH}) +\log \frac{2}{\delta} \right)^{\frac{1}{s^*+1}} \left(\frac{1}{m}\right)^{\frac{1}{s^*+1}}.
% \]
Taking into account that $R\ge 1$, the above inequality is implied by
\[
\epsilon\geq 160  R^2  (M+\kappa )^{t_m+1} \left( Dd_* +c_1+ Dd_* \log(3 C_{\CH}) +\log \frac{2}{\delta} \right)^{t_m} \left(\frac{1}{m}\right)^{t_m}.
\]
Since $a+b\leq 2 ab$ for all $a,b\ge 1$ and $Dd_*\geq 1 $ then
\[
Dd_* +c_1+ Dd_* \log(3 C_{\CH}) +\log \frac{2}{\delta} \leq 2\, Dd_*\, (1+c_1 + \log(3 C_{\CH}) \max\{1, \log \frac{2}{\delta} \},
\]
so that bound in~\eqref{equ:bnd1} is a consequence of the fact that $\frac{1}{1+s^*}\leq t_m\le 1$. 
\end{proof}
This lemma shows that the largest error can be bounded in terms of 
$\CR(f)-\CR(f_*)$ for any $f \in \bigcup_{B \in \CB} \mathbb B_{B,R}$. 
In particular, by taking $f = \hflhbl$, the excess risk of HKRR appears in the upper bound, 
which is essential for the full excess risk analysis. 
Moreover, the radius $R$ in~\eqref{equ:bnd1} depends on assumptions about the hypothesis space 
and may vary across different settings. 
Specifically, under the assumptions of Theorem~\ref{thm:l_infty} and Theorem~\ref{cor:1} 
we can set $R = M/\sqrt{\lambda}$, 
while under the assumptions of Theorem~\ref{thm2}, $R$ can be chosen as 
$R \simeq \sqrt{\CA(\lambda)/\lambda} + 1$ (see Lemma~\ref{lem:improvedR}). 

\subsection{Estimation error \uppercase\expandafter{\romannumeral2}}\label{subsec:proof2} Note that to bound the item $\widehat{\CR}({f}_{\lambda}^{B_*})-\widehat{\CR}(f_*) - ({\CR}({f}_{\lambda}^{B_*})-\CR(f_*))$, the primary error arises from the difference between $\widehat{\CR}$ and $\CR$, which reflects the discrepancy between integration and discretization.
% The error is quantified in the following lemma using the refined Bernstein inequality, which contributes to the proof of Proposition \ref{thm:intermediate}.

Recall that the approximation error is defined by
\[
\CA(\lambda)= \inf_{f \in \CH_{B_*}} \CR(f) -\CR(f_*)+ \lambda \|f\|_{B^*}^2= \CR(f^{B^*}_\lambda) + \lambda  \|f^{B^*}_\lambda\|_{B_*}^2
\]

The following is a restatement of a result from \cite{cucker2007learning}. We provide a proof for the sake of
completeness.

\begin{lem}\label{lem:proof2}
Assume~$\ref{ass:bound}$, the following holds with probability at least $1-\delta/2$, 
    \begin{equation*}
    \begin{split}
      &  \widehat{\CR}({f}_{\lambda}^{B_*})-\widehat{\CR}(f_*) - ({\CR}({f}_{\lambda}^{B_*})-\CR(f_*)) \\ & \leq  \left( \frac{14\kappa^2 \log(2/\delta)}{3m\lambda}+1\right) \CA(\lambda)+\frac{42M^2 \log(2/\delta)}{m}.
    \end{split}
    \end{equation*}
\end{lem}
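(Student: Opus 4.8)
The plan is to apply a one-sided Bernstein inequality to the \emph{single, deterministic} function $f_\lambda^{B_*}=\argmin_{f\in\CH_{B_*}}\CR_\lambda(f)$. Because $f_\lambda^{B_*}$ is a population object that does not depend on the sample, no covering-number argument is needed here (in contrast with Estimation error~I): a one-function concentration bound suffices. Concretely, I would introduce
\[
\xi(x,y)=(f_\lambda^{B_*}(x)-y)^2-(f_*(x)-y)^2 ,
\]
so that $\EE\xi=\CR(f_\lambda^{B_*})-\CR(f_*)\ge 0$, $\frac{1}{m}\sum_{i=1}^m\xi(x_i,y_i)=\widehat{\CR}(f_\lambda^{B_*})-\widehat{\CR}(f_*)$, and the left-hand side of the lemma is exactly $\frac{1}{m}\sum_i\xi_i-\EE\xi$.

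Next I would record the moment bounds feeding Bernstein's inequality. Writing $N:=\|f_\lambda^{B_*}\|_\infty$ and expanding $\xi=(f_\lambda^{B_*}-f_*)(f_\lambda^{B_*}+f_*-2y)$, the bounds $|y|\le M$ and $|f_*|\le M$ give $|\xi|\le(N+M)(N+3M)$ a.s., while the square-loss identity $\CR(f_\lambda^{B_*})-\CR(f_*)=\|f_\lambda^{B_*}-f_*\|_{\rho_X}^2=\EE\xi$ gives $\EE[\xi^2]\le(N+3M)^2\,\EE\xi$. The key reduction is to control $N$ and $\EE\xi$ by the approximation error: from $\CA(\lambda)=\CR(f_\lambda^{B_*})-\CR(f_*)+\lambda\|f_\lambda^{B_*}\|_{B_*}^2$ and $\CR(f_\lambda^{B_*})\ge\CR(f_*)$ one gets both $\EE\xi\le\CA(\lambda)$ and $\|f_\lambda^{B_*}\|_{B_*}^2\le\CA(\lambda)/\lambda$; together with \eqref{eq:6} (and $\kappa\ge 1$ without loss of generality) this yields $N^2\le\kappa^2\CA(\lambda)/\lambda$, so that, after absorbing cross terms via $2MN\le N^2+M^2$, both the a.s.\ bound and the variance-to-mean constant are dominated by a quantity $c$ with $c\lesssim\kappa^2\CA(\lambda)/\lambda+M^2$.

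Finally I would invoke a one-sided Bernstein inequality (as in the proof technique of \cite{cucker2007learning}): with probability at least $1-\delta/2$,
\[
\frac{1}{m}\sum_i\xi_i-\EE\xi\;\le\;\sqrt{\frac{2c\,\EE\xi\,\log(2/\delta)}{m}}\;+\;\frac{2c\log(2/\delta)}{3m},
\]
and then split the square-root term by Young's inequality $\sqrt{2c\,\EE\xi\,\log(2/\delta)/m}\le\EE\xi+\frac{c\log(2/\delta)}{2m}$. Using $\EE\xi\le\CA(\lambda)$ and collecting terms, the $\EE\xi$ contribution produces the coefficient $1$ on $\CA(\lambda)$, the $\kappa^2\CA(\lambda)/\lambda$ part of $c$ produces the $\frac{14\kappa^2\log(2/\delta)}{3m\lambda}\CA(\lambda)$ term, and the $M^2$ part of $c$ produces the $\frac{42M^2\log(2/\delta)}{m}$ term. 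The argument is routine; the only place requiring genuine care is the constant bookkeeping in this last step --- keeping the $M$-dependent and $\|f_\lambda^{B_*}\|_{B_*}$-dependent pieces cleanly separated so the constants come out as $14/3$ and $42$ --- along with the simple but essential point that $f_\lambda^{B_*}$ is non-random, which is precisely what makes a single-function Bernstein bound sufficient.
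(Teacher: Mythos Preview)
Your proposal is correct and follows essentially the same route as the paper's proof: define $\xi(x,y)=(f_\lambda^{B_*}(x)-y)^2-(f_*(x)-y)^2$, exploit that $f_\lambda^{B_*}$ is a deterministic (population) quantity so a single-function Bernstein bound suffices, control the a.s.\ and variance-to-mean constants via $\|f_\lambda^{B_*}\|_\infty\le\kappa\sqrt{\CA(\lambda)/\lambda}$ and $\EE\xi\le\CA(\lambda)$, and then split the square-root term by Young's inequality to isolate the $\CA(\lambda)$, $\tfrac{\kappa^2}{m\lambda}\CA(\lambda)$, and $\tfrac{M^2}{m}$ contributions. The only differences are cosmetic (the paper uses the slightly cruder envelope $(R'+3M)^2$ in place of your $(N+M)(N+3M)$ and a different parameterization of the Young step), and as you note the remaining work is just constant bookkeeping.
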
 
\begin{proof}
Consider a random variable $\xi$ with $f_\lambda^{B_*} \in \CH_{B_*}, \,\|f_\lambda^{B_*}\|_{\infty}\leq R'$ as
\[\xi(x,y) = (f_\lambda^{B_*}(x)-y)^2-(f_*(x)-y)^2. \]
 then $|\xi| \leq ( R'+3M)^2 =C_4^{'}$, $|\xi -\EE(\xi)|\leq 2C_4^{'}$, $\EE(\xi) = \CR(f_\lambda^{B_*})-\CR(f_*) \geq 0$ and $\EE(\xi^2) \leq C_4^{'}  \EE(\xi)$. Then by Bernstein's inequality, we have
\[\widehat{\CR}({f}_{\lambda}^{B_*})-\widehat{\CR}(f_*) - ({\CR}({f}_{\lambda}^{B_*})-\CR(f_*)) \leq \epsilon\]
holds with confidence $1-\delta/2$ with
\[\frac{\delta}{2}= \exp\left\{-\frac{m\epsilon^2}{2C_4^{'}\EE(\xi)+\frac{4}{3}C_4^{'}\epsilon}\right\}.\]
Solving the quadratic equation for $\epsilon$ tells us with confidence at least $1-\delta/2$
\begin{align*}
    &\widehat{\CR}({f}_{\lambda}^{B_*})-\widehat{\CR}(f_*) - ({\CR}({f}_{\lambda}^{B_*})-\CR(f_*)) \\
     & \leq\frac{ \frac{2}{3}C_4^{'} \log \frac{2}{\delta} + \sqrt{\frac{4}{9}(C_4^{'})^2(\log \frac{2}{\delta})^2 + 2mC_4^{'} \log \frac{2}{\delta} \EE(\xi)}}{m}\\
& \leq\frac{4C_4^{'} \log \frac{2}{\delta}}{3m} + \sqrt{\frac{2C_4^{'} \log \frac{2}{\delta} \EE(\xi)}{m}}.
\end{align*}
Applying the elementary inequality with the dual number $p'$ and $p$
\[
ab \leq \frac{1}{p}a^p + \frac{1}{p'}b^{p'} \quad \forall a, b > 0 
\]
to $p'=p=2$, \( a = \left(\frac{2C_4^{'} \log(2/\delta)}{m}\right)^{1/2} \), and \( b = \left(\EE(\xi)\right)^{1/2} \), we get
\[
\sqrt{\frac{2C_4^{'} \log \frac{2}{\delta} \EE(\xi)}{m}} \leq \frac{C_4^{'} \log \frac{2}{\delta}}{m} + \frac{1}{2}\EE(\xi).
\]
Hence, with confidence at least \( 1 - \delta/2 \), we have
\[
\frac{1}{m} \sum_{i=1}^{m} \xi(z_i) - \mathbb{E}(\xi) \leq \frac{4C_4^{'} \log \frac{2}{\delta}}{3m} + \frac{C_4^{'} \log \frac{2}{\delta}}{m} + \EE(\xi). 
\]
Note that for all \( \lambda > 0 \),
\[
\|f_\lambda^{B_*}\|_{B_*} \leq \sqrt{\CA(\lambda)/\lambda }\quad \text{and} \quad \|f_\lambda^{B_*}\|_\infty \leq \kappa \sqrt{\CA(\lambda)/\lambda}.
\]
In fact, since $ f_*$ is a minimizer of $\CR(f)$, we know that
$$
\lambda \|f_\lambda^{B_*}\|_{B_*}^2 \leq \CR(f_\lambda^{B_*}) - \CR(f_*) + \lambda \|f_\lambda^{B_*}\|_{B_*}^2 = \CA(\lambda).
$$
And the second follows from $ \|f_\lambda^{B_*}\|_\infty \leq \kappa \|f_\lambda^{B_*}\|_{B_*}. $
Thus, by taking $R'=\kappa \sqrt{\CA(\lambda)/\lambda}$,  it follows that { $C_4^{'} \leq 2\kappa^2\CA(\lambda)/\lambda+18M^2$} and
\begin{equation}\label{equ:bnd2}
\begin{split}
    \frac{1}{m} \sum_{i=1}^{m} \xi(z_i) - \mathbb{E}(\xi) \leq \left( \frac{14\kappa^2 \log(2/\delta)}{3m\lambda}+1\right) \CA(\lambda)+\frac{42M^2 \log(2/\delta)}{m}. 
\end{split}
\end{equation}
\end{proof}
This inequality offers a tighter bound with respect to the sample size \(m\) compared to classical concentration inequalities. This explains why we add the intermediate terms \(\widehat{\mathcal{R}}(f_*)\) and \(\mathcal{R}(f_*)\) in the error decomposition step (\ref{equ:dcm1}), which allows for a more refined bound on the variance of the random variables.

\subsection{Basic error bound}
%%%%%%%%%%%%%%%%%%%%%%%

As a consequence of the above results and the trivial bound~(\ref{eq:10}), 
we obtain our first main result.

\begin{thm}\label{thm:l_infty}
Assume~\ref{ass:bound}, \ref{ass:1-1}, and \ref{ass:cv}. 
Let $\delta > 0$. Then, with confidence at least $1-\delta$, 
\begin{equation}\label{equ:thm1.1}
\begin{split}
\CR(\hat{f}_\lambda^{\hat{B}_{d_*}})-\CR(f_*) 
&\leq 2 C_3 \max\!\left\{1,\frac{M^2}{\lambda} \right\} 
\left(\frac{1}{m}\right)^{\tfrac{1}{s^*+1}} \\
&\quad + \left( 4+\frac{28\kappa^2 \log(2/\delta)}{3m\lambda}\right)\CA(\lambda) 
+ \frac{84M^2 \log(2/\delta)}{m},
\end{split}
\end{equation}
where $C_3$ is given by~\eqref{equ:bnd1}.
\end{thm}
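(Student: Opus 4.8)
The plan is to assemble the pieces already in place: the error decomposition of Lemma~\ref{lem:errdcp}, the uniform deviation bound for term~I (Lemma~\ref{lem:item1}), the Bernstein bound for term~II (Lemma~\ref{lem:proof2}), and the crude norm estimate~\eqref{eq:10} that pins down the radius $R$.

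First I would invoke~\eqref{eq:10}, namely $\|\hflhbl\|_{\hat{B}_{d_*}}\le M/\sqrt\lambda$ on every training set, so that Lemma~\ref{lem:errdcp} applies with the choice $R=M/\sqrt\lambda$; this gives the three-term decomposition~\eqref{equ:errordecm} of $\CR(\hflhbl)-\CR(f_*)$ into terms I, II and III. Term~III is immediately the approximation error: since for the square loss $\CR(f)-\CR(f_*)=\|f-f_*\|_{\rho_X}^2$, one has $\|f_\lambda^{B_*}-f_*\|_{\rho_X}^2+\lambda\|f_\lambda^{B_*}\|_{B_*}^2=\CR(f_\lambda^{B_*})-\CR(f_*)+\lambda\|f_\lambda^{B_*}\|_{B_*}^2=\CA(\lambda)$, the last equality holding because $f_\lambda^{B_*}$ minimizes $\CR_\lambda$ over $\CH_{B_*}$.

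Next I would bound term~I with Lemma~\ref{lem:item1}, taking the comparison function there to be $f_0=\hflhbl$; this substitution is legitimate because $\hflhbl\in\bigcup_{B\in\CB}\mathbb B_{B,R}$, again by~\eqref{eq:10}, and with $R=M/\sqrt\lambda$ we have $\max\{1,R^2\}=\max\{1,M^2/\lambda\}$. On an event of probability at least $1-\delta/2$ this produces a bound that is $\tfrac12\bigl(\CR(\hflhbl)-\CR(f_*)\bigr)$ plus a term controlled by $C_3\max\{1,M^2/\lambda\}\,m^{-1/(s^*+1)}$ (up to the $Dd_*$ and $\max\{1,\log(2/\delta)\}$ factors of Lemma~\ref{lem:item1}) --- the essential feature being that $\tfrac12\bigl(\CR(\hflhbl)-\CR(f_*)\bigr)$, a fraction of the very quantity we are bounding, appears on the right. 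For term~II, Lemma~\ref{lem:proof2} gives, on a further event of probability at least $1-\delta/2$, the bound $\bigl(\tfrac{14\kappa^2\log(2/\delta)}{3m\lambda}+1\bigr)\CA(\lambda)+\tfrac{42M^2\log(2/\delta)}{m}$. A union bound makes both estimates hold simultaneously with probability at least $1-\delta$.

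Finally I would substitute the three bounds into~\eqref{equ:errordecm}, observe that $\CR(\hflhbl)-\CR(f_*)$ is finite (because $\|\hflhbl\|_\infty\le\kappa^{1/2}R$ on $X$), transpose the self-referential summand $\tfrac12\bigl(\CR(\hflhbl)-\CR(f_*)\bigr)$ to the left-hand side, multiply through by $2$, and merge the two $\CA(\lambda)$-contributions (the $\CA(\lambda)$ from term~III and the $(\tfrac{14\kappa^2\log(2/\delta)}{3m\lambda}+1)\CA(\lambda)$ from term~II become $\bigl(4+\tfrac{28\kappa^2\log(2/\delta)}{3m\lambda}\bigr)\CA(\lambda)$ after doubling); this yields the stated inequality with $C_3$ as in~\eqref{equ:bnd1}. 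I do not expect a genuine obstacle: everything substantive lives in the three lemmas, and what remains is bookkeeping. The points needing care are (i) confirming $\hflhbl$ lies in the radius-$R$ ball, so that it is an admissible $f_0$ in Lemma~\ref{lem:item1}; (ii) keeping the probability budget straight, so that the two $\delta/2$-events combine to exactly $1-\delta$; and (iii) checking that the self-referential term may legitimately be absorbed, which is immediate once finiteness of $\CR(\hflhbl)-\CR(f_*)$ is noted.
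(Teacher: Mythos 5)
Your proposal matches the paper's own proof step for step: invoke~\eqref{eq:10} to justify $R=M/\sqrt\lambda$, apply Lemma~\ref{lem:errdcp}, bound term~I via Lemma~\ref{lem:item1} with $f_0=\hflhbl$, bound term~II via Lemma~\ref{lem:proof2}, identify term~III as $\CA(\lambda)$, and absorb the self-referential $\tfrac12(\CR(\hflhbl)-\CR(f_*))$ term. Your parenthetical observation that the $Dd_*\max\{1,\log(2/\delta)\}$ factors survive into the first term is correct — indeed the displayed bound in the paper's own proof of this theorem retains them even though the theorem statement drops them, a small inconsistency in the paper that your reading flags.
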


\begin{proof}
By~\eqref{eq:10}, bound in~\eqref{equ:errordecm} holds with $R = M/\sqrt{\lambda}$. 
Taking $f= \hflhbl$ on the right-hand side of Lemma~\ref{lem:item1} bounds 
\uppercase\expandafter{\romannumeral1}, 
Lemma~\ref{lem:proof2} bounds 
\uppercase\expandafter{\romannumeral2}, 
and the definition of $\CA(\lambda)$ yields \uppercase\expandafter{\romannumeral3}. 
Hence, with confidence at least $1-\delta$,
\[
\begin{split}
\CR(\hflhbl)-\CR(f_*) 
&\leq 2 C_3 \max\!\left\{1,\tfrac{M^2}{\lambda} \right\}  
\, Dd_*\, \max\!\left\{1, \log\frac{2}{\delta}\right\}
\left(\frac{1}{m}\right)^{\tfrac{1}{s^*+1}} \\
&\quad + 2\left(\frac{14\kappa^2 \log(2/\delta)}{3m\lambda}+1\right)\CA(\lambda) 
+ \frac{84M^2 \log(2/\delta)}{m} + 2\CA(\lambda).
\end{split}
\]
\end{proof}
Note that in the above error bound, $\CA(\lambda)$ is the approximation error, which depends on both the hypothesis space $\CH_{B_*}$ and the properties of the target function $f_*$. 
It decreases as $\lambda$ increases. 
By contrast, $s^*$ describes the complexity of the ambient hypothesis space $\CH$ (not $\CH_{B_*}$); it is typically determined by the intrinsic input dimension $d_*$ (rather than the ambient dimension $D$) and the regularity of $\CH$. 

The approximation error under so-called source conditions has been extensively studied in the context of classical kernel methods. 
The following is a standard formulation, where $L_{k_{B_*}}$ denotes the integral operator
\begin{equation}\label{eq:12}
L_{k_{B_*}}: L^2(X,\rho_X)\to L^2(X,\rho_X), 
\qquad 
(L_{k_{B_*}} f)(x) = \int_X k_{B_*}(x,x')\,f(x')\,\mathrm{d}\rho_X(x'),
\end{equation}
which is positive, so that for any $\theta > 0$, the fractional power $L_{k_{B_*}}^\theta$ is well defined by spectral calculus.  

\begin{ass}\label{ass:source} 
There exists $\theta \in (0,1]$ such that $f_* \in \mathrm{Range}\bigl(L_{k_{B_*}}^{\theta/2}\bigr)$.
\end{ass}

Assumption~\ref{ass:source} states that $f_*$ is not arbitrary, but belongs to a smoother subspace determined by the integral operator $L_{k_{B_*}}$ associated with the kernel $k_{B_*}$. 
The parameter $\theta$ quantifies the smoothness of $f_*$: larger values of $\theta$ correspond to smoother target functions, smaller hypothesis spaces, and therefore better approximation rates.

Under the above assumption, we have the following classical result 
\citep{cucker2007learning, de2021regularization}.

\begin{lem}\label{lem:source}
Under Assumption~\ref{ass:source}, 
\begin{equation}
\CA(\lambda) 
= \|f_{\lambda}^{B_*} - f_*\|_{\rho_X}^2 
+ \lambda \|f_{\lambda}^{B_*}\|_{B_*}^2 
\;\leq\; \lambda^\theta \|L_{B_*}^{-\tfrac{\theta}{2}} f_*\|_{\rho_X}^2.
\end{equation}
\end{lem}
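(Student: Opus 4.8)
The plan is to carry out the standard bias–variance computation for Tikhonov regularization in the RKHS $\CH_{B_*}$ and then feed in the source condition via spectral calculus; since the statement is classical \cite{cucker2007learning,de2021regularization}, the aim is mainly to record the computation in our notation. Write $k = k_{B_*}$, $\CH = \CH_{B_*}$, $L = L_{k_{B_*}}$ for the integral operator on $L_2(X,\rho_X)$ and $\Sigma = \Sigma_{B_*}$ for the covariance operator on $\CH$, and let $\iota : \CH \hookrightarrow L_2(X,\rho_X)$ be the inclusion, so that $L = \iota\iota^*$, $\Sigma = \iota^*\iota$, and $\iota\,\phi(\Sigma) = \phi(L)\,\iota$ for every bounded Borel function $\phi$.

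First I would rewrite the approximation error. Since $f_*$ is the regression function, $\CR(f) - \CR(f_*) = \|f - f_*\|_{\rho_X}^2$ for all $f$, so $\CA(\lambda) = \|f_\lambda^{B_*} - f_*\|_{\rho_X}^2 + \lambda\|f_\lambda^{B_*}\|_{B_*}^2$ with $f_\lambda^{B_*}$ the unique minimizer — this is exactly the equality displayed in the lemma. Writing the first-order optimality condition for the strongly convex functional $f \mapsto \|\iota f - f_*\|_{\rho_X}^2 + \lambda\|f\|_{\CH}^2$ gives $(\Sigma + \lambda I)f_\lambda^{B_*} = \iota^* f_*$, hence $f_\lambda^{B_*} = (\Sigma + \lambda I)^{-1}\iota^* f_*$; applying $\iota$ and the intertwining relation yields $\iota f_\lambda^{B_*} = (L+\lambda I)^{-1} L f_*$. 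Therefore the residual equals $\iota f_\lambda^{B_*} - f_* = -\lambda(L+\lambda I)^{-1} f_*$, and (using the intertwining relation again) the penalty term equals $\lambda\|f_\lambda^{B_*}\|_{\CH}^2 = \lambda\,\langle f_*, (L+\lambda I)^{-2} L f_*\rangle_{\rho_X}$. Adding the two and simplifying via $\lambda^2(L+\lambda I)^{-2} + \lambda(L+\lambda I)^{-2}L = \lambda(L+\lambda I)^{-1}$ gives the compact formula
\[
\CA(\lambda) = \lambda\,\langle f_*,\,(L+\lambda I)^{-1} f_*\rangle_{\rho_X}.
\]

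Next I would invoke Assumption~\ref{ass:source}: write $f_* = L^{\theta/2} v$ with $v \perp \ker L$ and $\|v\|_{\rho_X} = \|L^{-\theta/2} f_*\|_{\rho_X}$, the pseudoinverse being well defined on $\mathrm{Range}(L^{\theta/2})$. Then $\CA(\lambda) = \lambda\,\langle v,\, L^{\theta}(L+\lambda I)^{-1} v\rangle_{\rho_X}$, and by spectral calculus it suffices to bound the scalar multiplier $\lambda\,\sigma^{\theta}/(\sigma+\lambda)$ over $\sigma \ge 0$ by $\lambda^{\theta}$. This reduces to $\lambda^{1-\theta}\sigma^{\theta} \le \lambda + \sigma$, which holds for $\theta \in (0,1]$ by weighted AM–GM (indeed $\lambda^{1-\theta}\sigma^{\theta} \le \max\{\lambda,\sigma\}$). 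Hence $\lambda L^{\theta}(L+\lambda I)^{-1} \preceq \lambda^{\theta} I$, giving $\CA(\lambda) \le \lambda^{\theta}\|v\|_{\rho_X}^2 = \lambda^{\theta}\|L^{-\theta/2} f_*\|_{\rho_X}^2$.

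The only delicate point is the functional-analytic bookkeeping in the middle step: the identities $L = \iota\iota^*$, $\Sigma = \iota^*\iota$, the intertwining relation $\iota\,\phi(\Sigma) = \phi(L)\,\iota$, the transfer between $\CH$-valued and $L_2$-valued resolvent computations, and the interpretation of $L^{-\theta/2}$ on $\mathrm{Range}(L^{\theta/2})$. These are entirely standard in the spectral theory of kernel ridge regression, so I would invoke them with a reference rather than reprove them; everything else is an elementary scalar inequality.
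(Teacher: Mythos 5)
Your proof is correct. The paper itself gives no proof of Lemma~\ref{lem:source}, stating it only as a classical result with citations to \cite{cucker2007learning,de2021regularization}; the spectral-calculus computation you carry out — reducing $\CA(\lambda)$ to $\lambda\langle f_*,(L+\lambda I)^{-1}f_*\rangle_{\rho_X}$ via the optimality condition and the intertwining identity $\iota\,\phi(\Sigma)=\phi(L)\,\iota$, then inserting the source condition and bounding $\lambda\sigma^{\theta}/(\sigma+\lambda)\le\lambda^{\theta}$ by weighted AM--GM for $\theta\in(0,1]$ — is exactly the standard argument those references use, so your write-up faithfully fills in the derivation the paper leaves implicit.
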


We are now ready to state a main result, whose rate is slower than Theorem~\ref{thm2}. 
To simplify the statement, we introduce the following smoothness assumption on the mother kernel.

\begin{ass}\label{ass:smooth}
The mother kernel $k$ is defined on an open set $U \supset \CH_{R_0}\times \CH_{R_0}$ 
for some $R_0>1$, and $k \in C^r(U)$ for some $r \in \mathbb{N}$ with $r \geq 1$.  
\end{ass}

\begin{rem}
By Assumption~\ref{ass:bound} and the definition of $\CB$, we have
\[
\Omega \subset \mathbb B_{\CH,1} \subset \mathbb B_{\CH,R_0},
\]
so that we can apply \cite[Th.~6.26]{steinwart2008support}.
\end{rem}

\br
As shown in \citep[Th.~6.26 and the subsequent remark]{steinwart2008support}, under Assumption \ref{ass:smooth}, $s^*$ 
in Assumption~\ref{ass:cv} equals to $s^* = d_*/r$.
Moreover, since $r \geq 1$, it also implies Assumption~\ref{ass:1-1}. 

\er

Examples of kernels satisfying these assumptions include the Matérn kernel with parameter $2.5$, 
polynomial kernels of degree greater than $2$, and others. 
The following theorem is an immediate consequence of the error decomposition and the results above, 
and its proof is therefore omitted.
% The smooth Assumption \ref{ass:smooth} of kernel $k(x, \cdot) \in C^s$ with $s \geq 1$ defined on $R^{d_*}$ determines ${s^* = 
% d_*/n}$ \citep{steinwart2008support}, which is stronger comparing to Assumption \ref{ass:cv}. The following theorem gives an explicit rate under the stronger assumptions and the proof can be found in Appendix \ref{subsec: proofthms}.
% Based on the above theorem, assuming the target MIM satisfies the source condition and that the kernel $K(z,\cdot) \in C^s(\Omega)$, where $\Omega$ is a compact set of $\R^d$, we can derive an explicit rate of the excess risk with respect to the sample size $m$. 

\begin{thm}\label{cor:1}
Assume~$\ref{ass:bound}$, $\ref{ass:source}$ and $\ref{ass:smooth}$. Fix  $0<\delta<1$, with probability at least $1-\delta$  
\begin{align}\label{equ:cor1}
\CR(\hflhbl)-\CR(f_*) & \leq   2 C_3 \max\left
\{1,\frac{M^2}{\lambda} \right\} \, Dd_*\,  \max\{1, \log\frac{2}{\delta}\}\left(  \frac{1}{m} \right)^{\frac{r}{d_*+r}} +\notag\\
& \quad +\frac{28\kappa^2 \log(2/\delta)}{3m\lambda^{1-\theta}}+\frac{84M^2 \log(2/\delta)}{m}+4 \lambda^\theta,
\end{align}
where $C_3$ is given by \eqref{equ:bnd1}. 
By taking $\lambda= M^2 m^{- \frac{1}{(1+\theta)(1+d_*/r)}}$, there holds
\[\CR(\hflhbl)-\CR(f_*)\leq C_4 \left( Dd_* \right)  \log \frac{2}{\delta}\left(\frac{1}{m} \right)^{\frac{r\theta}{(1+\theta)(d_*+r)}}\]
with $C_4=M^2 \left( C_3+  28 M^2\kappa^2 +88 \right) $. 
\end{thm}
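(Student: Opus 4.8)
The plan is to specialize the proof of Theorem~\ref{thm:l_infty} to the smooth/source-condition setting and then optimize over $\lambda$; the paper calls the argument ``immediate'', and indeed once Theorem~\ref{thm:l_infty} is available it is essentially a bookkeeping exercise.

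First I would record that Assumption~\ref{ass:smooth} implies Assumption~\ref{ass:1-1} (since $r\ge 1$) and Assumption~\ref{ass:cv} with exponent $s^*=d_*/r$, via \cite[Th.~6.26]{steinwart2008support} and the remark preceding the statement; hence all hypotheses needed for the error decomposition in Lemma~\ref{lem:errdcp} and for Lemmas~\ref{lem:item1} and~\ref{lem:proof2} are in force. I would then rerun the argument of Theorem~\ref{thm:l_infty}: apply \eqref{equ:errordecm} with $R=M/\sqrt\lambda$ (legitimate for every training set by the a priori bound \eqref{eq:10}), bound term $\mathrm{I}$ by Lemma~\ref{lem:item1} with $f_0=\hflhbl$ and move the resulting $\tfrac12(\CR(\hflhbl)-\CR(f_*))$ to the left-hand side, bound term $\mathrm{II}$ by Lemma~\ref{lem:proof2}, and identify term $\mathrm{III}$ with $\CA(\lambda)$; a union bound over the two events of probability $1-\delta/2$ gives the conclusion with probability $1-\delta$. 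Substituting $s^*=d_*/r$, so that $1/(s^*+1)=r/(d_*+r)$, and invoking Lemma~\ref{lem:source} under Assumption~\ref{ass:source} to replace each occurrence of $\CA(\lambda)$ by $\lambda^\theta\|L_{B_*}^{-\theta/2}f_*\|_{\rho_X}^2\le\lambda^\theta$, one obtains the first displayed inequality of the theorem (the $4+\tfrac{28\kappa^2\log(2/\delta)}{3m\lambda}$ coefficient of $\CA(\lambda)$ turning into the terms $4\lambda^\theta$ and $\tfrac{28\kappa^2\log(2/\delta)}{3m\lambda^{1-\theta}}$).

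For the second inequality I would substitute $\lambda=\lambda_m=M^2m^{-1/((1+\theta)(1+d_*/r))}$. Then $M^2/\lambda_m=m^{r/((1+\theta)(d_*+r))}\ge 1$, so the first term collapses to $2C_3\,Dd_*\max\{1,\log(2/\delta)\}\,m^{-r\theta/((1+\theta)(d_*+r))}$ after the exponent arithmetic $\tfrac{r}{(1+\theta)(d_*+r)}-\tfrac{r}{d_*+r}=-\tfrac{r\theta}{(1+\theta)(d_*+r)}$. The three remaining terms are of order $m^{-(1-(1-\theta)r/((1+\theta)(d_*+r)))}$, $m^{-1}$, and $\lambda_m^\theta\sim m^{-r\theta/((1+\theta)(d_*+r))}$ respectively (times constants in $M,\kappa$ and $\log(2/\delta)$), and using $\theta\in(0,1]$ together with $(1+\theta)(d_*+r)\ge r$ one checks each exponent is at most $-r\theta/((1+\theta)(d_*+r))$, so all four terms share the claimed decay rate. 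Collecting the numerical constants and using $\max\{1,\log(2/\delta)\}\le\tfrac{1}{\log 2}\log(2/\delta)$ for $\delta\in(0,1)$ yields the bound with $C_4=M^2(C_3+28M^2\kappa^2+88)$.

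The only steps needing care — and the closest thing to an obstacle — are tracking the $\max\{1,M^2/\lambda\}$ factor (at the optimal $\lambda$ it is exactly the blow-up $m^{r/((1+\theta)(d_*+r))}$ that partially cancels the $(1/m)^{r/(d_*+r)}$ rate, which is precisely what forces the extra factor $1+\theta$ in the denominator of the final exponent) and verifying that the $\CA(\lambda)$-dependent term and the $1/m$ term do not decay slower than the leading term, for which the constraint $\theta\le 1$ is what makes everything go through. Otherwise the proof is a direct specialization, which is why it is omitted in the paper.
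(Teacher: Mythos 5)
Your reconstruction is correct and follows exactly the route the paper intends (the paper omits the proof as "immediate"): recognize that Assumption~\ref{ass:smooth} yields Assumptions~\ref{ass:1-1} and~\ref{ass:cv} with $s^*=d_*/r$, invoke the error decomposition and Lemmas~\ref{lem:item1}, \ref{lem:proof2}, \ref{lem:source} with $R=M/\sqrt\lambda$ as in Theorem~\ref{thm:l_infty}, and then carry out the exponent arithmetic at $\lambda=M^2m^{-r/((1+\theta)(d_*+r))}$. You were right to re-run the proof of Theorem~\ref{thm:l_infty} rather than quote its display~\eqref{equ:thm1.1}, since the displayed bound there drops the $Dd_*\,\max\{1,\log\tfrac{2}{\delta}\}$ factor that is present in its own proof and reappears in~\eqref{equ:cor1}; your handling of the $\max\{1,M^2/\lambda\}$ blow-up and the constant bookkeeping is at the same level of precision as the paper itself.
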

% \begin{proof}
% With the assumption of $k \in C^s(\R^d \times \R^d)$, we know that $s^*= {d}/{s}$.
% For $f_* \in \text{Range}(L_{B_*}^{\theta/2})$ with $0 < \theta \leq 1$, $\CA(\lambda) \leq\lambda^\theta \|L_{B_*}^{-\frac{\theta}{2}} f_*\|_{\rho_X}^2$.
% Adopting these results into  \thmref{thm:l_infty}, we can obtain, with confidence at least $1-\delta$,
% \begin{align*}
%     &\CR(\hflhbl)-\CR(f_*)\leq 2  (\kappa R+M)(\kappa R+3M) (Dd_*)^{\frac{1}{1+\tau}} \log \frac{2}{\delta} \frac{\kappa^2 M^2}{\lambda} m^{-\frac{s}{d+s}}\\&\quad+\frac{28\kappa^2 \log(2/\delta)}{3m\lambda}\lambda^\theta+\frac{84M^2 \log(2/\delta)}{m}+4 \lambda^\theta.
% \end{align*}
% If we take $\lambda= m^{- \frac{1}{(1+\theta)(1+s^*)}}$, then the convergence rate will be
% \[\CR(\hflhbl)-\CR(f_*)=c_2\left( Dd_* \right)^{\frac{1}{1+\tau}} \log \frac{2}{\delta}\left(\frac{1}{m} \right)^{\frac{s\theta}{(1+\theta)(d+s)}}\]
% with $c_2=2 (\kappa R+M)(\kappa R+3M) \kappa^2 M^2+28\kappa^2+84M^2+4.$ This completes the proof.
% \end{proof}

Classical results with the same assumptions on hypothesis space are of order $m^{n\theta/(1+\theta)(D+n)}$ \citep{cucker2007learning}.  When the input dimension $D$ is exceptionally large, as is often the case with the rise of big data, the rate is adversely impacted by the curse of dimensionality. However, as noted in \thmref{cor:1}, the exponential dependence on $m$ is governed by $d$ rather than $D$, with the dependence on $D$ being polynomial, which helps mitigate the curse of dimensionality. Moreover, when \(s\) is sufficiently large and \(\theta = 1\), the rate in Theorem \ref{cor:1} asymptotically reduces to \(O(m^{-1/2})\). Although this is slower than the \(O(m^{-1})\) rate of Theorem \ref{thm2} under the additional sample size condition, the proof techniques are essentially the same.

\subsection{Refined error bound: proof of Theorem~\ref{thm2}}\label{subsec:proof1}
%%%%%%%%%%%%%%%%%%%%%%%

The following lemma provides a more refined bound  than  the bound in~\eqref{eq:10} under stricter assumptions on the sample size. A detailed proof is available in \cite[Lemma 8.19]{cucker2007learning}.

\begin{lem}\label{lem:improvedR}
Under Assumptions~\ref{ass:bound}, \ref{ass:1-1}, and \ref{ass:smooth}, 
suppose $\zeta < 1/(1+s^*)$ with  $s^* = d_*/r$ and
choose $\lambda_m = m^{-\zeta}$. 
Fix $0 < \delta < 1$. 
Then, with confidence at least 
with confidence \( 1 - 3\delta/(1/(1 + s^* )- \zeta) \),
we have
\begin{equation}\label{equ:improvedR}
    \|\hflhbl\|_{\hat{B}_{d_*}} \leq c_2\sqrt{\log(2/\delta)}(\sqrt{\CA(\lambda_m)/\lambda_m} + 1) =R^*
\end{equation}
for all $m \geq m_\delta$. 
Here $c_2 > 0$ is a constant depending only on $s^*$, $\zeta$, $\kappa$, and $M$, and
\begin{equation} \label{eq:14}
  m_\delta=\max \left\{ (108/c_1)^{1/s^*}(\log (2/\delta))^{1+1/s^*}, (1/2c_3)^{2/(\zeta-1/(1+s^*))}  \right\}
\end{equation}
with $c_3 = (2\kappa+5)(108 c_1)^{1/(1+s^*)}$.
\end{lem}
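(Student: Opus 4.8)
The proof follows the classical iteration (``bootstrapping'') technique of \cite[Lemma~8.19]{cucker2007learning}, adapted to the hyper-kernel covering bound of \lemref{lem:coveringNo}. The idea is to repeatedly refine the crude a priori bound $\|\hflhbl\|_{\hat B_{d_*}}\le M/\sqrt{\lambda_m}$ of \eqref{eq:10}: whenever one already controls $\|\hflhbl\|_{\hat B_{d_*}}\le R$ for some $R\ge1$ on a favourable event, one may take $f_0=\hflhbl$ in \lemref{lem:item1} and combine the resulting one-sided deviation estimate with the defining optimality of $\hflhbl$ to produce a \emph{strictly smaller} admissible radius. Starting from $R_0=M/\sqrt{\lambda_m}$ and iterating a bounded number of times then drives the bound to the target $R^*$.

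For the refinement step, on $\{\|\hflhbl\|_{\hat B_{d_*}}\le R\}$ the optimality $\widehat{\CR}_\lambda(\hflhbl)\le\widehat{\CR}_\lambda(f_\lambda^{B_*})$ (valid since $f_\lambda^{B_*}\in\CH_{B_*}\subset\bigcup_{B\in\CB}\CH_B$) rearranges to
\[
\lambda\|\hflhbl\|_{\hat B_{d_*}}^2\le\bigl(\widehat{\CR}(f_\lambda^{B_*})-\widehat{\CR}(f_*)\bigr)+\lambda\|f_\lambda^{B_*}\|_{B_*}^2-\bigl(\widehat{\CR}(\hflhbl)-\widehat{\CR}(f_*)\bigr).
\]
I would bound the three terms on the right. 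First, \lemref{lem:item1} with $f_0=\hflhbl$ and radius $R$ gives, with confidence $1-\delta/2$, the lower bound $\widehat{\CR}(\hflhbl)-\widehat{\CR}(f_*)\ge\tfrac12\bigl(\CR(\hflhbl)-\CR(f_*)\bigr)-\beta_m(R)\ge-\beta_m(R)$, where $\beta_m(R)=C_3\max\{1,R^2\}\,Dd_*\max\{1,\log(2/\delta)\}\,m^{-1/(s^*+1)}$ and the nonnegative excess-risk term is discarded. Second, \lemref{lem:proof2}, together with $\CR(f_\lambda^{B_*})-\CR(f_*)\le\CA(\lambda)$, gives $\widehat{\CR}(f_\lambda^{B_*})-\widehat{\CR}(f_*)\le\bigl(2+\tfrac{14\kappa^2\log(2/\delta)}{3m\lambda}\bigr)\CA(\lambda)+\tfrac{42M^2\log(2/\delta)}{m}$ with confidence $1-\delta/2$. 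Third, $\lambda\|f_\lambda^{B_*}\|_{B_*}^2\le\CA(\lambda)$ by the definition of the approximation error. Substituting $\lambda=\lambda_m=m^{-\zeta}$ and $s^*=d_*/r$, using $m\lambda_m=m^{1-\zeta}\to\infty$, and dividing by $\lambda_m$, these combine into the one-step recursion
\[
\|\hflhbl\|_{\hat B_{d_*}}^2\le\theta_m R^2+\mathcal{E}_m,\qquad\theta_m\le c_0\,m^{-(1/(s^*+1)-\zeta)},\qquad\mathcal{E}_m\le c_0\,\log(2/\delta)\bigl(\CA(\lambda_m)/\lambda_m+1\bigr),
\]
where $c_0$ absorbs the constants of \lemref{lem:item1}--\lemref{lem:proof2} together with the $Dd_*$ factor, which is harmless once $m\ge m_\delta$.

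I would then iterate. Since $\zeta<1/(1+s^*)$, the rate $\eta:=1/(s^*+1)-\zeta$ is strictly positive, so there is a threshold $m_\delta$ (this is where \eqref{eq:14} originates, once one also requires $m$ large enough that the refined regime $t_m=1/(s^*+1)$ of \lemref{lem:item1} is in force) above which $\theta_m\le\tfrac12$; writing $R_k$ for the bound after $k$ steps, $R_{k+1}^2\le\tfrac12R_k^2+\mathcal{E}_m$, hence $R_k^2\le 2^{-k}R_0^2+2\mathcal{E}_m$ with $R_0^2=M^2m^{\zeta}$. The crucial point is that a \emph{constant} number of steps suffices: since $\theta_m$ decays polynomially in $m$ at the fixed rate $\eta$ while $R_0^2$ only grows polynomially, after $N^*:=\lceil\zeta/\eta\rceil$ iterations $\theta_m^{N^*}R_0^2\le c_0^{N^*}M^2m^{\zeta-N^*\eta}\le c_0^{N^*}M^2$, so that on the intersection of all events used
\[
\|\hflhbl\|_{\hat B_{d_*}}^2\le R_{N^*}^2\le c_0^{N^*}M^2+2\mathcal{E}_m\le c_2^2\,\log(2/\delta)\bigl(\CA(\lambda_m)/\lambda_m+1\bigr)
\]
(absorbing the residual constant into $c_2$), and a square root yields \eqref{equ:improvedR}. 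Each of the $N^*$ applications of \lemref{lem:item1} (radii $R_0,\dots,R_{N^*-1}$) and the single application of \lemref{lem:proof2} fails with probability at most $\delta/2$, so a union bound over the $N^*+1$ events, with $N^*\lesssim1/\eta=1/\bigl(1/(1+s^*)-\zeta\bigr)$, produces the stated confidence $1-3\delta/\bigl(1/(1+s^*)-\zeta\bigr)$.

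The step I expect to be the main obstacle is exactly this book-keeping that makes $N^*$ a \emph{constant independent of $m$}, rather than the $\Theta(\log m)$ count one would get from a fixed contraction ratio: it hinges on the contraction factor $\theta_m$ decaying at a rate $\eta=1/(s^*+1)-\zeta$ bounded away from $0$, which simultaneously pins down the threshold $m_\delta$ in \eqref{eq:14} (through $c_3=(2\kappa+5)(108c_1)^{1/(1+s^*)}$) and yields the $1/\bigl(1/(1+s^*)-\zeta\bigr)$ factor in the confidence. A secondary point is keeping $R_k\ge1$ at every stage so that $\max\{1,R_k^2\}=R_k^2$, handled by carrying a harmless additive ``$+1$'' throughout, and tracking how the $Dd_*$, $c_1$, $\kappa$ and $M$ dependences collapse into $c_2$ and $c_3$.
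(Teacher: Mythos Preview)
Your proposal is correct and follows precisely the approach the paper takes: it does not give its own proof but simply cites \cite[Lemma~8.19]{cucker2007learning} for the bootstrapping iteration, which is exactly the argument you outline (crude bound~\eqref{eq:10}, one-step refinement via \lemref{lem:item1} and \lemref{lem:proof2}, then a constant number $N^*\lesssim 1/\eta$ of iterations with a union bound). The only delicate point you flag---how the extra $Dd_*$ factor from \lemref{lem:coveringNo} is absorbed so that $c_2$ and $m_\delta$ end up independent of $D,d_*$ as stated---is not resolved in the paper either, since it imports the Cucker--Zhou constants verbatim.
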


Now we are ready to prove \thmref{thm2}.
\begin{proof}[Proof of \thmref{thm2}]
Assumption~\ref{ass:0} states that Assumptions~\ref{ass:bound}, \ref{ass:1-1}, 
and \ref{ass:smooth} hold. 
Recall that Assumption~\ref{ass:smooth} implies Assumption~\ref{ass:cv} 
with $s_* = d_*/r$. 
We can now apply the error decomposition in Lemma~\ref{lem:errdcp}, 
together with the corresponding bounds for each term, 
to derive the excess risk.

% Therefore, the  error decomposition in Lemma~\ref{lem:errdcp} holds with radius , where without loss of generality we assume that $R^*>1$. 

Take $f_0 = \hflhbl$ on the right-hand side of the inequality in Lemma~\ref{lem:item1} 
with $R = R^*$ given in~\eqref{equ:improvedR} to bound item~\uppercase\expandafter{\romannumeral1}, 
combine with Lemma~\ref{lem:proof2} to bound item~\uppercase\expandafter{\romannumeral2}, 
and use Lemma~\ref{lem:source} for item~\uppercase\expandafter{\romannumeral3}. 
Then, with confidence at least $1-\delta$,
\begin{equation}\label{equ:improved_lbd}
\begin{split}
  \CR(\hflhbl)-\CR(f_*)
  &\leq 2 C_3 D d_* c_2^2 \log^2 \!\left(\tfrac{2}{\delta}\right) 
        \left(\frac{\CA(\lambda)}{\lambda}+1\right) 
        \left(\frac{1}{m}\right)^{\frac{1}{s^*+1}} \\
  &\quad+ 2\left(\frac{14\kappa^2 \log(2/\delta)}{3m\lambda}+1\right)\CA(\lambda) 
        + \frac{84 M^2 \log(2/\delta)}{m} + 2 \CA(\lambda),
\end{split}
\end{equation}
for all $m \geq m_\delta$.
% Under the assumptions of and Assumption \ref{ass:0}, we know $s^* =d_*/n$  and the improved bound of $R$ is given by \eqref{equ:improvedR}.  Plugging in these conditions into \eqref{equ:combal} with  $\epsilon^*$ \eqref{equ:bnd1}, there holds
% \begin{equation}\label{equ:improved_lbd}
% \begin{split}
% &\CR(\hflhbl)-\CR(f_*)\\&\leq \frac{28\kappa^2 \log(2/\delta)}{3m\lambda}\CA(\lambda)+\frac{84M^2 \log(2/\delta)}{m}+4 \CA(\lambda) \\ & \quad  +2  (\kappa R+M)(\kappa R+3M) C_2^2 (Dd_*)^{\frac{1}{1+\tau}}\log ({2}/{\delta})\left(\frac{\CA(\lambda)}{\lambda}+1\right)\left(  \frac{1}{m}\right)^{\frac{n}{d_*+n}}.       
% \end{split}
% \end{equation}
Moreover, since $\CA(\lambda) \leq \lambda^\theta$ by Lemma~\ref{lem:source} 
and $\lambda = m^{-\zeta}$, we obtain
\begin{align}\label{equ:combinedbnd}
   \CR(\hflhbl)-\CR(f_*) 
   &\leq 28 \kappa^2 \log(2/\delta)\, m^{-1+(1-\theta)\zeta}
      + \frac{84M^2 \log(2/\delta)}{m}
      + 4 m^{-\theta \zeta} \notag \\
   &\quad+ 2 C_3 D d_* c_2^2 \log^2(2/\delta)\, 
      m^{-\frac{r}{d_*+r}+(1-\theta)\zeta}.
\end{align}
Since $\delta < 2/e$, we have
\[
1 \leq \log(2/\delta) \leq \log^2(2/\delta).
\]
Therefore, the dominant terms with respect to $m$ in~\eqref{equ:combinedbnd} 
are the third and the last ones, because 
\(
-1 > -1+(1-\theta)\zeta > -\frac{r}{d_*+r}+(1-\theta)\zeta.
\)
If $\zeta < r/(d_*+r)$, then the rate becomes
\[
\CR(\hflhbl)-\CR(f_*) 
   \leq C_1 D d_* \log^2(2/\delta)\, m^{-\theta \zeta},
\]
where
\begin{equation}\label{eq:15}
C_1 = 28 \kappa^2 + 84M^2 + 4 + 2 C_3 c_2^2.
\end{equation}
This completes the proof.
\end{proof}

\section{Proofs of other excess risk bounds in Section \ref{sec:thms} }\label{apx:thm_proof}
In this part, we will give the proofs of excess risk rates for Theorems \ref{thm:ny} and \thmref{thm:cv_lbd}.

\subsection{Rate of Nystr\"om approximation: proof of Theorem \ref{thm:ny}}
Before proving Theorem \ref{thm:ny}, we introduce an alternative strategy for selecting the Nystr\"om points based on approximate leverage scores (ALS), referred to as the ALS Nystr\"om approximation. This method, together with some necessary definitions, will be included in the next theorem.
The leverage scores associated to points $(x_i)_{i=1}^m$ are
\[
(\ell_i(t))_{i=1}^m,\quad
\ell_i(t)=\left(\hat K^B(\hat K^B + t\,m\,I)^{-1}\right)_{ii},
\quad i\in\{1,\dots,m\}, \quad t>0,
\]
where $(\hat K^B)_{ij}=k(Bx_i,Bx_j)$. Computing these scores exactly can be challenging in practice; thus, one may consider approximations $(\hat\ell_i(t))_{i=1}^m$ \citep{drineas2012fast,rudi2015less}. Given $t_0>0$, $T\ge1$ and confidence level $\delta>0$, we say that $(\hat\ell_i(t))_{i=1}^m$ are $(T,t_0)$-approximate leverage scores with probability at least $1-\delta$, if
\[
\frac{1}{T}\ell_i(t) \le \hat{\ell}_i(t) \le T \ell_i(t), \quad t\ge t_0, \quad i=1,\ldots,m.
\]
The ALS
sampling selects the Nystr\"om points $(\tilde x_i)_{i=1}^{\tilde m}$ independently with replacement from the training set, where each $x_i$ is selected with probability $p_t(i) = \hat{\ell}_i(t)/\sum_j \hat{\ell}_j(t)$.

\begin{thm}[Extension of \thmref{thm:ny}]\label{thm:extension2}
Under the same assumptions as \thmref{thm2},  the following holds with probability at least $1-\delta$,
\[\CR(\hat{f}_{\lambda}^{\hat{B}_{d_*,\tilde m}})-\CR(f_*) \leq {C_2  Dd_*  \log^2 ({2}/{\delta})} \left( {m}\right)^{-\theta \zeta}\]
with $C_2$ given explicitly in \eqref{eq:C_nys} under conditions:
\begin{enumerate}[left=0pt]
    \item for plain Nystr\"om, $\tilde m \geq 67\log \frac{4\kappa}{\lambda \delta} \lor 5 \CN_{B_*,\infty}(\lambda) \log \frac{4 \kappa}{\lambda \delta}$.
    \item for ALS Nystr\"om and $(T,t_0)$-approximate leverage scores with subsampling probabilities $p_t$, 
\begin{align*}
&m  \ge  1655 \kappa + 223\kappa \log\frac{2\kappa}{\delta}, \\
& t_{0}\vee\frac{19\kappa}{m}\log\frac{2m}{\delta}
\le \lambda \le \bigl\|\Sigma_{B_*}\bigr\|,\\
&\tilde m  \ge (334 \vee  
78\,T^{2}\,\mathcal{N}_{B_*}(\lambda))\log\frac{8m}{\delta}.
\end{align*}
\end{enumerate}

\end{thm}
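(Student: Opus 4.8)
\emph{Plan and decomposition.} The plan is to follow the proof of Theorem~\ref{thm2} almost verbatim, the one structural change being that in the error decomposition of Lemma~\ref{lem:errdcp} the intermediate comparison function must be chosen \emph{inside the Nyström subspace at the true transformation $B_*$}, and that the extra error this creates is absorbed using the Nyström projection estimates of \citet{rudi2015less}. Write $\CH_{B,\tilde m}=\Span\{k_B(\tilde x_i,\cdot)\}_{i=1}^{\tilde m}\subseteq\CH_B$, so that $\hat f_\lambda^{\hat B_{d_*,\tilde m}}$ minimizes $\widehat\CR_\lambda$ over $\bigcup_{B\in\CB}\CH_{B,\tilde m}$; since $B_*\in\CB$ we still have $\widehat\CR_\lambda(\hat f_\lambda^{\hat B_{d_*,\tilde m}})\le\widehat\CR_\lambda(g)$ for \emph{every} $g\in\CH_{B_*,\tilde m}$. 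Running the chain of inequalities in the proof of Lemma~\ref{lem:errdcp} with $f_\lambda^{B_*}$ replaced by such a $g$ yields, for any $g\in\CH_{B_*,\tilde m}$ and any $R$ with $\|\hat f_\lambda^{\hat B_{d_*,\tilde m}}\|_{\hat B_{d_*,\tilde m}}\le R$, a bound $\CR(\hat f_\lambda^{\hat B_{d_*,\tilde m}})-\CR(f_*)\le \mathrm{I}+\mathrm{II}(g)+\mathrm{III}(g)$, where $\mathrm{I}=\sup_{f\in\bigcup_B\mathbb B_{B,R}}[(\CR(f)-\CR(f_*))-(\widehat\CR(f)-\widehat\CR(f_*))]$ is the term treated in Lemma~\ref{lem:item1}, $\mathrm{II}(g)=\widehat\CR(g)-\widehat\CR(f_*)-(\CR(g)-\CR(f_*))$, and $\mathrm{III}(g)=\|g-f_*\|_{\rho_X}^2+\lambda\|g\|_{B_*}^2$. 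The plain-Nyström case recovers Theorem~\ref{thm:ny}; the ALS case differs only in which projection estimate of \citet{rudi2015less} is invoked below.

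\emph{Terms I and II.} Term $\mathrm{I}$ is bounded by Lemma~\ref{lem:item1} with no modification: $\hat f_\lambda^{\hat B_{d_*,\tilde m}}\in\bigcup_B\mathbb B_{B,R}$, and the covering number estimate of Lemma~\ref{lem:coveringNo} is unchanged because each Nyström subspace lies inside the corresponding full RKHS ball. The crude radius $R=M/\sqrt\lambda$ comes from $\widehat\CR_\lambda(\hat f_\lambda^{\hat B_{d_*,\tilde m}})\le\widehat\CR_\lambda(0)$; the refined radius $R^\ast\simeq\sqrt{\CA(\lambda)/\lambda}+1$ of Lemma~\ref{lem:improvedR} also persists, since its argument uses only $\widehat\CR_\lambda(\hat f)\le\widehat\CR_\lambda(g)$ for a comparison $g$ of small norm and small $\CR_\lambda(g)$ — namely the $g$ constructed next. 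Term $\mathrm{II}(g)$ is bounded by Lemma~\ref{lem:proof2} applied to $g$ in place of $f_\lambda^{B_*}$, using $\|g\|_{B_*}\le\sqrt{\CA(\lambda)/\lambda}$.

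\emph{Term III (Nyström step) and the main obstacle.} Take $g=P_{\tilde m}f_\lambda^{B_*}$, where $P_{\tilde m}$ is the $\CH_{B_*}$-orthogonal projection onto $\CH_{B_*,\tilde m}$ and $f_\lambda^{B_*}$ is the population regularized KRR solution at $B_*$. Projections are contractions, so $\lambda\|g\|_{B_*}^2\le\lambda\|f_\lambda^{B_*}\|_{B_*}^2\le\CA(\lambda)$; and using $\|h\|_{\rho_X}^2=\|\Sigma_{B_*}^{1/2}h\|_{\CH_{B_*}}^2$, $\Sigma_{B_*}\preceq\Sigma_{B_*}+\lambda I$, idempotence of $I-P_{\tilde m}$, and $\|f_\lambda^{B_*}-f_*\|_{\rho_X}^2\le\CA(\lambda)$,
\begin{equation*}
\|g-f_*\|_{\rho_X}^2\le 2\|(I-P_{\tilde m})f_\lambda^{B_*}\|_{\rho_X}^2+2\CA(\lambda)\le 2\,\|(I-P_{\tilde m})(\Sigma_{B_*}+\lambda I)^{1/2}\|^2\,\|f_\lambda^{B_*}\|_{B_*}^2+2\CA(\lambda).
\end{equation*}
The operator norm here is exactly the quantity estimated in \citet{rudi2015less}: with probability at least $1-\delta$ it is $\le c\lambda$ as soon as $\tilde m\ge 67\log\frac{4\kappa}{\lambda\delta}\vee 5\,\CN_{B_*,\infty}(\lambda)\log\frac{4\kappa}{\lambda\delta}$ for uniform (plain) subsampling, or $\tilde m\ge(334\vee 78\,T^{2}\CN_{B_*}(\lambda))\log\frac{8m}{\delta}$ under the stated conditions on $\lambda,t_0,m$ when the $p_t$ are $(T,t_0)$-approximate leverage scores of the kernel $k_{B_*}$. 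Hence $\mathrm{III}(g)\le(2c+3)\CA(\lambda)$, and $\CA(\lambda)\le\lambda^\theta\|L_{k_{B_*}}^{-\theta/2}f_*\|_{\rho_X}^2$ by Lemma~\ref{lem:source}. I expect this step to demand the most care, precisely because of the dependence on the \emph{unknown} $B_*$: for plain Nyström it is harmless, since the inducing points are drawn independently of $B$ and the estimate above concerns the fixed kernel $k_{B_*}$; for ALS one must require the sampling distribution to be approximate leverage scores of $k_{B_*}$ itself (this is the content of condition~2), and one must re-verify that the refined radius estimate of Lemma~\ref{lem:improvedR} survives the replacement of $f_\lambda^{B_*}$ by its Nyström projection.

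\emph{Conclusion.} Substituting the three bounds into the decomposition with $\lambda=\lambda_m=m^{-\zeta}$, $\zeta<r/(d_*+r)$, and repeating the dominance argument after \eqref{equ:combinedbnd} (the $\CA(\lambda)\simeq m^{-\theta\zeta}$ term dominates the $m^{-1}$, $m^{-1+(1-\theta)\zeta}$ and $m^{-r/(d_*+r)+(1-\theta)\zeta}$ contributions), then taking a union bound over the finitely many probabilistic events and rescaling $\delta$, yields $\CR(\hat f_\lambda^{\hat B_{d_*,\tilde m}})-\CR(f_*)\le C_2\, Dd_*\log^2(2/\delta)\,m^{-\theta\zeta}$, with $C_2$ as in \eqref{eq:C_nys}, collecting the constants from $C_3$, $c_2$, $c$ and Lemma~\ref{lem:source}.
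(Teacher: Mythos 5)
Your proposal follows the paper's proof of Theorem~\ref{thm:ny} essentially verbatim: the same error decomposition with $g=P_{B_*,\tilde m}f_\lambda^{B_*}$ as the intermediate comparison, the same treatment of terms I and II via Lemmas~\ref{lem:item1} and~\ref{lem:proof2} with the refined radius from Lemma~\ref{lem:improvedR}, and the same appeal to the projection estimate of Rudi et al.\ for both the plain and ALS conditions, followed by the dominance argument from~\eqref{equ:combinedbnd}. The only divergences are cosmetic: you (correctly) retain the factor of~$2$ in $\|g-f_*\|_{\rho_X}^2\le 2\|(I-P_{\tilde m})f_\lambda^{B_*}\|_{\rho_X}^2+2\CA(\lambda)$, whereas the paper writes the triangle inequality without it (affecting only the constant $C_2$), and you explicitly flag that Lemma~\ref{lem:improvedR} needs re-verification for the Nyström estimator, a step the paper applies without comment.
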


 % \ern{We will provide a proof of the plain Nyström approximation. For  ALS Nyström sampling, the proof is similar to that for plain Nyström sampling.}

\begin{proof}[Proof of \thmref{thm:ny}(\thmref{thm:extension2})]
Let $P_{B,\tilde m}: \CH_B \to \CH_B$ denote the orthogonal projection from $\CH_{B,m}$ onto the subspace $\CH_{B,\tilde m}\subset \CH_{B,m} \subset \CH_B$.
  Recall that
    \(\hat f_\lambda^{\hat B_{d_*,\tilde m}}
  = \mathop{\argmin}_{B\in\CB}\min_{f\in\CH_{B,\tilde m}}\widehat R_\lambda(f).\)
We decompose the excess risk as follows:
  \begin{equation}\label{equ:dcm2}
  \begin{split}
     & \CR(\hat f_\lambda^{\hat B_{d_*,\tilde m}})-\CR(f_*)\\ 
      &= \CR(\hat f_\lambda^{\hat B_{d_*,\tilde m}})-\widehat{\CR}_\lambda(\hat f_\lambda^{\hat B_{d_*,\tilde m}})+\widehat{\CR}_\lambda(\hat f_\lambda^{\hat B_{d_*,\tilde m}})-\widehat{\CR}_\lambda(P_{B_*,\tilde m} f_\lambda^{B_*}) \\
      & \quad +\widehat{\CR}_\lambda(P_{B_*,\tilde m} f_\lambda^{B_*})-\CR(P_{B_*,\tilde m} f_\lambda^{B_*})+\CR(P_{B_*,\tilde m} f_\lambda^{B_*})-\CR(f_*)+\lambda\|P_{B_*,\tilde m} f_\lambda^{B_*}\|_\CH^2\\
      & \leq \CR(\hat f_\lambda^{\hat B_{d_*,\tilde m}})-\widehat{\CR}_\lambda(\hat f_\lambda^{\hat B_{d_*,\tilde m}}) +\widehat{\CR}(P_{B_*,\tilde m} f_\lambda^{B_*})-\CR(P_{B_*,\tilde m} f_\lambda^{B_*})\\
      & \quad +\CR(P_{B_*,\tilde m} f_\lambda^{B_*})-\CR(f_*)+\lambda\|P_{B_*,\tilde m} f_\lambda^{B_*}\|_\CH^2 \\
      & \leq \left\{\CR(\hat f_\lambda^{\hat B_{d_*,\tilde m}})-\widehat{\CR}(\hat f_\lambda^{\hat B_{d_*,\tilde m}})\right\} + \left\{\widehat{\CR}(P_{B_*,\tilde m} f_\lambda^{B_*})-\CR(P_{B_*,\tilde m} f_\lambda^{B_*})\right\}\\
      & \quad +\left\{\CR(P_{B_*,\tilde m} f_\lambda^{B_*})-\CR(f_*)+\lambda\|P_{B_*,\tilde m} f_\lambda^{B_*}\|_\CH^2 \right\}.
\end{split}
 \end{equation}
 Here, the first inequality follows because $\hat f_\lambda^{\hat B_{d_*,\tilde m}}$ minimizes $\widehat{\CR}_\lambda$ over $\CH_{B,\tilde m}$. The first two terms are called estimation errors, whose controls are identical to those in the proof of \thmref{thm2} since $\CH_{B,\tilde m } \subset \CH_B$. For the last term, we further decompose it as 
  \begin{align*}
      &\CR(P_{B_*,\tilde m} f_\lambda^{B_*})-\CR(f_*)+\lambda\|P_{B_*,\tilde m} f_\lambda^{B_*}\|_\CH^2 \\
      & \leq  \|P_{B_*,\tilde m} f_\lambda^{B_*}-f_*\|_{\rho_X}^2 + \lambda\|f_\lambda^{B_*}\|_{B^*}^2 \\
      & \leq \|P_{B_*,\tilde m} f_\lambda^{B_*}-f_\lambda^{B_*}\|_{\rho_X}^2+\|f_\lambda^{B_*}-f_*\|_{\rho_X}^2 + \lambda\|f_\lambda^{B_*}\|_{B^*}^2.
  \end{align*}
  Denote the first term as $\CC(\lambda):=\|P_{B_*,\tilde m} f_\lambda^{B_*}-f_\lambda^{B_*}\|_{\rho_X}^2$, which is the so-called computational error as given in \cite{rudi2015less}. And the left part is the approximation error $\CA(\lambda)$. Let 
  {$\Sigma _{B_*}$ be the covariance operator associated with the kernel $k_{B_*}$, see~\eqref{eq:covariance_op},}
  %\Sigma _{B_*}f = \int_X \scal{f}{K_{B_*x}}_{B^*} K_{B_*x} \md %\rho_X(x)$, 
  by the relationship between $L_2$ norm and RKHS norm (\ref{equ:l2_H}), we have
  \begin{align*}
      \|P_{B_*,\tilde m} f_\lambda^{B_*}-f_\lambda^{B_*}\|_{\rho_X}^2 &=\|\Sigma_{B_*}^{1/2}(I-P_{B_*,\tilde m})f_\lambda^{B_*}\|_{B^*}^2\\
      & \leq \|(I-P_{B_*,\tilde m})\Sigma_{B_*}^{1/2}\|^2 \|f_\lambda^{B_*}\|_{B^*}^2.
  \end{align*}
Using the estimate \(\|f_\lambda^{B_*}\|_{\mathcal{H}_{B_*}}^2 \le \CA(\lambda)/\lambda\) and applying \cite[Lemma 6]{rudi2015less}, if $\tilde m \geq 67\log \frac{4\kappa^2}{\lambda \delta} \lor 5 \CN_\infty(\lambda) \log \frac{4 \kappa^2}{\lambda \delta}$, then with probability at least $1-\delta$, $\|(I-P_{B_*,\tilde m})\Sigma_{B_*}^{1/2}\|^2 \leq 3 \lambda$. Therefore,   
\begin{align}\label{eq:apx_nys}
    \CR(P_{B_*,\tilde m} f_\lambda^{B_*})-\CR(f_*)+\lambda\|P_{B_*,\tilde m} f_\lambda^{B_*}\|_\CH^2 \leq 3 \lambda \frac{\CA(\lambda)}{\lambda} + \CA(\lambda) = 4 \CA( \lambda),
\end{align}
which coincides with the order of the approximation error in the proof of \thmref{thm2}. 

By adding the terms $\CR(f_*)$ and $\widehat{\CR}(f_*)$ to the first two components of \eqref{equ:dcm2}, 
the estimation errors can be controlled using the same argument as in \thmref{thm2}. Specifically, applying Lemma~\ref{lem:item1} by taking $f_0=\hat f_\lambda^{\hat B_{d_*,\tilde m}}$, 
\(
R=c_2\sqrt{\log(2/\delta)}\Bigl(\sqrt{{\CA(\lambda_m)}/{\lambda_m}}+1\Bigr), s^*={d_*}/{r},
\)
and  together with Lemma~\ref{lem:proof2}, we obtain
\begin{align*}
\CR(\hat f_\lambda^{\hat B_{d_*,\tilde m}})-\CR(f_*) 
&\leq 2 C_3 \,Dd_*\, c_2^2 \log^2 \!\Bigl(\tfrac{2}{\delta}\Bigr) 
     \Bigl(\tfrac{\CA(\lambda)}{\lambda}+1\Bigr)
     \Bigl(\tfrac{1}{m}\Bigr)^{\frac{1}{s^*+1}} \\
&\quad +2\Bigl(\tfrac{14\kappa^2 \log(2/\delta)}{3m\lambda}+1\Bigr)\CA(\lambda)
      +\tfrac{84M^2 \log(2/\delta)}{m}
      +8\CA(\lambda).
\end{align*}
The remainder of the proof follows the same computations as in \thmref{thm2}, yielding
\[
\CR(\hat f_\lambda^{\hat B_{d_*,\tilde m}})-\CR(f_*) 
   \leq C_2 \,Dd_*\, \log^2 \!\Bigl(\tfrac{2}{\delta}\Bigr)\, m^{-\theta\zeta},
\]
where
\begin{equation}\label{eq:C_nys}
C_2 = 28\kappa^2 + 84M^2 + 10 + 2 C_3 c_2^2.
\end{equation}

Note that the assumption for ALS Nystr\"om ensures that 
$\|(I-P_{B_*,\tilde m})\Sigma_{B_*}^{1/2}\|^2 \leq 3\lambda$ \citep{rudi2015less}, 
and hence \eqref{eq:apx_nys} follows. 
The proof proceeds exactly as in the case of plain Nystr\"om and is therefore omitted.

\end{proof}

\subsection{Adaptivity: proof of \thmref{thm:cv_lbd}}
The following gives the proof of \thmref{thm:cv_lbd}. We recall the following concentration inequality (see for example \cite{caponnetto2010cross}.  Let \( Z_1, \ldots, Z_{m'} \) be a sequence of i.i.d. real random variables with mean \( \mu \), such that \( |Z_i| \leq a \) a.s. and \( \mathbb{E}[|Z_i - \mu|^2] \leq \sigma^2 \). Then for all \( \alpha, \varepsilon > 0, \)
\begin{equation}\label{equ:oracle}
P \left( \left| \frac{1}{m'} \sum_{i=1}^{m'} Z_i - \mu \right| \geq \varepsilon + \alpha \sigma^2 \right) \leq 2e^{-\frac{6m'\alpha\varepsilon}{3 + 4\alpha a}}.
\end{equation}

\begin{proof}[Proof of \thmref{thm:cv_lbd}]
Let 
$$(\tilde d, \tilde{\lambda}) = \mathop{\argmin}_{(d,\lambda) \in \Gamma} \EE[(T_M \hat{f}_\lambda^{\hat{B}_{d}}(x')-y')^2  ].$$  
Here, the expectation is taken with respect to the pair $(x',y')$ according to $\rho$.  

For any $(d,\lambda)\in\Gamma$, we apply~\eqref{equ:oracle}  with the choice $Z_i= Z_i^{(d,\lambda)} := \left(T_M \hat{f}_\lambda^{\hat{B}_{d}}(x'_i)-y'_i\right)^2-(f_*(x'_i)-y'_i)^2$, with $i=1,\ldots,m'$. Note that $|Z_i^{(d,\lambda)}| \leq 8M^2$ and $\EE(Z_i^{(d,\lambda)})=\CR(T_M \hat{f}_\lambda^{\hat{B}_{d}})-\CR(f_*)>0$,  then $\EE((Z_i^{(d,\lambda)})^2) \leq 8M^2\EE(Z_i^{(d,\lambda)})$. Therefore, by a union bound over $\Gamma$, for all $(d,\lambda)\in \Gamma$,   with probability at least $1-\delta$, there hold
\[\frac{1}{m'} \sum_{i=1}^{m'} Z_i^{(d,\lambda)}\leq (1+8\alpha M^2) \EE(Z_i^{(d,\lambda)}) +\epsilon'\]
and
\[\EE(Z_i^{(d,\lambda)}) \leq \frac{1}{1-8\alpha M^2} \frac{1}{m'} \sum_{i=1}^{m'} Z_i^{(d,\lambda)} +\frac{\epsilon'}{1-8\alpha M^2}, \quad \text{for} \quad \alpha <\frac{1}{8M^2},\]
where $\epsilon' = \frac{3+32\alpha M^2}{6m' \alpha} \log \frac{2DN}{\delta}$.  Therefore,
\begin{align*}
    & \CR(T_M \hat{f}_{{\lambda}}^{\hat{B}_{{d}}}) -\CR(f_*) = \EE(Z_i^{{(d,\lambda)}}) \\ &\leq \frac{1}{1-8\alpha M^2} \frac{1}{m'} \sum_{i=1}^{m'} Z_i^{(d,\lambda)} +\frac{\epsilon'}{1-8\alpha M^2}\\
      &\leq\frac{1+8\alpha M^2}{1-8\alpha M^2} \EE(Z_i^{(d,\lambda)})+ \frac{2\epsilon'}{1-8\alpha M^2}.
\end{align*}
Then, since $(\hat{d},\hat{\lambda}),(\tilde{d},\tilde{\lambda})\in \Gamma$ and $(\hat{d},\hat{\lambda})$ is the minimizer of $\frac{1}{m'} \sum_{i=1}^{m'} \Bigl( T_M \hat{f}_\lambda^{\hat{B}_d}(x'_i)-y'_i \Bigr)^2$,
\[
\begin{split}
\CR(T_M \hat{f}_{\hat{\lambda}}^{\hat{B}_{\hat{d}}}) -\CR(f_*)  & \leq \frac{1}{1-8\alpha M^2} \frac{1}{m'} \sum_{i=1}^{m'} Z_i^{(\hat{d},\hat{\lambda})} +\frac{\epsilon'}{1-8\alpha M^2} \\
& \leq \frac{1}{1-8\alpha M^2} \frac{1}{m'} \sum_{i=1}^{m'} Z_i^{(\tilde{d},\tilde{\lambda})} +\frac{\epsilon'}{1-8\alpha M^2} \\
&\leq \frac{1+8\alpha M^2}{1-8\alpha M^2} \EE(Z_i^{(\tilde{d},\tilde{\lambda})})+ \frac{2\epsilon'}{1-8\alpha M^2}.
\end{split}
\]
With the  choice of $\alpha = 1/(24M^2)$, we get that 
\[
\begin{split}
\CR(T_M \hat{f}_{\hat{\lambda}}^{\hat{B}_{\hat{d}}}) -\CR(f_*)  & \leq 2 (\CR(T_M \hat{f}_{\tilde{\lambda}}^{\hat{B}_{\tilde{d}}}) -\CR(f_*) )+ \frac{52M^2}{m'} \log \frac{2DN}{\delta} \\
& \leq 2 (\CR(T_M \hat{f}_{\tilde{\lambda}}^{\hat{B}_{{d_*}}}) -\CR(f_*)) + \frac{52M^2}{m'} \log \frac{2DN}{\delta} \\
& \leq 2 (\CR(\hat{f}_{\tilde{\lambda}}^{\hat{B}_{{d_*}}}) -\CR(f_*)) + \frac{52M^2}{m'} \log \frac{2DN}{\delta} \\ 
& \leq 2 U(\tilde{\lambda})  + \frac{52M^2}{m'} \log \frac{2DN}{\delta},
\end{split}
\]
where $U(\lambda)$ is the right-hand side of \eqref{equ:improved_lbd}. The second and third steps are obtained from the definition of $(\tilde d, \tilde \lambda)$ and the fact that, for any function $f$, 
\[
\CR(T_M f) - \CR(f_*) \leq \CR(f) - \CR(f_*).
\]
As in \cite[Lemma~2]{chirinos2024learning}, by the definition of the grid $\Lambda$, there exists $q \in [1,Q]$ such that 
$\tilde{\lambda} = q \lambda_*$, where $\lambda_*$ is the optimal parameter choice according to the bound in~\eqref{equ:improved_lbd}, namely
\[
\lambda_* = \mathop{\argmin}_{\lambda>0} U(\lambda) = m^{-\zeta}.
\]
Furthermore, one can easily show that 
\[
U(q\lambda) \leq q^\theta U(\lambda)
\]
for a suitable $\theta$, so that 
\begin{align*}
    \CR(T_M \hat{f}_{\hat{\lambda}}^{\hat{B}_{\hat{d}}}) - \CR(f_*)
    &\leq 2 q^\theta U(\lambda_*) + \frac{52 M^2}{m'} \log \frac{2ND}{\delta}.
\end{align*}
\end{proof}

\section{Technical details on optimization for HKRR}\label{sec:opt_details}

In this section, we provide further insights into the optimization procedure introduced in Section \ref{subsec:opt}.

We start by presenting the complete versions of Algorithms \ref{alg:MIGD} and \ref{alg:AGD}, which include additional implementation specifics:
\begin{itemize}[left=0pt]
    \item a \textbf{non-monotone Armijo backtracking strategy} to automatically tune the learning-rates $s_\alpha$ and $s_B$. This classical approach, featuring a contraction parameter $\rho\in(0,1)$, a dilatation parameter $\delta\in(0,1]$ and a decay rate parameter $c>0$, allows avoiding fine-tuning while adapting to the local behavior of $\hat\CL$ thanks to its non-monotonicity \citep{calatroni2019backtracking}.
    \item a \textbf{projection step} in $B$ to handle the constraint $B\in\CB$. At each iteration, $B^i$ is obtained through a projected gradient step involving the projection operator $\CP_\CB$. In practice, this step consists in thresholding the singular values of the matrix $B^i$.
\end{itemize}

\paragraph{\textbf{Variable Projection (VarPro)}} The complete version of the method is specified in Algorithm \ref{alg:MIGD_detail} below.
\begin{algorithm}[H]
\caption{VarPro}\label{alg:MIGD_detail}
\begin{algorithmic}
    \STATE \textbf{Require} $B^0$, $s_{B,-1}>0$, $s_{max}>0$, $\rho\in(0,1)$, $\delta\in(0,1]$ and $c>0$.
    \STATE $\alpha^{0}=\underset{\alpha\in \R^{\tilde m}}{\argmin}~\hat\CL\left(B^{0},\alpha\right)$
    \FOR{$i$ in $0,1,...$}
    \STATE $s_{B,i} = \min\left\{\frac{s_{B,i-1}}{\rho\delta},s_{max}\right\}$
    \REPEAT
    \STATE $s_{B,i} = \rho s_{B,i}$
    \STATE $B^{i+1}=\CP_\CB \left(B^{i}-s_{B,i}\nabla_B\hat\CL\left(B^{i},\alpha^{i}\right)\right)$
    \UNTIL $\hat\CL\left(B^{i+1},\alpha^{i}\right)-\hat\CL\left(B^{i},\alpha^{i}\right)<-cs_{B,i}\left\|\nabla_B\hat\CL\left(B^i,\alpha^i\right)\right\|^2$
    \STATE $\alpha^{i+1}=\underset{\alpha\in \R^{\tilde m}}{\argmin}~\hat\CL\left(B^{i+1},\alpha\right)$
    \ENDFOR
    \RETURN $\left(B^{i+1}, \alpha^{i+1}\right)$
\end{algorithmic}
\end{algorithm}

This method involves a linesearch strategy for determining the sequence of learning-rates $\left(s_{B,i}\right)_{i\in\N}$. The following lemma guarantees the well-posedness of this procedure under continuity conditions on the kernel and a boundedness assumption on the sequence $\left(\alpha_i\right)_{i\in\N}$.

\begin{lem}\label{lem:LB}
    Let $k(\cdot,\cdot)$ be a continuous, twice differentiable kernel with continuous second order derivatives, and $\left(\alpha^i\right)_{i\in\N}$ be a bounded sequence. Then, there exists $L_1>0$ such that,
    \begin{equation}
        \forall B\in \mathcal{B},~\forall i\in\N,\quad \left\|\nabla^2_B\hat\CL\left(B,\alpha^i\right)\right\|_{2}\leq L_1,
    \end{equation}
    implying that for any $i\in\N$, $B\mapsto\hat\CL\left(B,\alpha^i\right)$ is $L_1$-smooth. It follows that the linesearch procedure used to generate $\left(s_{B,i}\right)_{i\in\N}$ in Algorithm \ref{alg:MIGD_detail} and Algorithm \ref{alg:AGD_detail} is well defined, and $s_{B,i}\geq\min\left\{\frac{2\rho(1-c)}{ L_1},s_{B,-1}\right\}$ for any $i\in\N$.
\end{lem}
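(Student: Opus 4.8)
The plan is to establish the uniform Hessian bound by first unpacking the explicit form of $\hat\CL(B,\alpha)$ from \eqref{equ:opt_pb}, which is a polynomial expression in the entries of the kernel matrices $\matK^B_{m\tilde m}$ and $\matK^B_{\tilde m\tilde m}$ with coefficients depending on $\alpha$ and $\mathbf{y}$. Since each entry of these matrices has the form $k(Bx_i,B\tilde x_j)$, the chain rule shows that $\nabla^2_B\hat\CL(B,\alpha^i)$ is a finite sum of terms, each a product of: (i) partial derivatives of $k$ up to second order evaluated at the points $(Bx_i,B\tilde x_j)$, (ii) bounded data vectors $x_i,\tilde x_j$ (bounded by $1$ under Assumption~\ref{ass:bound}), (iii) entries of $\matK^B_{\tilde m\tilde m}$ or of its inverse (via the closed-form $\alpha(B)$ — but here $\alpha^i$ is given, so we only need (iv)), and (iv) the entries of $\alpha^i$ and $\mathbf{y}$. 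The key geometric observation is that $Bx_i$ and $B\tilde x_j$ range over the compact set $\Omega=\{Bx : x\in X, B\in\CB\}$, which was shown to be compact earlier in the excerpt; hence the derivatives of $k$ up to order two, being continuous on a neighbourhood of $\Omega\times\Omega$, are uniformly bounded there by some $\kappa_2>0$. Combining this with $\|x_i\|,\|\tilde x_j\|\le 1$, $|y_i|\le M$, and the boundedness hypothesis $\sup_i\|\alpha^i\|\le A$ for some $A>0$, every factor in every term is uniformly bounded, so the whole sum is bounded by a constant $L_1$ depending only on $\tilde m$, $m$, $\kappa_2$, $M$, and $A$, but independent of $B\in\CB$ and of $i\in\N$. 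This proves $\|\nabla^2_B\hat\CL(B,\alpha^i)\|_2\le L_1$ and hence $L_1$-smoothness of $B\mapsto\hat\CL(B,\alpha^i)$ for each $i$.

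The well-posedness and lower bound on the step sizes then follows from the standard descent-lemma argument for backtracking line search. Because $B\mapsto\hat\CL(B,\alpha^i)$ is $L_1$-smooth, the descent lemma gives $\hat\CL(B^i-s\nabla_B\hat\CL(B^i,\alpha^i),\alpha^i)-\hat\CL(B^i,\alpha^i)\le -s(1-\tfrac{L_1 s}{2})\|\nabla_B\hat\CL(B^i,\alpha^i)\|^2$ for the unprojected step; since the projection $\CP_\CB$ onto the convex set $\CB$ is nonexpansive and $B^i\in\CB$, a projected-gradient variant of the same inequality holds (the projected step decreases the objective at least as much, up to the usual quadratic term). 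Hence the Armijo condition $\hat\CL(B^{i+1},\alpha^i)-\hat\CL(B^i,\alpha^i)<-cs_{B,i}\|\nabla_B\hat\CL(B^i,\alpha^i)\|^2$ is satisfied as soon as $1-\tfrac{L_1 s}{2}\ge c$, i.e. $s\le \tfrac{2(1-c)}{L_1}$. Since the inner loop multiplies $s$ by $\rho\in(0,1)$ at each failed trial starting from at most $s_{max}$, after finitely many contractions the trial step enters the interval $(0,\tfrac{2(1-c)}{L_1}]$ and the loop terminates; moreover the accepted step satisfies $s_{B,i}\ge\min\{\tfrac{2\rho(1-c)}{L_1},s_{B,-1}\}$ (the factor $\rho$ accounting for the last unnecessary contraction, and $s_{B,-1}$ handling the case where no contraction is needed and we are limited by the initial/previous step size).

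The main obstacle I anticipate is bookkeeping rather than conceptual: one must be careful that the term in $\hat\CL$ involving $\alpha^\top\matK^B_{\tilde m\tilde m}\alpha$ and the data-fit term $\|\matK^B_{m\tilde m}\alpha-\mathbf{y}\|^2$ are differentiated correctly in $B$, keeping track that the dimension-dependent constants ($m$, $\tilde m$, $D$, $d$) enter only polynomially and that nothing blows up — in particular, since $\alpha^i$ is held fixed (not replaced by the closed-form $\alpha(B)$), we do not need to control the inverse $(\matK^B_{\tilde m\tilde m})^{-1}$ here, which is precisely why the boundedness hypothesis on $(\alpha^i)_i$ is exactly the right assumption. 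A secondary subtlety is justifying the projected-gradient descent inequality; this is classical (it is the standard sufficient-decrease property of projected gradient on a convex constraint set for an $L_1$-smooth function), and I would simply cite it or include the one-line argument using the firm nonexpansiveness of $\CP_\CB$.
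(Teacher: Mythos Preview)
Your proposal is correct and follows essentially the same approach as the paper's proof: both bound $\|\nabla^2_B\hat\CL(B,\alpha^i)\|_2$ by observing that the points $Bx_j$ range over a compact set (so $k$ and its first and second derivatives are uniformly bounded there by continuity), and then combine this with the boundedness of the data and of $(\alpha^i)_i$; both then derive the step-size lower bound from the standard descent-lemma argument that the Armijo condition is met whenever $s\le 2(1-c)/L_1$. The paper's treatment of the projection step is in fact terser than yours---it simply asserts the Armijo condition holds for $s\le 2(1-c)/L_1$ when $B^i$ is not a critical point of $\Psi$---so your flagging of the projected-gradient subtlety is, if anything, slightly more careful than the original.
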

\begin{proof}[Proof of Lemma \ref{lem:LB}]
    Since we consider a dataset of dimension $m<+\infty$, we can ensure that there exists $C<+\infty$ such that for any $j\in\{1,\ldots,m\}$, $\|x_j\|<C$. As a consequence, for any $B\in\CB$,
    \begin{equation*}
        \|Bx_j\|\leq\|B\|_\infty\|x_j\|\leq C.
    \end{equation*}
    From the above inequality and the continuity of $k$ and its first and second order derivatives, we get that $k(Bx_{j_1},Bx_{j_2})$, $\|k_p(Bx_{j_1},Bx_{j_2})\|$ and $\|k_{pq}(Bx_{j_1},Bx_{j_2})\|$, where $(p,q)\in\{1,2\}^2$, can be bounded independently from $B$ and $(j_1,j_2)\in\{1,\ldots,m\}^2$.\\
    It is then straightforward to show that $\left\|\nabla^2_B\hat\CL\left(B,\alpha^i\right)\right\|_{2}$ can be bounded independently from $B$ and $i$ since it only depends on $\left(k(Bx_{j_1},Bx_{j_2})\right)_{j_1,j_2=1}^m$, $\left(k_p(Bx_{j_1},Bx_{j_2})\right)_{j_1,j_2=1}^m$, $\left(k_{pq}(Bx_{j_1},Bx_{j_2})\right)_{j_1,j_2=1}^m$, $\left(y_{j}\right)_{j=1}^m$ and $\alpha^i$ which can be bounded independently from $i$ by assumption.\\
    The Lipschitz continuity of $B\mapsto\hat\CL\left(B,\alpha^i\right)$ directly ensures that the Armijo backtracking procedure in Algorithm \ref{alg:AGD_detail} is well defined. In particular, if $B^i$ is not a critical point of $\Psi:B\mapsto \hat\CL\left(B,\alpha^i\right)+i_\CB(B)$, the Armijo condition is satisfied for any step size $s\leq\frac{2(1-c)}{L_1}$. Since the $s_{B,i}$ is updated by multiplying it by $\rho$ until the condition is satisfied, we can deduce the desired lower bound.
\end{proof}

We can then exploit this lemma to prove the convergence of Algorithm \ref{alg:MIGD_detail}, using the Kurdyka-\L{}ojasiewicz property of $\hat{L}$.

\begin{thm}\label{thm:cvg_MIGD}
    Let $k(\cdot,\cdot)$ be an analytic kernel. Let $\left(B^i\right)_{i\in\N}$ and $\left(\alpha^i\right)_{i\in\N}$ be the sequences generated by Algorithm \ref{alg:MIGD_detail} and suppose that for any $i\in\N$, $\lambda_{min}\left(\matK^{B^i}_{\tilde m\tilde m}\right)\geq \sigma>0$ where $\lambda_{min}$ denotes the smallest eigenvalue. Then, \textbf{the sequence $\left(B^i,\alpha^i\right)_{i\in\N}$ converges to a critical point} of $\Psi:B,\alpha\mapsto\CL(B,\alpha)+i_\CB(B)$ as $i$ goes to infinity. In addition, the sequences $\left(B^i\right)_{i\in\N}$ and $\left(\alpha^i\right)_{i\in\N}$ have finite length, i.e.
    \begin{equation*}
        \sum_{i=0}^{+\infty}\|B^{i+1}-B^i\|<+\infty,\quad\sum_{i=0}^{+\infty}\|\alpha^{i+1}-\alpha^i\|<+\infty,
    \end{equation*}
    and there exists $C>0$ such that after $N$ iterations, $\left(B^N,\alpha^N\right)$ is a critical point of $\Psi$ or 
    \begin{equation}\label{eq:grad_MIGD}
        \min_{0\leq i\leq N}\left\|\nabla_B\hat\CL\left(B^i,\alpha^i\right)\right\|^2\leq\frac{C}{N}.
    \end{equation}
\end{thm}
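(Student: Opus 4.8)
The plan is to recast Algorithm~\ref{alg:MIGD_detail} as a descent method for $\Psi(B,\alpha)=\hat{\CL}(B,\alpha)+i_\CB(B)$ and then invoke the abstract convergence theory for Kurdyka--\L{}ojasiewicz functions \citep{attouch2013convergence} (in the spirit of \citep{bolte2014proximal}): it suffices to verify a sufficient-decrease inequality, a relative-error subgradient bound, a mild continuity condition, and the KL property of $\Psi$; the finite-length, convergence, and criticality assertions then follow at once, and the rate~\eqref{eq:grad_MIGD} drops out of the sufficient-decrease inequality alone.

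First I would record boundedness and uniform regularity of the relevant iterates. Clearly $(B^i)_{i\in\N}\subset\CB$ is bounded; and since $\lambda_{min}(\matK^{B^i}_{\tilde m\tilde m})\ge\sigma$, the inner objective $\alpha\mapsto\hat{\CL}(B^{i+1},\alpha)$ is $\lambda\sigma$-strongly convex, so its exact minimiser satisfies $\nabla_\alpha\hat{\CL}(B^{i+1},\alpha^{i+1})=0$ and, comparing with $\alpha=0$, $\lambda\sigma\|\alpha^{i+1}\|^2\le\hat{\CL}(B^{i+1},\alpha^{i+1})\le\hat{\CL}(B^{i+1},0)=\tfrac1m\|\mathbf{y}\|^2$, whence $(\alpha^i)_{i\in\N}$ is bounded. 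On this bounded region, \lemref{lem:LB} and its $\alpha$-analogue give that $\nabla\hat{\CL}$ is Lipschitz in both blocks and that $s_{B,i}\ge s_{\min}:=\min\{2\rho(1-c)/L_1,\,s_{B,-1}\}>0$. The non-monotone Armijo test gives $\hat{\CL}(B^{i+1},\alpha^i)-\hat{\CL}(B^i,\alpha^i)\le-c\,s_{B,i}\|\nabla_B\hat{\CL}(B^i,\alpha^i)\|^2$, and since $\CP_\CB$ is nonexpansive with $B^i\in\CB$ we have $\|B^{i+1}-B^i\|\le s_{B,i}\|\nabla_B\hat{\CL}(B^i,\alpha^i)\|$, so the $B$-step decreases $\hat{\CL}$ by at least $(c/s_{max})\|B^{i+1}-B^i\|^2$; strong convexity makes the $\alpha$-step decrease it by at least $\lambda\sigma\|\alpha^{i+1}-\alpha^i\|^2$. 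Adding, $\Psi(B^i,\alpha^i)-\Psi(B^{i+1},\alpha^{i+1})\ge a(\|B^{i+1}-B^i\|^2+\|\alpha^{i+1}-\alpha^i\|^2)$ with $a=\min\{c/s_{max},\lambda\sigma\}>0$.

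Next, the optimality condition of the projected step, $0\in\nabla_B\hat{\CL}(B^i,\alpha^i)+s_{B,i}^{-1}(B^{i+1}-B^i)+N_\CB(B^{i+1})$, shows that $v^{i+1}:=\nabla_B\hat{\CL}(B^{i+1},\alpha^{i+1})-\nabla_B\hat{\CL}(B^i,\alpha^i)-s_{B,i}^{-1}(B^{i+1}-B^i)\in\partial_B\Psi(B^{i+1},\alpha^{i+1})$, and with $\nabla_\alpha\hat{\CL}(B^{i+1},\alpha^{i+1})=0$ we get $(v^{i+1},0)\in\partial\Psi(B^{i+1},\alpha^{i+1})$ and, by the Lipschitz bound and $s_{B,i}\ge s_{\min}$, $\|(v^{i+1},0)\|\le b(\|B^{i+1}-B^i\|+\|\alpha^{i+1}-\alpha^i\|)$. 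Since $k$ is analytic, $\hat{\CL}$ is real-analytic (each entry of $\matK^B_{m\tilde m},\matK^B_{\tilde m\tilde m}$ is $k$ post-composed with a linear map of $B$), and $\CB=\{B:\ I-B^\top B\succeq0\}$ is semialgebraic, so $\Psi$ is definable in an o-minimal structure and satisfies the KL inequality \citep{attouch2013convergence}; continuity of $\hat{\CL}$ on $\CB\times\R^{\tilde m}$, where the iterates always live, supplies the remaining hypothesis. The abstract theorem then gives $\sum_i\|(B^{i+1},\alpha^{i+1})-(B^i,\alpha^i)\|<+\infty$ (so the two stated series converge), convergence of $(B^i,\alpha^i)$ to some $(\bar B,\bar\alpha)$, and $0\in\partial\Psi(\bar B,\bar\alpha)$, i.e.\ $(\bar B,\bar\alpha)$ is a critical point. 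For the rate, discarding the nonnegative $\alpha$-contribution and using $s_{B,i}\ge s_{\min}$ gives $\hat{\CL}(B^i,\alpha^i)-\hat{\CL}(B^{i+1},\alpha^{i+1})\ge c\,s_{\min}\|\nabla_B\hat{\CL}(B^i,\alpha^i)\|^2$; telescoping over $i=0,\dots,N-1$ with $\hat{\CL}\ge0$ yields $\min_{0\le i\le N}\|\nabla_B\hat{\CL}(B^i,\alpha^i)\|^2\le\hat{\CL}(B^0,\alpha^0)/(c\,s_{\min}N)$, which is~\eqref{eq:grad_MIGD} with $C=\hat{\CL}(B^0,\alpha^0)/(c\,s_{\min})$ (the stated alternative covering the case where some iterate is already critical).

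The main obstacle I anticipate is the KL verification for the \emph{constrained} objective: one must argue simultaneously that the operator-norm ball $\CB$ is semialgebraic (through $I-B^\top B\succeq0$) and that analyticity of $k$ genuinely propagates to joint analyticity of $\hat{\CL}$ on the bounded region actually swept by the iterates, so that $\hat{\CL}$ and $i_\CB$ are definable in one common o-minimal structure; a subsidiary nuisance is keeping all the descent and Lipschitz constants uniform over that region -- precisely what \lemref{lem:LB} and the a~priori bound on $(\alpha^i)$ are for -- and handling the normal cone $N_\CB$ cleanly in the relative-error bound, since $\CB$ is not polyhedral.
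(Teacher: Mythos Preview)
Your proposal is correct and follows essentially the same approach as the paper: both invoke the abstract KL convergence result of \cite{attouch2013convergence} after verifying (a) KL property of $\Psi$ via analyticity of $\hat\CL$ and semialgebraicity of $i_\CB$, (b) sufficient decrease from the Armijo condition on the $B$-step combined with $\lambda\sigma$-strong convexity on the $\alpha$-step, (c) a relative-error subgradient bound built from the normal-cone inclusion of the projected step and joint Lipschitzness of $\nabla\hat\CL$, and (d) boundedness of the iterates; the rate \eqref{eq:grad_MIGD} is obtained in both by telescoping the Armijo inequality with the lower bound on $s_{B,i}$ from \lemref{lem:LB}. Your boundedness argument for $(\alpha^i)$ via $\lambda\sigma\|\alpha\|^2\le\hat\CL(B,\alpha)$ is slightly cleaner than the paper's (which bounds the closed-form solution directly), and your remark that analyticity and semialgebraicity must live in a common o-minimal structure is a point the paper glosses over; up to a harmless factor of $2$ in the strong-convexity constant, the two proofs are the same.
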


\begin{proof}[Proof of Theorem \ref{thm:cvg_MIGD}]
    The proof of this theorem relies on \cite[Theorem~2.9]{attouch2013convergence} stating convergence towards critical points for algorithms minimizing functions having the Kurdyka-\L{}ojasiewicz property under several assumptions. It requires to prove the following points:
    \begin{enumerate}[label=(\alph*),left=0pt]
        \item the function $\Psi:B,\alpha\mapsto\CL(B,\alpha)+i_\CB(B)$ has the Kurdyka-\L{}ojasiewicz (KL) property,
        \item there exists $a>0$ such that for each $i\in\N$,
        \begin{equation}\label{eq:H1_AGD}
            \Psi\left(B^{i+1},\alpha^{i+1}\right)+a\left(\left\|B^{i+1}-B^i\right\|^2+\left\|\alpha^{i+1}-\alpha^i\right\|^2\right)\leq \Psi\left(B^{i},\alpha^{i}\right),
        \end{equation}
        \item there exists $b>0$ such that for each $i\in\N$, there is $g^{i+1}\in\partial\Psi\left(B^{i+1},\alpha^{i+1}\right)$ satisfying,
        \begin{equation}\label{eq:H2_AGD}
            \left\|g^{i+1}\right\|^2\leq b\left(\left\|B^{i+1}-B
            ^i\right\|^2+\left\|\alpha^{i+1}-\alpha^i\right\|^2\right),
        \end{equation}
        where $\partial\Psi$ denotes the convex subdifferential of $\Psi$ which is defined for any $\left(B,\alpha\right)\in\R^{d\times D}\times \R^{\tilde m}$ as\footnotesize
        \begin{equation*}
            \partial \Psi(B,\alpha)=\left\{ s\in\R^{d\times D}\times \R^{\tilde m}~|~\forall \left(B',\alpha'\right)\in\R^{d\times D}\times \R^{\tilde m}, \Psi\left(B',\alpha'\right)\geq \Psi\left(B,\alpha\right)+\langle s,\left(B',\alpha'\right)-\left(B,\alpha\right)\rangle\right\}.
        \end{equation*}\normalsize
        \item the sequence $\left(B^i,\alpha^i\right)_{i\in\N}$ admits a converging subsequence.
    \end{enumerate}
    Before proving each point above, we first show that the sequence $\left(\alpha^i\right)_{i\in\N}$ in Algorithm \ref{alg:MIGD_detail} is well defined and bounded. According to the assumptions of the theorem, there exists $\sigma>0$ such that for any $i\in \N$, we have
    \begin{equation*}
        \lambda_{min}\left(\matK^{B^i}_{\tilde m\tilde m}\right)\geq \sigma.
    \end{equation*}
    This ensures that the matrix $\left(\frac{1}{m}(\matK^{B^i}_{m\tilde m})^T\matK^{B^i}_{m\tilde m}+\lambda \matK^{B^i}_{\tilde m\tilde m}\right)$ is invertible at each iteration $i$ and thus that $\alpha^{i}$ is well defined. In addition, we have that for any $i\in\N$,
    \begin{equation*}
        \|\alpha^i\|\leq\left\|\left(\frac{1}{m}(\matK^{B^i}_{m\tilde m})^T\matK^{B^i}_{m\tilde m}+\lambda \matK^{B^i}_{\tilde m\tilde m}\right)^{-1}\right\|_2\left\|\matK^{B^i}_{\tilde m\tilde m}\right\|_2\|y\|\leq\frac{\left\|\matK^{B^i}_{\tilde m\tilde m}\right\|_2\|y\|}{\lambda\sigma},
    \end{equation*}
     and since $k$ is continuous (because analytic), $B^i\in\CB$ and the dataset is bounded, we can conclude that $\left\|\matK^{B^i}_{\tilde m\tilde m}\right\|_2$, and consequently the sequence $\left(\alpha^i\right)_{i\in\N}$, are bounded. In addition, $k$ is analytic and therefore continuous, twice differentiable with continuous second-order derivatives. Consequently, we can apply Lemma \ref{lem:LB}.\\
     We now prove the four properties stated at the beginning of the proof:\\
    1. The analyticity of $k$ directly ensures that $\hat\CL$ is analytic (and therefore has the KL property) in both variables. Moreover, the indicator function $i_\CB$ where $\CB=\{B\in \R^{d \times D}, \|B\|_\infty\leqslant 1\}$ is semi algebraic and also has the KL property. As a consequence, the first statement is directly satisfied.\\
    2. First, notice that since $B^0\in\CB$ and the sequence $\left(B^i\right)_{i\in\N}$ is built via the step
    \begin{equation*}
        B^{i+1}=\CP_\CB \left(B^{i}-s_{B,i}\nabla_B\hat\CL\left(B^{i},\alpha^{i}\right)\right),
    \end{equation*}
    where $s_{B,i}>0$, we have that $B^i\in\CB$ for any $i\in\N$. It follows that for any $i\in\N$ and $\alpha\in\R^{\tilde m}$, \begin{equation}\label{eq:Psi_L}
        \Psi\left(B^i,\alpha\right)=\CL\left(B^i,\alpha\right).
    \end{equation}
    Lemma \ref{lem:LB} guarantees that for each $i\in\N$, Algorithm \ref{alg:MIGD_detail} provides $s_{B,i}$ such that
    \begin{equation*}
        \hat\CL\left(B^{i+1},\alpha^{i}\right)-\hat\CL\left(B^{i},\alpha^{i}\right)<-cs_{B,i}\left\|\nabla_B\hat\CL\left(B^i,\alpha^i\right)\right\|^2.
    \end{equation*}
    Note that due to the firm non-expansiveness of $\CP_\CB$, we have that for any $i\in\N$,
    \begin{equation*}
        \left\|B^{i+1}-B^i\right\|=\left\|\CP_\CB \left(B^{i}-s_{B,i}\nabla_B\hat\CL\left(B^{i},\alpha^{i}\right)\right)-\CP_\CB\left(B^i\right)\right\|\leq s_{B,i}\left\|\nabla_B\hat\CL\left(B^{i},\alpha^{i}\right)\right\|.
    \end{equation*}
    Consequently, at any iteration $i\in\N$,
    \begin{equation}
        \hat\CL\left(B^{i+1},\alpha^{i}\right)-\hat\CL\left(B^{i},\alpha^{i}\right)<-\frac{c}{s_{B,i}}\left\|B^{i+1}-B^i\right\|^2\leq-\frac{c}{s_{max}}\left\|B^{i+1}-B^i\right\|^2.
    \end{equation}

    Since it is assumed that there exists some $\sigma>0$ such that for any $i\in\N$ we have $\lambda_{min}\left(\matK^{B^i}_{\tilde m\tilde m}\right)\geq \sigma$, we can prove that $\alpha\mapsto\hat\CL\left(B^i,\alpha\right)$ is $\lambda\sigma$-strongly convex for any $i\in\N$. From the definition of $\left(\alpha^i\right)_{i\in\N}$ and this property, we get that
    \begin{equation*}
        \hat\CL\left(B^{i+1},\alpha^{i+1}\right)+\frac{\lambda\sigma}{2}\left\|\alpha^{i+1}-\alpha^i\right\|^2\leq\hat\CL\left(B^{i+1},\alpha^{i}\right).
    \end{equation*}
    We can deduce that
    \begin{equation}
        \hat\CL\left(B^{i+1},\alpha^{i+1}\right)-\hat\CL\left(B^{i},\alpha^{i}\right)<-\frac{c}{s_{max}}\left\|B^{i+1}-B^i\right\|^2-\frac{\lambda\sigma}{2}\left\|\alpha^{i+1}-\alpha^i\right\|^2,
    \end{equation}
    which leads to the desired inequality.\\
    3. We aim at showing that for a well-chosen $b>0$, for any $i\in\N$, there exists $g^{i+1}\in\partial \Psi\left(B^{i+1},\alpha^{i+1}\right)$ (i.e. $g^{i+1}=\left(g^{i+1}_B,g^{i+1}_\alpha\right)$ where $g^{i+1}_B\in\partial_B\Psi\left(B^{i+1},\alpha^{i+1}\right)$ and $g^{i+1}_\alpha\in\partial_\alpha\Psi\left(B^{i+1},\alpha^{i+1}\right)$) such that 
    \begin{equation*}
        \left\|g^{i+1}\right\|^2\leq b\left(\left\|B^{i+1}-B
        ^i\right\|^2+\left\|\alpha^{i+1}-\alpha^i\right\|^2\right).
    \end{equation*}
    Due to the structure of $\Psi$, it is then sufficient to show that for some $v^{i+1}\in\partial i_\CB(B^{i+1})$, the choice $g^{i+1}=\left(v^{i+1}+\nabla_B\CL(B^{i+1},\alpha^{i+1}),\nabla_\alpha\CL(B^{i+1},\alpha^{i+1})\right)$ is valid for the above equation.\\
    From Algorithm \ref{alg:AGD_detail}, the sequence $\left(B^i\right)_{i\in\N}$ is defined via a step which can be seen as a proximal gradient step on $B\mapsto\CL(B,\alpha^i)+i_\CB(B)$. As a consequence, we can write that
    \begin{equation*}
        B^i-B^{i+1}-s_{B,i}\nabla_B\CL(B^i,\alpha^i)\in\partial i_\CB(B^{i+1}),
    \end{equation*}
    and we will choose $v^{i+1}=\frac{1}{s_{B,i}}\left(B^i-B^{i+1}\right)-\nabla_B\CL(B^i,\alpha^i)\in\partial i_\CB(B^{i+1})$ (due to the properties of $\partial i_\CB$ which is the normal cone onto $\CB$). It follows that
    \begin{equation}\label{eq:gi+1}
    \begin{aligned}
        \left\|g^{i+1}\right\|^2&=\left\|v^{i+1}+\nabla_B\CL(B^{i+1},\alpha^{i+1})\right\|^2+\left\|\nabla_\alpha\CL(B^{i+1},\alpha^{i+1})\right\|^2\\&=\left\|\frac{1}{s_{B,i}}\left(B^i-B^{i+1}\right)+\nabla_B\CL(B^{i+1},\alpha^{i+1})-\nabla_B\CL(B^i,\alpha^i)\right\|^2+\left\|\nabla_\alpha\CL(B^{i+1},\alpha^{i+1})\right\|^2.\end{aligned}
    \end{equation}
    Elementary computations ensure that\small
    \begin{equation*}\begin{aligned}
        \left\|\frac{1}{s_{B,i}}\left(B^i-B^{i+1}\right)+\nabla_B\CL(B^{i+1},\alpha^{i+1})-\nabla_B\CL(B^i,\alpha^i)\right\|^2\leq~&\frac{2}{{s_{B,i}}}\left\|B^{i+1}-B^i\right\|^2\\&+2\left\|\nabla_B\CL(B^{i+1},\alpha^{i+1})-\nabla_B\CL(B^i,\alpha^i)\right\|^2.\end{aligned}
    \end{equation*}\normalsize
    By using similar arguments to that in the proof of Lemma \ref{lem:LB} namely boundedness of $\left(B^i\right)_{i\in\N}$ and $\left(\alpha^i\right)_{i\in\N}$, and continuity of the second order derivatives of $k$, we can show that $\hat\CL$ is jointly Lipschitz smooth in $\left(B,\alpha\right)$ on a compact containing $\left(B^i,\alpha^i\right)_{i\in\N}$. We can deduce that there exists $L>0$ such that for any $\left(B_1,\alpha_1\right)$ and $\left(B_2,\alpha_2\right)$:
    \begin{equation*}
        \left\|\nabla_B\CL(B_1,\alpha_1)-\nabla_B\CL(B_2,\alpha_2)\right\|^2+\left\|\nabla_\alpha\CL(B_1,\alpha_1)-\nabla_\alpha\CL(B_2,\alpha_2)\right\|^2\leq L\left( \left\|B_1-B_2\right\|^2+\left\|\alpha_1-\alpha_2\right\|^2\right).
    \end{equation*}
    We can then use the above inequality and the lower bound on $s_{B,i}$ from Lemma \ref{lem:LB} to write\small
    \begin{equation*}\begin{aligned}
        \left\|\frac{1}{s_{B,i}}\left(B^i-B^{i+1}\right)+\nabla_B\CL(B^{i+1},\alpha^{i+1})-\nabla_B\CL(B^i,\alpha^i)\right\|^2\leq~ &2\left(\max\left\{\frac{L_1}{2\rho(1-c)},s_{B,-1}^{-1}\right\}+L\right)\left\|B^{i+1}-B^i\right\|^2\\
        &+2L\left\|\alpha^{i+1}-\alpha^i\right\|^2.        
    \end{aligned}
    \end{equation*}\normalsize
    In addition, since $\alpha^{i+1}$ minimizes the function $\alpha\mapsto\hat\CL\left(B^{i+1},\alpha\right)$, we directly get that
    \begin{equation*}
        \left\|\nabla_\alpha\CL(B^{i+1},\alpha^{i+1})\right\|=0.
    \end{equation*}
    Therefore, we get that 
    \begin{equation*}
        \left\|g^{i+1}\right\|^2\leq 2\left(\max\left\{\frac{L_1}{2\rho(1-c)},s_{B,-1}^{-1}\right\}+L\right)\left\|B^{i+1}-B^i\right\|^2+2L\left\|\alpha^{i+1}-\alpha^i\right\|^2,
    \end{equation*}
    which implies \eqref{eq:H2_AGD}.\\
    4. This point is trivially satisfied as both $\left(B^i\right)_{i\in\N}$ and $\left(\alpha^i\right)_{i\in\N}$ are bounded.\\
    We have proved the convergence of Algorithm \ref{alg:MIGD_detail} towards a critical point of $\Psi$. We now demonstrate that \eqref{eq:grad_MIGD} holds after $N$ iterations if $\left(B^N,\alpha^N\right)$ is not a critical point. From the definition of the method, for any $i\in\left\{0,\ldots,N\right\}$,
    \begin{equation*}
        \begin{aligned}\left\|\nabla_B\hat\CL\left(B^i,\alpha^i\right)\right\|^2&<\frac{1}{cs_{B,i}}\left(\hat\CL\left(B^i,\alpha^i\right)-\hat\CL\left(B^{i+1},\alpha^i\right)\right)\\
        &<c^{-1}\max\left\{\frac{L_1}{2\rho(1-c)},s_{B,-1}^{-1}\right\}\left(\hat\CL\left(B^i,\alpha^i\right)-\hat\CL\left(B^{i+1},\alpha^i\right)\right).\end{aligned}
    \end{equation*}
    By summing this inequality on $i\in\left\{0,\ldots,N\right\}$, we get that
    \begin{equation*}
        \begin{aligned}\sum_{i=0}^N\left\|\nabla_B\hat\CL\left(B^i,\alpha^i\right)\right\|^2&<c^{-1}\max\left\{\frac{L_1}{2\rho(1-c)},s_{B,-1}^{-1}\right\}\sum_{i=0}^N\left(\hat\CL\left(B^i,\alpha^i\right)-\hat\CL\left(B^{i+1},\alpha^i\right)\right)\\
        &\leq c^{-1}\max\left\{\frac{L_1}{2\rho(1-c)},s_{B,-1}^{-1}\right\}\hat\CL\left(B^0,\alpha^0\right):=C.\end{aligned}
    \end{equation*}
    We can then deduce \eqref{eq:grad_MIGD}.
\end{proof}

\paragraph{\textbf{Alternating Gradient Descent.}} The complete version of the scheme can be found below.

\begin{algorithm}[H]
\caption{Alternating Gradient Descent}\label{alg:AGD_detail}
\begin{algorithmic}
    \STATE \textbf{Require} $B^0\in \CB$, $\alpha^0\in \R^{\tilde m}$, $s_{B,-1}>0$, $s_{\alpha,-1}>0$, $s_{max}>0$, $\rho\in(0,1)$, $\delta\in(0,1]$, $c>0$ and $n_\alpha\in\N^*$.
    \FOR{$i$ in $0,1,...$}
    \STATE $s_{B,i} = \min\left\{\frac{s_{B,i-1}}{\rho\delta},s_{max}\right\}$
    \REPEAT
    \STATE $s_{B,i} = \rho s_{B,i}$
    \STATE $B^{i+1}=\CP_\CB \left(B^{i}-s_{B,i}\nabla_B\hat\CL\left(B^{i},\alpha^{i}\right)\right)$
    \UNTIL $\hat\CL\left(B^{i+1},\alpha^{i}\right)-\hat\CL\left(B^{i},\alpha^{i}\right)<-cs_{B,i}\left\|\nabla_B\hat\CL\left(B^i,\alpha^i\right)\right\|^2$
    \STATE $\alpha^{i,0}=\alpha^i$
    \STATE $s_{\alpha,i,-1} = s_{\alpha,i-1}$
    \FOR{$j$ in $0,1,...,n_\alpha-1$}
    \STATE $s_{\alpha,i,j} = \min\left\{\frac{s_{\alpha,i,j-1}}{\rho\delta},s_{max}\right\}$
    \REPEAT
    \STATE $s_{\alpha,i,j} = \rho s_{\alpha,i,j}$
    \STATE $\alpha^{i,j+1}=\alpha^{i,j}-s_{\alpha,i,j}\nabla_\alpha\hat\CL\left(B^{i+1},\alpha^{i,j}\right)$
    \UNTIL $\hat\CL\left(B^{i+1},\alpha^{i,{j+1}}\right)-\hat\CL\left(B^{i},\alpha^{i,j}\right)<-cs_{\alpha,i,j}\left\|\nabla_\alpha\hat\CL\left(B^i,\alpha^{i,j}\right)\right\|^2$
    \ENDFOR
    \STATE $s_{\alpha,i} = s_{\alpha,i,n_\alpha-1}$
    \STATE $\alpha^{i+1} = \alpha^{i,n_\alpha-1}$
    \ENDFOR
    \RETURN $\left(B^{i+1}, \alpha^{i+1}\right)$
\end{algorithmic}
\end{algorithm}

Similarly to Algorithm \ref{alg:MIGD_detail}, Algorithm \ref{alg:AGD_detail} leverages a backtracking strategy to set both $\left(s_{B,i}\right)_{i\in\N}$ and $\left(s_{\alpha,i,j}\right)_{(i,j)\in\N\times\{0,\ldots,n_\alpha-1\}}$. In addition to Lemma \ref{lem:LB}, we introduce the following lemma guaranteeing that these sequences are well-defined.

\begin{lem}\label{lem:Lalpha}
    Let $k(\cdot,\cdot)$ be a continuous kernel and $\left(B^i\right)_{i\in\N}$ such that $B^i\in\CB$ for any $i\in\N$. Then, there exists $L_2>0$ such that,
    \begin{equation}\label{eq:L_2}
        \forall \alpha\in \R^{\tilde m},~\forall i\in\N,\quad \left\|\nabla^2_\alpha\hat\CL\left(B^i,\alpha\right)\right\|_{2}\leq L_2,
    \end{equation}
    implying that for any $i\in\N$, $\alpha\mapsto\hat\CL\left(B^i,\alpha\right)$ is $L_2$-smooth.\\More precisely, $\left\|\nabla^2_\alpha\hat\CL\left(B^i,\alpha\right)\right\|_{2}=\left\|\frac{1}{m}(\matK^{B^i}_{m\tilde m})^T\matK^{B^i}_{m\tilde m}+\lambda \matK^{B^i}_{\tilde m\tilde m}\right\|_2\leq L_2$. It follows that the linesearch procedure used to generate $\left(s_{\alpha,i,j}\right)_{i,j\in\N\times\{1,\ldots,n_\alpha\}}$ in Algorithm \ref{alg:AGD_detail} is well defined, and $s_{\alpha,i,j}\geq\min\left\{\frac{2\rho(1-c)}{L_2},s_{\alpha,-1}\right\}$ for any $i\in\N$ and $j\in \{0,\ldots,n_\alpha-1\}$.
\end{lem}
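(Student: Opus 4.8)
The plan is to exploit that, for fixed $B$, the map $\alpha\mapsto\hat\CL(B,\alpha)$ is \emph{quadratic}, so its Hessian in $\alpha$ is a constant matrix — in sharp contrast with the $B$-dependence, which is where the work in Lemma~\ref{lem:LB} went. From \eqref{equ:opt_pb}, $\hat\CL(B,\alpha)=\tfrac1m\|\matK^B_{m\tilde m}\alpha-\mathbf{y}\|^2+\lambda\,\alpha^\top\matK^B_{\tilde m\tilde m}\alpha$, so
\[
\nabla^2_\alpha\hat\CL(B,\alpha)=\tfrac{2}{m}(\matK^B_{m\tilde m})^\top\matK^B_{m\tilde m}+2\lambda\,\matK^B_{\tilde m\tilde m},
\]
which is independent of $\alpha$ and is, up to a universal constant, the matrix displayed in the statement. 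Hence the whole lemma reduces to a single uniform estimate: $\bigl\|\tfrac1m(\matK^B_{m\tilde m})^\top\matK^B_{m\tilde m}+\lambda\,\matK^B_{\tilde m\tilde m}\bigr\|_2\le L_2$ for every $B\in\CB$, and a fortiori for every $B^i$ since $B^i\in\CB$.

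For that estimate I would reuse the opening argument of the proof of Lemma~\ref{lem:LB}. The training points and the Nyström centres form a finite set, so there is $C<+\infty$ with $\|x_j\|\le C$ and $\|\tilde x_j\|\le C$ for all $j$; since $\|B\|_\infty\le1$ this forces $\|Bx_j\|\le C$ and $\|B\tilde x_j\|\le C$, so every argument fed to $k$ lies in the closed ball of radius $C$, which is compact. Continuity of $k$ then yields $\kappa_C:=\sup_{\|u\|\le C,\ \|v\|\le C}|k(u,v)|<+\infty$, whence every entry of $\matK^B_{m\tilde m}$ and of $\matK^B_{\tilde m\tilde m}$ is at most $\kappa_C$ in absolute value, uniformly in $B$. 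Passing to spectral norms via the Frobenius norm, $\|\matK^B_{m\tilde m}\|_2\le\sqrt{m\tilde m}\,\kappa_C$ and $\|\matK^B_{\tilde m\tilde m}\|_2\le\tilde m\,\kappa_C$, so $\|\nabla^2_\alpha\hat\CL(B,\alpha)\|_2\le \tfrac{2}{m}\cdot m\tilde m\,\kappa_C^2+2\lambda\tilde m\,\kappa_C=:L_2$, a constant depending only on $m,\tilde m,\lambda$ and $k$ — in particular independent of $\alpha$, of $i$, and of $B^i$. Since a $C^2$ function whose Hessian is bounded by $L_2$ in operator norm has $L_2$-Lipschitz gradient, this is exactly the asserted $L_2$-smoothness of $\alpha\mapsto\hat\CL(B^i,\alpha)$.

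It then remains to push this $L_2$-smoothness through the inner backtracking loop of Algorithm~\ref{alg:AGD_detail}, which I would argue verbatim as in the final paragraph of the proof of Lemma~\ref{lem:LB}. Writing $\phi:=\hat\CL(B^{i+1},\cdot)$, the descent lemma for the $L_2$-smooth $\phi$ gives at any non-critical $\alpha^{i,j}$ that $\phi\bigl(\alpha^{i,j}-s\nabla\phi(\alpha^{i,j})\bigr)-\phi(\alpha^{i,j})\le-s\bigl(1-\tfrac{L_2 s}{2}\bigr)\|\nabla\phi(\alpha^{i,j})\|^2$, so the Armijo test with decay parameter $c\in(0,1)$ holds as soon as $s\le 2(1-c)/L_2$; hence the \emph{repeat} loop terminates after finitely many contractions by $\rho$. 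Moreover a trial step is rejected only when it exceeds $2(1-c)/L_2$, so the accepted $s_{\alpha,i,j}$ is at least $2\rho(1-c)/L_2$ whenever any contraction occurred, and otherwise it is the dilated carry-over from the previous inner iterate; using $\delta\le1$ and inducting over $(i,j)$ down to the initialization $s_{\alpha,-1}$ yields $s_{\alpha,i,j}\ge\min\{2\rho(1-c)/L_2,\ s_{\alpha,-1}\}$.

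I do not foresee a genuine obstacle. The only two points requiring any care are making the kernel-matrix entry bound uniform over $\CB$ — which is exactly where compactness of the range of $(x,B)\mapsto Bx$ (equivalently, finiteness of the data set together with $\|B\|_\infty\le1$) enters — and the step-size bookkeeping, which is mechanically the same as the one already carried out for $(s_{B,i})$ in Lemma~\ref{lem:LB}; everything else is immediate from the quadratic structure in $\alpha$.
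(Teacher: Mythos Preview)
Your proposal is correct and follows essentially the same route as the paper: compute the constant-in-$\alpha$ Hessian, bound the kernel matrices uniformly over $\CB$ via continuity of $k$ and boundedness of $\{Bx_j:B\in\CB\}$, then invoke the descent lemma exactly as in Lemma~\ref{lem:LB} to control the backtracking. The only cosmetic difference is that the paper bounds $\|\matK^B_{m\tilde m}\|_2$ and $\|\matK^B_{\tilde m\tilde m}\|_2$ by observing they are submatrices of the full PSD kernel matrix $\matK^B$ (so their spectral norms are dominated by $\|\matK^B\|_2$), whereas you pass through the Frobenius norm; both yield an explicit $L_2$ and the distinction is immaterial.
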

\begin{proof}[Proof of Lemma \ref{lem:Lalpha}]
    Elementary computations show that for any $\alpha\in\R^{\tilde m}$ and $B\in\CB$,
    \begin{equation*}
        \nabla^2_\alpha\hat\CL\left(B,\alpha\right) = \frac{1}{m}(\matK^{B}_{m\tilde m})^T\matK^{B}_{m\tilde m}+\lambda \matK^{B}_{\tilde m\tilde m}
    \end{equation*}
    Notice that $\matK^{B}_{m\tilde m}$ and $\matK^{B}_{\tilde m\tilde m}$ are submatrices of $\matK^{B}$ defined as $\left(\matK^{B}\right)_{i,j} = k\left(Bx_i,Bx_j\right)$. Since $\matK^{B}$ is a positive semi-definite matrix, we have that $\|\matK^{B}_{m\tilde m}\|_2\leq\|\matK^{B}\|_2$ and $\|\matK^{B}_{\tilde m\tilde m}\|_2\leq\|\matK^{B}\|_2$.\\
    We can then use the continuity of $k$, the boundedness of $\left(x_i\right)_{i=1}^m$ and the inequality $\|B\|_\infty\leq1$ (see more details in the proof of Lemma \ref{lem:LB}) to show that there exists $C<+\infty$ such that $\|\matK^{B}\|_2\leq C$. This leads to \eqref{eq:L_2} with $L_2 = \frac{C^2}{m}+\lambda C$.\\
    Since $B^i\in\CB$ for any $i\in\N$, the Lipschitz continuity of $\alpha\mapsto\hat\CL\left(B^i,\alpha\right)$ directly ensures that the Armijo backtracking procedure in Algorithm \ref{alg:AGD_detail} is well defined. In particular, the Armijo condition is satisfied for any step size $s\leq\frac{2(1-c)}{L_2}$. Since the $s_{\alpha,i,j}$ is updated by multiplying it by $\rho$ until the condition is satisfied, we can deduce the desired lower bound.
\end{proof}
\br[Backtracking strategy on $\left(s_{\alpha,i,j}\right)_{(i,j)\in \N\times\left\{0,\ldots,n_\alpha-1\right\}}$]
For any iteration $i\in\N$, the function $\alpha\mapsto \hat\CL\left(B^i,\alpha\right)$ is $L_i$-Lipschitz where $L_i =\left\|\frac{1}{m}(\matK^{B^i}_{m\tilde m})^T\matK^{B^i}_{m\tilde m}+\lambda \matK^{B^i}_{\tilde m\tilde m}\right\|_2$ can be computed directly. It is therefore possible to replace the linesearch procedure for setting $\left(s_{\alpha,i,j}\right)_{(i,j)\in \N\times\left\{0,\ldots,n_\alpha-1\right\}}$ by the rule $s_{\alpha,i,j} = 1/L_i$ in Algorithm \ref{alg:AGD_detail} (the convergence guarantees would be the same). It is worth noticing that this strategy involves computing the largest eigenvalue of $\frac{1}{m}(\matK^{B^i}_{m\tilde m})^T\matK^{B^i}_{m\tilde m}+\lambda \matK^{B^i}_{\tilde m\tilde m}$ which can be costly depending on the parameter $\tilde m$.
\er

By applying the same strategy as that of the proof of Theorem \ref{thm:cvg_MIGD}, we demonstrate similar convergence properties of Algorithm \ref{alg:AGD_detail}. 

\begin{thm}\label{thm:cvg_AGD}
    Let $k(\cdot,\cdot)$ be an analytic kernel. Let $\left(B^i\right)_{i\in\N}$ and $\left(\alpha^i\right)_{i\in\N}$ be the sequences generated by Algorithm \ref{alg:AGD_detail} and suppose that $\left(\alpha^i\right)_{i\in\N}$ is bounded. Then, \textbf{the sequence $\left(B^i,\alpha^i\right)_{i\in\N}$ converges to a critical point} of $\Psi:B,\alpha\mapsto\CL(B,\alpha)+i_\CB(B)$ as $i$ goes to infinity. In addition, the sequences $\left(B^i\right)_{i\in\N}$ and $\left(\alpha^i\right)_{i\in\N}$ have finite length, i.e.
    \begin{equation*}
        \sum_{i=0}^{+\infty}\|B^{i+1}-B^i\|<+\infty,\quad\sum_{i=0}^{+\infty}\|\alpha^{i+1}-\alpha^i\|<+\infty,
    \end{equation*}
    and there exists $C>0$ such that after $N$ iterations, $\left(B^N,\alpha^N\right)$ is a critical point of $\Psi$ or 
    \begin{equation}\label{eq:grad_AGD}
        \min_{0\leq i\leq N}\left\|\nabla_B\hat\CL\left(B^i,\alpha^i\right)\right\|^2\leq\frac{C}{N}.
    \end{equation}
\end{thm}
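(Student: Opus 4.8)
The plan is to follow the proof of Theorem~\ref{thm:cvg_MIGD} almost verbatim, reducing everything to verifying the three hypotheses of \cite[Theorem~2.9]{attouch2013convergence} for $\Psi:(B,\alpha)\mapsto\CL(B,\alpha)+i_\CB(B)$, but now along the \emph{refined} sequence obtained by inserting the inner $\alpha$-iterates: between $(B^i,\alpha^i)$ and $(B^{i+1},\alpha^{i+1})$ one lists $(B^{i+1},\alpha^i)$, $(B^{i+1},\alpha^{i,1}),\dots,(B^{i+1},\alpha^{i,n_\alpha-1})$, $(B^{i+1},\alpha^{i,n_\alpha})=(B^{i+1},\alpha^{i+1})$, so that consecutive points differ in a single block. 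The preliminary bookkeeping is unchanged: the hypothesis gives $(\alpha^i)_{i\in\N}$ bounded and $(B^i)_{i\in\N}\subset\CB$ bounded, hence Lemma~\ref{lem:LB} supplies a uniform smoothness constant $L_1$ for $B\mapsto\hat\CL(B,\alpha^i)$, Lemma~\ref{lem:Lalpha} a uniform constant $L_2$ for $\alpha\mapsto\hat\CL(B^i,\alpha)$, and (as in the VarPro proof) $\hat\CL$ is jointly Lipschitz smooth on a compact set containing all iterates; in particular both Armijo loops terminate and the step sizes $s_{B,i},s_{\alpha,i,j}$ admit uniform positive lower bounds.

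The KL hypothesis is identical to Theorem~\ref{thm:cvg_MIGD}: analyticity of $k$ makes $\hat\CL$ analytic and $i_\CB$ is semialgebraic, so $\Psi$ has the KL property; existence of a convergent subsequence is immediate from boundedness of both sequences. For sufficient decrease I would argue micro-step by micro-step: the projected $B$-step together with firm nonexpansiveness of $\CP_\CB$ and its Armijo test gives $\hat\CL(B^{i+1},\alpha^i)-\hat\CL(B^i,\alpha^i)\le-\frac{c}{s_{max}}\|B^{i+1}-B^i\|^2$ exactly as before, while each inner gradient step satisfies $\|\alpha^{i,j+1}-\alpha^{i,j}\|=s_{\alpha,i,j}\|\nabla_\alpha\hat\CL(B^{i+1},\alpha^{i,j})\|$ and its Armijo test gives $\hat\CL(B^{i+1},\alpha^{i,j+1})-\hat\CL(B^{i+1},\alpha^{i,j})\le-\frac{c}{s_{max}}\|\alpha^{i,j+1}-\alpha^{i,j}\|^2$; hence (H1) holds along the refined sequence with $a=c/s_{max}$ (and $\Psi(B^i,\cdot)=\CL(B^i,\cdot)$ since $B^i\in\CB$). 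Summing the inner decreases and using Cauchy--Schwarz, $\|\alpha^{i+1}-\alpha^i\|^2\le n_\alpha\sum_{j=0}^{n_\alpha-1}\|\alpha^{i,j+1}-\alpha^{i,j}\|^2$, recovers the per-outer-iteration decrease needed for finite length of $(B^i,\alpha^i)_{i\in\N}$.

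The relative-error hypothesis is where the inexact $\alpha$-update requires a change. At a refined point $(B^{i+1},\alpha^{i,j+1})$ the $\alpha$-component of the needed subgradient is $\nabla_\alpha\hat\CL(B^{i+1},\alpha^{i,j+1})$, which I would bound by writing $\nabla_\alpha\hat\CL(B^{i+1},\alpha^{i,j})=\frac{1}{s_{\alpha,i,j}}(\alpha^{i,j}-\alpha^{i,j+1})$ from the update and then invoking $L_2$-smoothness (Lemma~\ref{lem:Lalpha}):
\[
\|\nabla_\alpha\hat\CL(B^{i+1},\alpha^{i,j+1})\|\le\Bigl(\tfrac{1}{s_{\alpha,\min}}+L_2\Bigr)\|\alpha^{i,j+1}-\alpha^{i,j}\|,
\]
the norm of the current micro-step. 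The $B$-component is handled verbatim as in Theorem~\ref{thm:cvg_MIGD} — reading the projected step as a proximal step yields $v^{i+1}=\frac{1}{s_{B,i}}(B^i-B^{i+1})-\nabla_B\hat\CL(B^i,\alpha^i)\in\partial i_\CB(B^{i+1})$, and joint Lipschitz smoothness controls $\|v^{i+1}+\nabla_B\hat\CL(B^{i+1},\alpha^{i,j+1})\|$ by $\|B^{i+1}-B^i\|$ plus $\|\alpha^{i,j+1}-\alpha^{i}\|$, the latter a sum of at most $n_\alpha$ consecutive micro-steps. Thus (H2) holds in the finite-window form $\|g^{k+1}\|\le b\sum_{\ell=k-n_\alpha}^{k+1}\|\zeta^\ell-\zeta^{\ell-1}\|$, which still drives the ABS convergence argument since a fixed-width window does not affect the telescoping/summability estimates. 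Convergence of the refined sequence to a critical point of $\Psi$ then transfers to $(B^i,\alpha^i)_{i\in\N}$, and the rate \eqref{eq:grad_AGD} follows exactly as in Theorem~\ref{thm:cvg_MIGD} by telescoping the $B$-step Armijo decreases, using the lower bound on $s_{B,i}$ and $\hat\CL\ge 0$.

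\textbf{Main obstacle.} The one genuinely new point is the relative-error condition for the inexact $\alpha$-block: because $\alpha^{i+1}$ is produced by finitely many gradient steps rather than exact minimization — and, unlike Theorem~\ref{thm:cvg_MIGD}, without any strong-convexity of $\hat\CL(B^{i+1},\cdot)$ — the residual $\nabla_\alpha\hat\CL(B^{i+1},\alpha^{i+1})$ is nonzero and must be controlled by iterate increments. The clean remedy is the refined-sequence / finite-window version of the Attouch--Bolte--Svaiter hypotheses, and verifying that the increments entering the $B$-component bound stay within a single outer iteration (so the window width is exactly $n_\alpha+1$) is the part that needs care.
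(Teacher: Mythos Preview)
Your proposal is correct and identifies exactly the right obstacle. The ingredients you list --- KL property from analyticity, Armijo decrease per block, Lemmas~\ref{lem:LB} and~\ref{lem:Lalpha} for uniform smoothness and step-size lower bounds, the proximal reading of the projected $B$-step, and the control of the residual $\nabla_\alpha\hat\CL(B^{i+1},\alpha^{i+1})$ via the last $\alpha$-micro-step --- are precisely those the paper uses.

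The organizational route differs, however. You insert the inner iterates to form a refined sequence and verify (H1)/(H2) micro-step by micro-step, invoking a finite-window relative-error condition. The paper instead works directly at the level of the outer iterates $(B^i,\alpha^i)$: for (H1) it sums the Armijo inequalities over $j$ and uses $\|\alpha^{i+1}-\alpha^i\|^2\le(n_\alpha-1)\sum_j\|\alpha^{i,j+1}-\alpha^{i,j}\|^2$ to obtain a per-outer-iteration decrease with constant $a=c/(s_{\max}(n_\alpha-1))$; for (H2) it bounds $\|\nabla_\alpha\hat\CL(B^{i+1},\alpha^{i+1})\|$ by $(L+1/s_{\alpha,i,n_\alpha-1})$ times the last micro-step norm and absorbs this into the outer-increment bound. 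Your packaging is more modular and sidesteps the somewhat delicate matching of the (H2) quantity (last micro-step) with the (H1) quantity (outer increment) that the paper handles in one stroke; the paper's version stays closer to the literal hypotheses of \cite[Theorem~2.9]{attouch2013convergence} and avoids appealing to the windowed variant. Either way, the rate~\eqref{eq:grad_AGD} follows identically by telescoping the $B$-step Armijo decreases.
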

\begin{proof}[Proof of Theorem \ref{thm:cvg_AGD}]
    Similarly to the proof of Theorem \ref{thm:cvg_MIGD}, we adapt \cite[Theorem~2.9]{attouch2013convergence} to our framework to show the desired convergence results. Therefore, we need to show the $4$ assertions enumerated in the aforementioned proof.\\
    1. The Kurdyka-\L{}ojasiewicz property of $\hat\CL$ is already shown in the proof of Theorem \ref{thm:cvg_MIGD}.\\
    2. As stated in the proof of Theorem \ref{thm:cvg_MIGD}, it is trivial that $B^i\in\CB$ for any $i\in\N$, and consequently for any $\alpha\in\R^{\tilde m}$, \begin{equation}\label{eq:psi_L2}\Psi\left(B^i,\alpha\right)=\CL\left(B^i,\alpha\right).
    \end{equation}
    In addition, similar computations allow to show that
    \begin{equation}
        \hat\CL\left(B^{i+1},\alpha^{i}\right)-\hat\CL\left(B^{i},\alpha^{i}\right)<-\frac{c}{s_{max}}\left\|B^{i+1}-B^i\right\|^2.
    \end{equation}
    Lemma \ref{lem:Lalpha} ensures that at each step $i\in\N$ and substep $j\in\{1,\ldots,n_\alpha\}$,
    \begin{equation}\label{eq:ineq1_B2}
        \hat\CL\left(B^{i+1},\alpha^{i,{j+1}}\right)-\hat\CL\left(B^{i},\alpha^{i,j}\right)<-cs_{\alpha,i,j}\left\|\nabla_\alpha\hat\CL\left(B^i,\alpha^{i,j}\right)\right\|^2,
    \end{equation}
    which directly implies that
    \begin{equation*}
        \hat\CL\left(B^{i+1},\alpha^{i,{j+1}}\right)-\hat\CL\left(B^{i},\alpha^{i,j}\right)<-\frac{c}{s_{\alpha,i,j}}\left\|\alpha^{i,j+1}-\alpha^{i,j}\right\|^2\leq -\frac{c}{s_{max}}\left\|\alpha^{i,j+1}-\alpha^{i,j}\right\|^2.
    \end{equation*}
    Since $\alpha^i=\alpha^{i,0}$ and $\alpha^{i+1}=\alpha^{i,n_\alpha-1}$, we get that
    \begin{equation*}
        \hat\CL\left(B^{i+1},\alpha^{i+1}\right)-\hat\CL\left(B^{i+1},\alpha^{i}\right)<-\frac{c}{s_{max}}\sum_{j=0}^{n_\alpha-2}\left\|\alpha^{i,j+1}-\alpha^{i,j}\right\|^2,
    \end{equation*}
    and since for any sequence $(x^i)_{i\in\N}$, $\left\|\sum_{i=1}^nx^i\right\|^2\leq n \sum_{i=1}^n\left\|x^i\right\|^2$,
    \begin{equation}\label{eq:ineq1_alpha}
        \hat\CL\left(B^{i+1},\alpha^{i+1}\right)-\hat\CL\left(B^{i+1},\alpha^{i}\right)<-\frac{c}{s_{max}(n_\alpha-1)}\left\|\alpha^{i+1}-\alpha^{i}\right\|^2.
    \end{equation}
    From \eqref{eq:ineq1_B2} and \eqref{eq:ineq1_alpha}, we can prove that
    \begin{equation*}
        \hat\CL\left(B^{i+1},\alpha^{i+1}\right)-\hat\CL\left(B^{i},\alpha^{i}\right)<-\frac{c}{s_{max}}\left(\left\|B^{i+1}-B^i\right\|^2-\frac{1}{n_\alpha-1}\left\|\alpha^{i+1}+\alpha^{i}\right\|^2\right),
    \end{equation*}
    which leads to the desired conclusion, taking $a=\frac{c}{s_{max}(n_\alpha-1)}$ and using \eqref{eq:psi_L2}.\\
    3. We aim at showing that for a well-chosen $b>0$, for any $i\in\N$, there exists $g^{i+1}\in\partial \Psi\left(B^{i+1},\alpha^{i+1}\right)$ (i.e. $g^{i+1}=\left(g^{i+1}_B,g^{i+1}_\alpha\right)$ where $g^{i+1}_B\in\partial_B\Psi\left(B^{i+1},\alpha^{i+1}\right)$ and $g^{i+1}_\alpha\in\partial_\alpha\Psi\left(B^{i+1},\alpha^{i+1}\right)$) such that 
    \begin{equation*}
        \left\|g^{i+1}\right\|^2\leq b\left(\left\|B^{i+1}-B
        ^i\right\|^2+\left\|\alpha^{i+1}-\alpha^i\right\|^2\right).
    \end{equation*}
    By taking $g^{i+1}=\left(v^{i+1}+\nabla_B\CL(B^{i+1},\alpha^{i+1}),\nabla_\alpha\CL(B^{i+1},\alpha^{i+1})\right)$ with $v^{i+1}=\frac{1}{s_{B,i}}\left(B^i-B^{i+1}\right)-\nabla_B\CL(B^i,\alpha^i)$, we can apply the same reasoning as that of the proof of Theorem \ref{thm:cvg_MIGD} to demonstrate that there exists $L>0$ such that:
    \begin{equation}\label{eq:ineq_g}\begin{aligned}
        \left\|g^{i+1}\right\|^2\leq&~2\left(\max\left\{\frac{L_1}{2\rho(1-c)},s_{B,-1}^{-1}\right\}+L\right)\left\|B^{i+1}-B^i\right\|^2+2L\left\|\alpha^{i+1}-\alpha^i\right\|^2\\&+\left\|\nabla_\alpha\CL(B^{i+1},\alpha^{i+1})\right\|^2.\end{aligned}
    \end{equation}
    By rewriting the second term of the above inequality,
    \begin{equation*}
        \left\|\nabla_\alpha\CL(B^{i+1},\alpha^{i+1})\right\|^2\leq\left\|\nabla_\alpha\CL(B^{i+1},\alpha^{i+1})-\nabla_\alpha\CL(B^{i+1},\alpha^{i,n_{\alpha-1}})\right\|^2+\left\|\nabla_\alpha\CL(B^{i+1},\alpha^{i,n_{\alpha-1}})\right\|^2.
    \end{equation*}
    From the joint Lipschitz smoothness of $\hat \CL$ and the definition of the sequence $\left(\alpha^i\right)_{i\in\N}$, we get that
    \begin{equation*}
        \begin{aligned}\left\|\nabla_\alpha\CL(B^{i+1},\alpha^{i+1})\right\|^2&\leq2\left(L+\frac{1}{s_{\alpha,i,n_\alpha-1}}\right)\left\|\alpha^{i+1}-\alpha^i\right\|^2\\
        &\leq2\left(L+\max\left\{\frac{L_2}{2\rho(1-c)},s_{\alpha,-1}^{-1}\right\}\right)\left\|\alpha^{i+1}-\alpha^i\right\|^2,\end{aligned}
    \end{equation*}
    where we use the lower bound on $s_{\alpha,i,j}$ from Lemma \ref{lem:Lalpha}.
    Combining the above inequalities, we can conclude that
    \begin{equation*}
    \begin{aligned}
        \left\|g^{i+1}\right\|^2\leq &~2\left(L+\max\left\{\frac{L_1}{2\rho(1-c)},s_{B,-1}^{-1}\right\}\right)\left\|B^{i+1}-B^i\right\|^2\\&+2\left(2L+\max\left\{\frac{L_2}{2\rho(1-c)},s_{\alpha,-1}^{-1}\right\}\right)\left\|\alpha^{i+1}-\alpha^i\right\|^2
    \end{aligned}
    \end{equation*}
    which implies the desired inequality for $$b=\max\left\{2\left(L+\max\left\{\frac{L_1}{2\rho(1-c)},s_{B,-1}^{-1}\right\}\right),2\left(2L+\max\left\{\frac{L_2}{2\rho(1-c)},s_{\alpha,-1}^{-1}\right\}\right)\right\}.$$
    4. This point is trivially satisfied as both $\left(B^i\right)_{i\in\N}$ and $\left(\alpha^i\right)_{i\in\N}$ are bounded.\\
    We demonstrated above that the method converges to a critical point. Inequality (\ref{eq:grad_AGD}) can be obtained using the same computations as in the proof of Theorem \ref{thm:cvg_MIGD}.
\end{proof}

\br[On the boundedness of $\left(\alpha^i\right)_{i\in\N}$]
Theorem \ref{thm:cvg_AGD} relies on a boundedness assumption on the sequence $\left(\alpha^i\right)_{i\in\N}$. This can be enforced in different ways:
\begin{enumerate}
    \item by applying the same hypothesis as in Theorem \ref{thm:cvg_MIGD}: suppose that for any $i\in\N$, $\lambda_{min}\left(\matK^{B^i}_{\tilde m\tilde m}\right)\geq \sigma>0$. The boundedness of the sequence can be obtained directly as Algorithm \ref{alg:AGD_detail} is a descent method and consequently the term $\lambda(\alpha^i)^T\matK^{B^i}_{\tilde m \tilde m}\alpha^i$ can not grow indefinitely.
    \item by directly adding a constraint on $\alpha$ to the problem. The convergence analysis for Algorithm \ref{alg:AGD_detail} would remain the same.
\end{enumerate}
\er

\br[On $\delta$ and $s_{max}$]
    Algorithm \ref{alg:MIGD_detail} and Algorithm \ref{alg:AGD_detail} involve a non-monotone backtracking procedure for defining the learning rates. The non-monotonicity, which occurs when $\delta < 1$, allows for more aggressive step sizes that better adapt to the local geometry of the objective function. For technical reasons, the proofs require to set a maximum learning-rate $s_{max}>0$ which, in practice, is not necessary. Note that in the monotone case, i.e. $\delta = 1$, the sequences of learning-rates are directly bounded by the initial value.
\er

\section{Supplemental material on numerical experiments}\label{app:num_exp}

\subsection{Experiments setting}\label{app:setting_exp}

\paragraph{\textbf{Experimental setting.}} The experiments presented in Figure \ref{fig:losses_col}, Figure \ref{fig:path_col1}, Figure \ref{fig:path_col2} and Appendix \ref{app:2d_case} were performed in Python on a 2,4 GHz Intel Core i5 quad-core laptop with 8 Gb of RAM. The remaining experiments were performed on a server on \#99-Ubuntu SMP with 2 x AMD EPYC 7301 16-Core Processor and 256 Gb of RAM.\\

\paragraph{\textbf{Datasets.}} Two synthetic datasets are generated for the experiments presented in the paper. The \textbf{first dataset} is generated by sampling $x_i\sim\CU\left([-1,1]\right)$ and setting the output as follows:
\begin{equation}\label{eq:ds1}
    y_i=\underbrace{\sum_{j=1}^d\sin\left(\left(1+\frac{j}{d}\right)\pi \left(Bx_i\right)_j\right)}_{:=z_i}+\varepsilon_i,\quad\varepsilon_i\sim\CN\left(0,\sigma^2\right),
\end{equation}
where $\sigma^2$ depends on the variance of $\left(z_i\right)_{i\in\{1,\ldots,m\}}$ (i.e. $\sigma^2=\text{Var}(z)/100$). The matrix $B$ is sampled randomly in $\CB$ (for any $(i,j)\in\{1,\ldots,d\}\times\{1,\ldots,D\}$, $B_{i,j}\sim\CU([0,1])$).\\
The \textbf{second dataset} is generated taking $x_i\in\CU([-10,10])$ and
\begin{equation}\label{eq:ds2}
    y_i = \underbrace{\sum_{j=1}^d\sin\left(0.5\left(Bx_i\right)_j-j\right)+\frac{1}{2}\left(Bx_i\right)_{j+1}\cos\left(0.4\left(Bx_i\right)_{j+2}-j+1\right)}_{:=z_i}+\varepsilon_i,
\end{equation}
where $\varepsilon_i\sim\CN(0,\sigma^2)$ and $\sigma^2=\text{Var}(z)/100$. We sample $B$ randomly as done for the first dataset.

\paragraph{\textbf{Methodology.}} 
The convergence graphs of Figure \ref{fig:losses_col} were obtained by running VarPro and AGD on the first dataset introduced above, setting $D=300$ and $d=2$, and the true value of $B$ manually to $B_{i,j}=1$ for $(i,j)\in\{(1,1),(2,2)\}$, i.e. selecting only the first two components. We used $\tilde m =5$ Nystr\"om centers from the training set of size $m=300$.

The experiments presented at the top of Figure \ref{fig:varyingd_combined} were performed using VarPro and AGD on $5$ Nyström centers for $\lambda=10^{-7}$ and adjusting the parameter $\gamma$ to the initial matrix $B^0$ (sampled randomly in $\CB$):
\begin{equation*}\gamma=\frac{1}{2\tilde \mu^2},\end{equation*}
where $\tilde \mu = \text{median}\{\|B^0(x_i-x_j)\|,i\neq j\}$. We then solve HKRR without the Nystr\"om approximation setting $B$ as the approximation given by the method, and setting the parameters $\gamma$ and $\lambda$ with cross-validation (on a validation set of size $m=600$). The resulting $B$ and $\alpha$ are used to compute the R2 score on a test set of size $8m=4800$. For the graph on the left, we consider the first dataset with $D=50$ and $d_*=3$, while the left one is obtained based on the second dataset with $D=50$ and $d_*=3$. The methods are stopped after $60$ seconds of computations and run $5$ times per set of parameters. 

The bottom graph in Figure \ref{fig:varyingd_combined} was obtained by applying the same process, running AGD and VarPro on $\tilde m = 25$ Nystr\"om centers and setting $\lambda=10^{-8}$. For each dataset size $m$ in $\{1000,4000,8000,12000,16000\}$, we performed each algorithm $10$ times, stopping after $80$ seconds.

\subsection{A 2D example for alternating minimization}\label{app:2d_case}

Recall that we consider the function $f:x,y\mapsto\left(x-y^2\right)^2+\cos(\pi y)+(1-y)^2+1$. Despite its simplicity, this function shares several similarities with $\hat\CL:B,\alpha\mapsto\frac{1}{m} \left\|\matK^B_{m\tilde m}\alpha-\mathbf{y}\right\|^2 + \lambda \alpha^T\matK_{\tilde m\tilde m}^B\alpha$:
\begin{itemize}
    \item it is strongly convex w.r.t. its first variable $x$ and minimizing $x\mapsto f(x,y)$ can be done directly ($x^*=y^2$).
    \item it is non convex in its second variable, with potentially local minimizers.
\end{itemize}
Because of this structural similarity to the HKRR objective function, we use Variable Projection (VarPro) and Alternating Gradient Descent (AGD) to minimize $f$. Note that $f$ has a unique global minimum in $\left(x^*,y^*\right)=(1,1)$.

Figure \ref{fig:path_y2} shows the behavior of both methods applied to $f$. For the chosen initialization point, it highlights the advantage of taking in account the geometry of $f$ in both $x$ and $y$ since VarPro provides a local minimizer while AGD goes to a global one. This phenomenon occurs not only for cherry-picked initialization points as shown in Figure \ref{fig:map_y2}: the attraction basin of the global minimizer of $f$ is significantly larger for AGD on this function.

\begin{figure}[ht]


\centering
\includegraphics[width=0.35\textwidth]{2dpath_y2.png}~
\includegraphics[width=0.25\textwidth]{3dpath_y2.png}
\includegraphics[width=0.35\textwidth]{loss_y2.png}
\caption{Left and center: Trajectories of the iterates of VarPro (in red) and AGD (in blue) for $f:x,y\mapsto\left(x-y^2\right)^2+\cos(\pi y)+(1-y)^2+1$ (taking $(x_0,y_0)=(-1.5,-1.5)$). Right: Value of the loss function w.r.t. the number of iterations.}
\label{fig:path_y2}
\end{figure}

\begin{figure}[ht]
\centering
\includegraphics[width=0.4\textwidth]{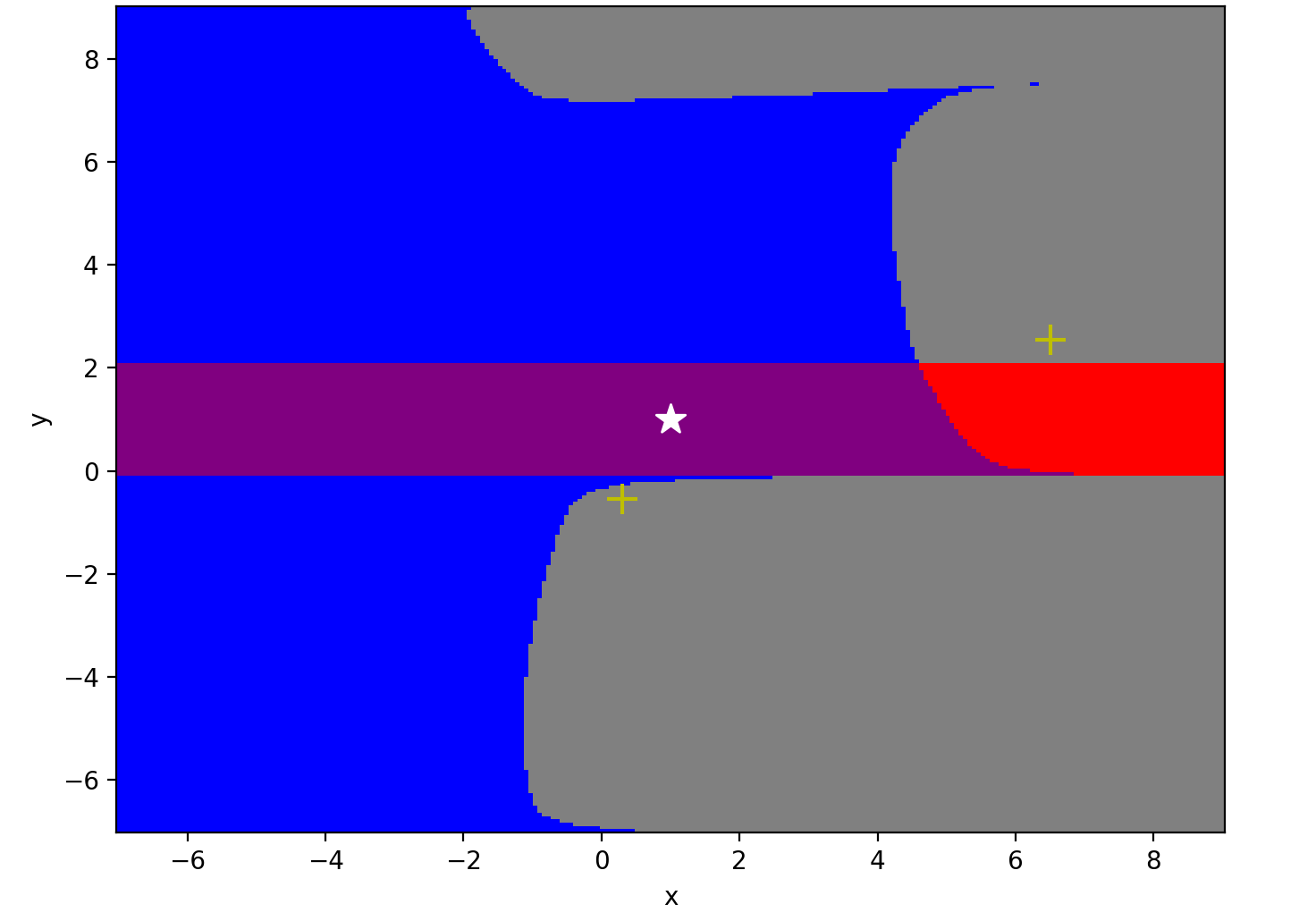}
\caption{Convergence map of VarPro and AGD for minimizing $f:x,y\mapsto\left(x-y^2\right)^2+\cos(\pi y)+(1-y)^2+1$. Purple = both methods converge to the global minimum from the corresponding initialization point; Red =  only VarPro converges to the global minimum; Blue = only AGD converges to the global minimum; Gray = no method converges to the global minimum. The white star is the global minimizer of $f$ and the yellow '+' crosses are local minimizers.}
\label{fig:map_y2}
\end{figure}

We can observe a similar behavior on the function $f:x,y\mapsto\left(x-\sigma(y)\right)^2+\cos(\pi y)+(1-y)^2+1$ where $\sigma:y\mapsto \frac{1}{1+e^{-y}}$ is the sigmoid function as illustrated in Figure \ref{fig:path_sigmoid} and \ref{fig:map_sigmoid}.

\begin{figure}[H]
\centering
\includegraphics[width=0.35\textwidth]{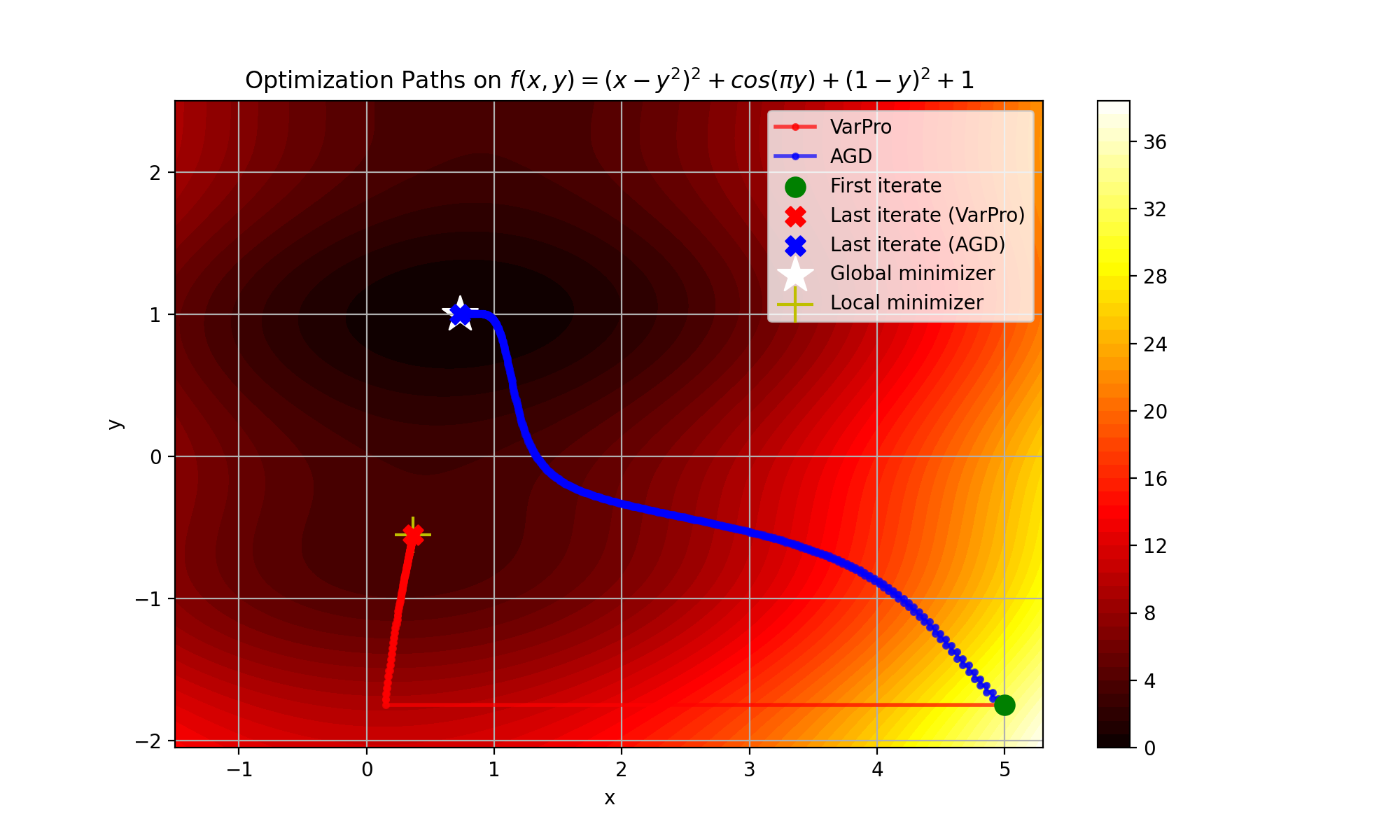}~
\includegraphics[width=0.25\textwidth]{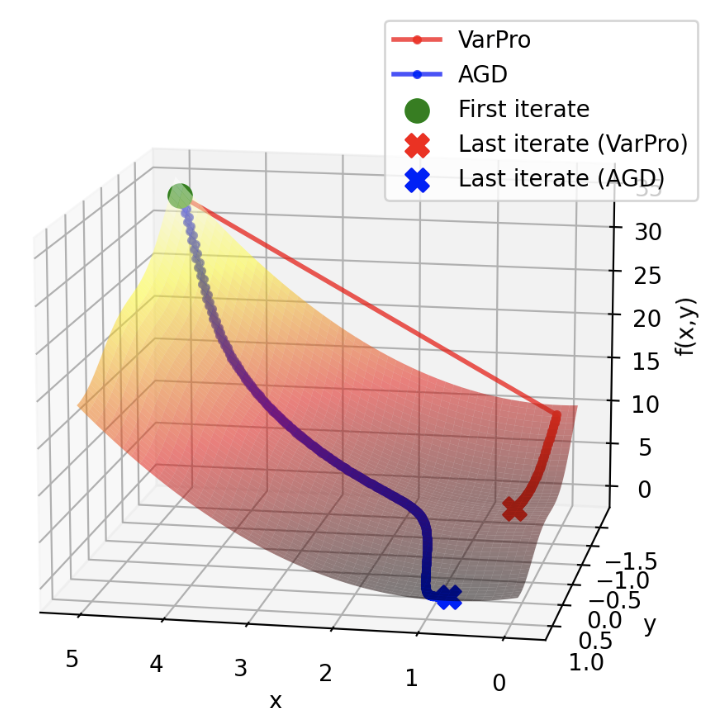}
\includegraphics[width=0.35\textwidth]{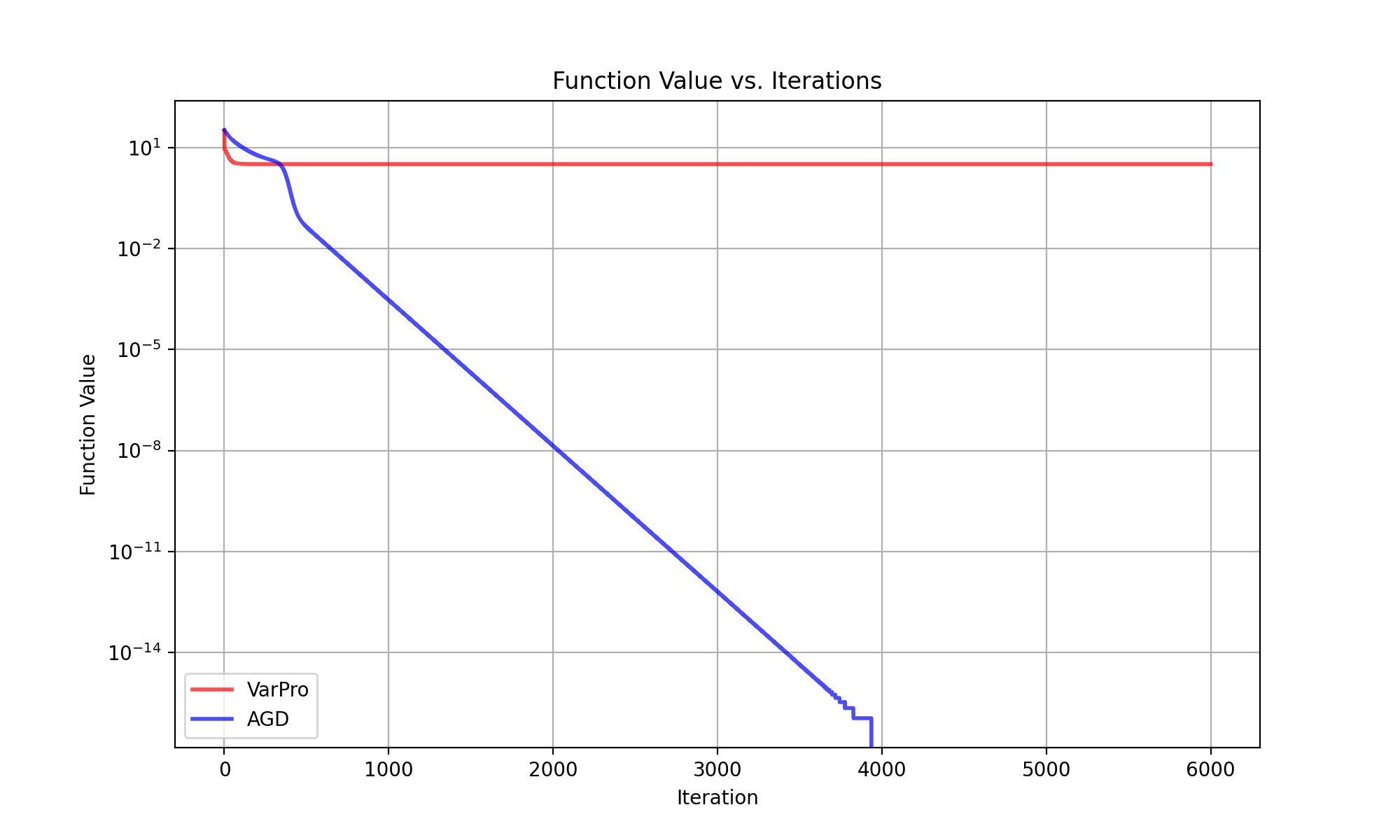}
\caption{Left and center: Trajectories of the iterates of VarPro (in red) and AGD (in blue) for $f:x,y\mapsto\left(x-\sigma(y)\right)^2+\cos(\pi y)+(1-y)^2+1$ (taking $(x_0,y_0)=(5,-1.75)$). Right: Value of the loss function w.r.t. the number of iterations.}
\label{fig:path_sigmoid}
\end{figure}

\begin{figure}[H]
\centering
\includegraphics[width=0.4\textwidth]{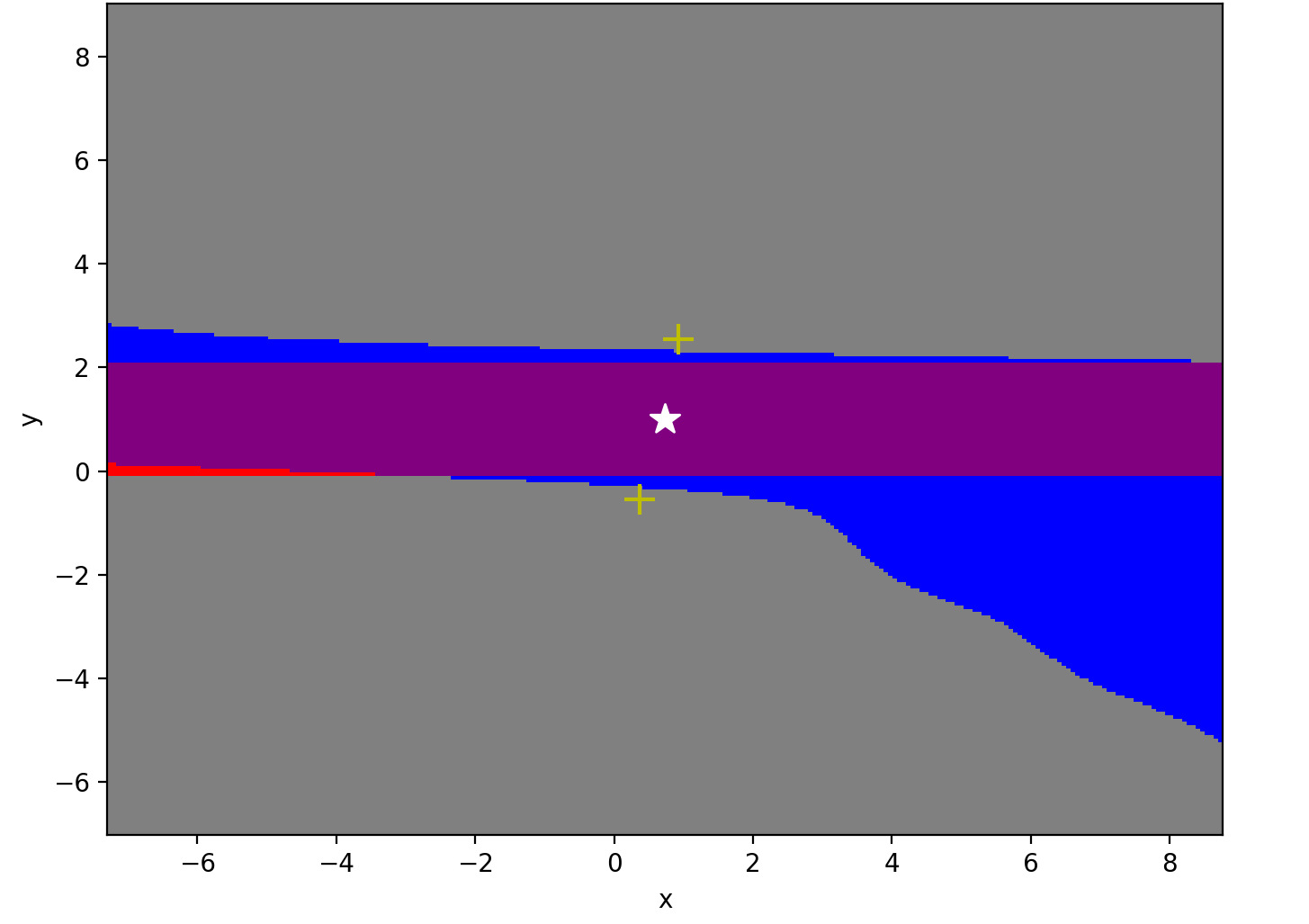}
\caption{Convergence map of VarPro and AGD for minimizing $f:x,y\mapsto\left(x-\sigma(y)\right)^2+\cos(\pi y)+(1-y)^2$. Purple = both methods converge to the global minimum from the corresponding initialization point; Red =  only VarPro converges to the global minimum; Blue = only AGD converges to the global minimum; Gray = no method converges to the global minimum. The white star is the global minimizer of $f$ and the yellow '+' crosses are local minimizers.}
\label{fig:map_sigmoid}
\end{figure}

\end{document}